\documentclass{article} 
\usepackage{iclr2027_conference,times}
\usepackage{titletoc} 
\usepackage{wrapfig}

\usepackage{amsmath,amsfonts,bm}

\def\eqref#1{equation~\ref{#1}}

\def\1{\bm{1}}

\DeclareMathAlphabet{\mathsfit}{\encodingdefault}{\sfdefault}{m}{sl}
\SetMathAlphabet{\mathsfit}{bold}{\encodingdefault}{\sfdefault}{bx}{n}

\usepackage{hyperref}
\usepackage{url}

\usepackage{booktabs}
\usepackage{graphicx}
\usepackage{amssymb,amsthm,mathrsfs}
\usepackage{xcolor}
\usepackage{enumitem}
\usepackage{placeins}
\usepackage{capt-of}

\newtheorem{lemma}{Lemma}
\newtheorem{proposition}{Proposition}

\theoremstyle{definition}

\newcommand{\revEqref}[1]{(\ref{#1})}

\title{ASPIRE: Saddle-Point Discovery through Set Prediction and Physical Refinement}

\author{Yucheng Zhao$^{1}$, Quanyou Zhang$^{1}$, Shaoxiang Qin$^{1,2}$,
Haixuan Xu$^{1}$, Xiongye Xiao$^{1,*}$\\
$^{1}$University of Tennessee, Knoxville\quad 
$^{2}$McGill University
}

\iclrfinalcopy 
\begin{document}

\maketitle\begingroup\renewcommand{\thefootnote}{\fnsymbol{footnote}}\footnotetext[1]{Corresponding author: xxiao9@utk.edu}\endgroup
\fancyhead{}
\fancyhead[C]{Preprint}
\begin{abstract}

Predicting thermally activated diffusion and defect evolution with event-driven models requires identifying atomic rearrangement mechanisms and their activation barriers. Discovering the associated saddle points is a major computational bottleneck: multiple rearrangements may originate from one metastable state, while costly local searches can fail or repeatedly converge to the same saddle. To address this challenge, we introduce \emph{ASPIRE} (\textbf{A}tomistic \textbf{S}addle-\textbf{P}oint \textbf{I}nference with \textbf{R}efinement for \textbf{E}vents), a framework that predicts a set of saddle candidates from a single initial atomic environment and refines them through Dimer searches on the original interatomic potential. The framework’s equivariant set predictor, Ev-Quiformer, integrates (i) geometry-conditioned scalar/vector event slots for generating multiple saddle-point proposals and (ii) a decoder that maps each slot to a full atomic displacement field by combining atom, slot, and anchor-relative vectors with invariant coefficients. We also contribute two datasets: (i) \textsc{BccFe-4vacAV-4000}, comprising 4,000 four-vacancy body-centered cubic iron configurations and 65,450 reference events grouped by initial state for set supervision and post-refinement evaluation; and (ii) \textsc{BccFe-1to4Vac}, comprising 5,372 configurations with one to four vacancies each. Theoretically, we establish conditions for proposal equivariance. Experimentally, ASPIRE achieves 77.20\% reference-event coverage on this benchmark, compared with 75.73\% for a conventional Dimer baseline, while requiring approximately half as many Dimer force evaluations. In a timing evaluation on 50 configurations, ASPIRE reduces wall time per configuration from 478.8 s to 176.3 s under the stated hardware settings. The dataset is available at \href{https://huggingface.co/datasets/cheesewawa98/BccFe4VacAV-4000}{https://huggingface.co/datasets/cheesewawa98/BccFe4VacAV-4000}.

\end{abstract}


\section{Introduction}
\label{sec:revised-intro}

\begin{figure}[t]
\centering
\includegraphics[width=\linewidth]{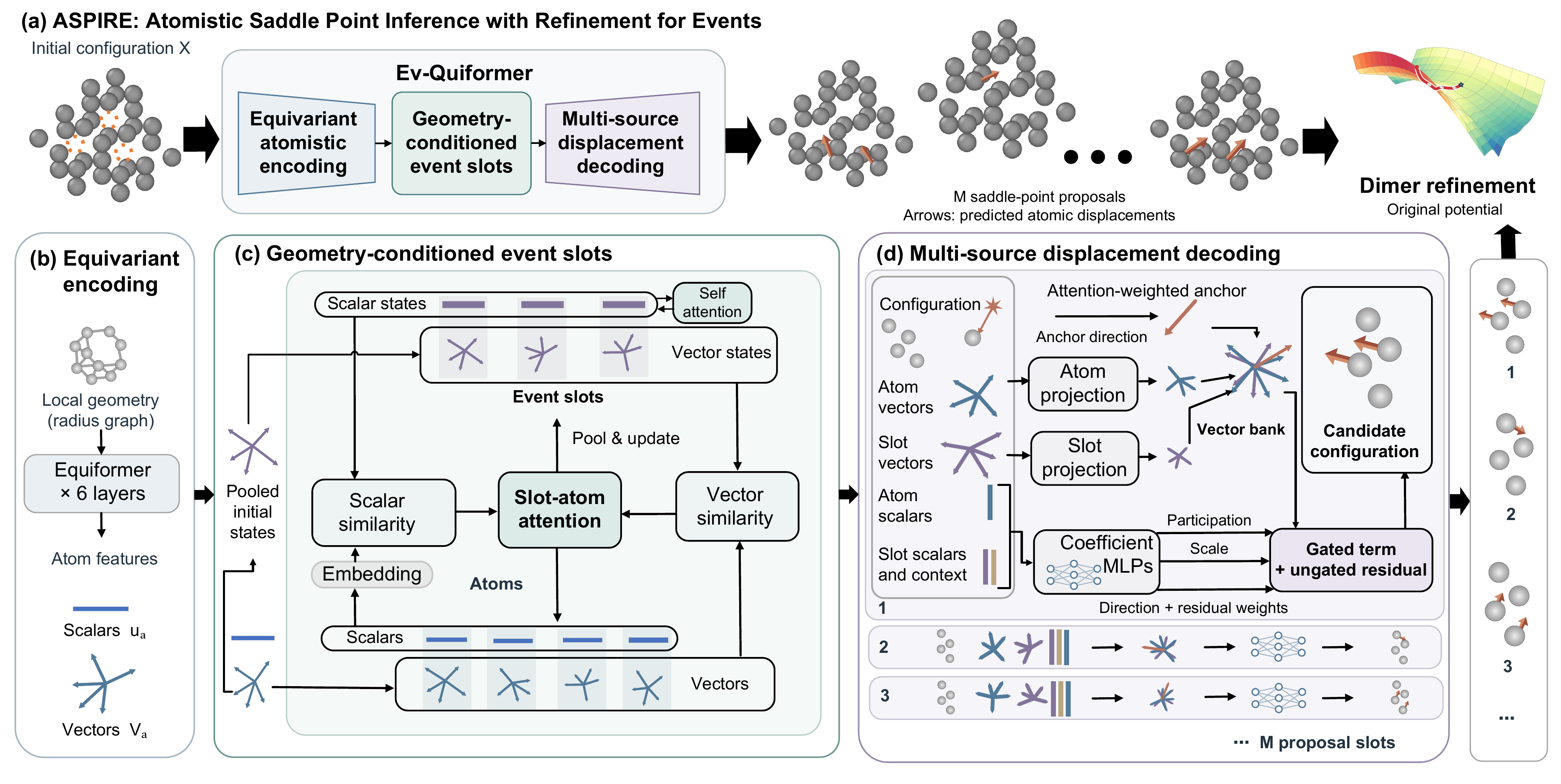}
\caption{\textbf{ASPIRE: Atomistic Saddle-Point Inference with Refinement for Events.}
\textbf{(a) Overall workflow.} Ev-Quiformer generates $M$ saddle-point proposals from a single initial configuration, followed by physical refinement using Dimer searches on the original interatomic potential. Panels~(b--d) detail its three neural modules.
\textbf{(b) Equivariant encoding.} A six-layer Equiformer extracts invariant scalar and equivariant vector features for each atom.
\textbf{(c) Geometry-conditioned event slots.} Scalar self-attention and repeated slot--atom attention update slots initialized from learned scalar seeds and pooled atom vectors.
\textbf{(d) Multi-source displacement decoding.} Invariant coefficients combine atom, slot, and anchor-relative vectors into a full atomic displacement field through a gated term and an ungated residual. Adding this field to the initial configuration yields a candidate for subsequent physical refinement.}
\label{fig:saddleset-overview}
\vspace{-5mm}
\end{figure}

{


Nuclear fission reactors and future fusion systems require structural materials that retain their mechanical integrity and dimensional stability under irradiation and elevated temperatures~\citep{du2023superior}. In iron-based alloys, irradiation-induced clusters and dislocation loops contribute to hardening~\citep{shimodaira2018contribution}, while the growth of cavities causes swelling~\citep{lin2021temperature}. Understanding these changes requires resolving the individual and collective atomic rearrangements that drive defect migration, recombination, and interactions. For example, two interstitial loops in irradiated iron can interact to form a new loop structure, a mechanism revealed by atomistic simulation~\citep{PhysRevLett.110.265503}. Vacancy diffusion also enables dislocation climb during creep, with migration barriers influenced by the local dislocation environment~\citep{kabir2010predicting}. 


Molecular dynamics (MD) follows atomic motion directly, but sampling rare thermally activated events requires resolving fast atomic vibrations over long waiting times. Event-based methods reach longer time scales by identifying the possible rearrangements and using their rates to select events and advance time \citep{voter2007introduction}. On-the-fly methods, including adaptive KMC \citep{henkelman2001long}, SEAKMC \citep{xu2012self,XU2015135}, and k-ART \citep{beland2011kinetic}, search for these events as the simulation proceeds. In event-based simulations, saddle discovery identifies transition mechanisms and activation barriers used to estimate event rates. Saddle discovery in this setting has several challenges: (i) an initial configuration generally admits multiple possible events whose number and product configurations are unknown in advance; (ii) the recovered event catalogue affects event selection and estimated waiting times; (iii) and new configurations encountered along a trajectory may require renewed searches, making event discovery computationally costly~\citep{HAYAKAWA2021110785,LIANG2023112354}.

 
}


Existing learning-based saddle (transition-state) discovery methods differ in their required inputs and target applications. React-OT predicts transition states from reactant and product
configurations \citep{duan2025optimal}, and OA-ReactDiff is primarily
evaluated in the same double-ended setting \citep{duan2023accurate}.
Conditioning on both endpoints does not directly address on-the-fly saddle discovery when
products are unknown. ReactionAtlas instead proposes transition states
from reactants and uses kinetic modeling to expand reaction networks
\citep{gugler2026reactionatlas}. 

Our focus is efficient saddle-point prediction for multi-vacancy configurations in body-centered cubic (bcc) iron (Fe), a model system for studying defect evolution in iron-based materials. We use a fixed embedded-atom method (EAM) potential for both reference-event generation and candidate refinement. We aim to identify multiple distinct events from a single initial configuration, without supplied product states, while reducing physical search cost. A propose-refine paradigm uses learned models to generate candidates for numerical refinement, aiming to shorten the search while retaining the underlying physical model. Variants of this paradigm have been used in a range of scientific applications, including electronic-structure calculations
\citep{schutt2019unifying}, protein structure prediction
\citep{jumper2021highly}, and materials discovery
\citep{merchant2023scaling}.


\textbf{Our contributions.} We formulate saddle discovery in interacting defect systems as a set-learning problem, where multiple candidate rearrangements are jointly predicted from a single initial configuration without supplied product states. We introduce \emph{ASPIRE} (Atomistic Saddle-Point Inference with Refinement for Events; Figure~\ref{fig:saddleset-overview}), which combines geometric prediction with physical refinement, and establish dedicated datasets and a benchmark for evaluating distinct-event recovery and physical search cost. Our contributions are threefold:

\noindent\textbf{(i) Problem formulation and datasets.} We make multiple rearrangements from the same initial state a joint learning target, without requiring product configurations. Our bcc Fe benchmark groups 65,450 reference events by 4,000 four-vacancy configurations, enabling set supervision and evaluation of distinct event recovery after refinement.

\noindent\textbf{(ii) Framework design and theory.}
We propose ASPIRE, combining learned saddle proposals with physical
refinement. Its neural predictor, Ev-Quiformer, integrates
geometry-conditioned scalar/vector event slots and a three-source
decoder to generate full atomic displacement fields, supervised by
sequential matching. We establish conditions for proposal equivariance
and relate geometric error to reference-event recovery under
local-refinement assumptions.

\noindent\textbf{(iii) Empirical evaluation.}
We compare ASPIRE with conventional Dimer and event-reuse baselines,
alongside ablation studies. On 200 four-vacancy test configurations,
ASPIRE achieves 77.20\% reference-event coverage versus Dimer's 75.73\%,
with roughly half the recorded search-force calls under the respective
search protocols.


\section{Related Work}

\paragraph{Rare-event simulation, saddle search, and reuse.}
Kinetic Monte Carlo (KMC) simulates transitions between metastable states
by sampling events and waiting times  from their rates (see Appendix~\ref{app:rev-kmc-background}) \citep{voter2007introduction}.
On-the-fly KMC discovers events through saddle searches during simulation,
rather than requiring a predefined event catalogue. SEAKMC localizes
these searches to active volumes around defects \citep{xu2012self,XU2015135}. The Dimer method \citep{henkelman1999dimer} finds first-order saddle points without a supplied
product configuration (see Appendix~\ref{app:rev-dimer-background}). The kinetic database (KDB) matches local atomic geometries to stored
mechanisms and supplies saddle-search initializations \citep{terrell2012database}.
k-ART combines on-the-fly discovery with an event catalogue indexed by
local topology for event reuse
\citep{beland2011kinetic}. 


\paragraph{Equivariant atomistic models.}
Tensor Field Networks, SE(3)-Transformers, EGNN, and Equiformer encode
geometric transformation rules
\citep{thomas2018tensor,fuchs2020se,satorras2021n,liao2022equiformer}.
NequIP and MACE learn interatomic energies and forces
\citep{batzner20223,batatia2022mace}.

\paragraph{Set prediction.}
Set Transformer uses learnable seeds for attention-based pooling and
self-attention among pooled outputs \citep{lee2019set}.
DETR combines learned queries, query self-attention, cross-attention, and
bipartite matching for unordered object prediction \citep{carion2020end}.
Slot Attention iteratively refines slots
\citep{locatello2020object}.
Geometric query designs include DAB-DETR's dynamic anchor boxes
\citep{liu2022dab} and Invariant Slot Attention's attention-derived
slot-centric reference frames and relative-coordinate decoding
\citep{biza2023invariant}.
For one-to-many supervision, Group DETR performs matching within query
groups and MS-DETR supervises multiple
queries in the primary decoder
\citep{chen2023group,10655699}. We adapt these set-prediction designs with
equivariant atomistic representations to build Ev-Quiformer for jointly predicting an unordered set
of candidate saddle configurations from a single initial configuration.

\section{Problem Formulation}
\label{sec:revised-problem}
Given an initial metastable configuration, our goal is to predict a set of saddle candidates for subsequent physical refinement, without requiring product states. Let $X\in\mathbb R^{N\times3}$ be an initial configuration at a
\emph{metastable state}, a nondegenerate local minimum of
the fixed interatomic potential
$V:\mathbb R^{N\times3}\to\mathbb R$, with
$\nabla V(X)=0$ and $\nabla^2V(X)\succ0$. Here $\nabla V$ and $\nabla^2V$ are taken in the fixed
free-coordinate space, with the surrounding environment and
constraints already imposed
(Appendix~\ref{app:rev-free-coordinates}).
We consider the \emph{open saddle configuration set prediction problem},
in which the number, identities, and outcomes of accessible events
are unknown a priori.

A predictor outputs an unordered collection of displacement fields, denoted by
\begin{equation}
 F_{\theta}(X)=\{\widehat D_m\}_{m=1}^{M},
 \qquad
 \widehat D_m\in\mathbb R^{N\times3}.
\end{equation}
The corresponding proposal configurations are
$\{\widehat S_m\}_{m=1}^{M}
=\{X+\widehat D_m\}_{m=1}^{M}$.
After refinement, the set of retained saddle configurations is defined as
\begin{equation}
 \{S_k\}_{k=1}^{K}
 =
 \{\mathcal R_X(\widehat S_m)\}_{m=1}^{M}
 \setminus\{\bot\},
 \qquad K\le M,
 \label{eq:rev-refined-set}
\end{equation}
where each $S_k$ is a configuration at a \emph{transition state}
satisfying:
(i) $S_k$ is a first-order saddle on $V$, that is,
$\nabla V(S_k)=0$ and
$\lambda_1\!\bigl(\nabla^2V(S_k)\bigr)
<0<
\lambda_2\!\bigl(\nabla^2V(S_k)\bigr)$,
where $\lambda_1\le\lambda_2\le\cdots$ are the Hessian eigenvalues
and the unit eigenvector $v_1(S_k)$ associated with $\lambda_1$
is its unstable mode;
(ii) relaxing from $S_k\pm\epsilon v_1(S_k)$ reaches $X$
and another distinct product configuration $X_k^+$ at a
metastable local minimum.
The map
$\mathcal R_X:\mathbb R^{N\times3}
\to\mathbb R^{N\times3}\cup\{\bot\}$
is an idealized search-and-validation map: it refines each proposed
configuration on $V$ and returns $S_k$ only when the target saddle
and connectivity conditions above hold, or $\bot$ otherwise.

We define the resulting event set as
\begin{equation}
 \mathcal E_X
 =\{e_k\}_{k=1}^{K}
 =\bigl\{(X,S_k,X_k^+,\Delta E_k)\bigr\}_{k=1}^{K},
 \qquad
 \Delta E_k=V(S_k)-V(X)>0.
 \label{eq:rev-event}
\end{equation}
The resulting event set is determined by $X$ and the validated
saddle configurations. The objective is to recover as many
distinct events as possible at low search cost, without using
any product configuration $X_k^+$ as input. Throughout this paper, we use Dimer-based search and validation
as the numerical instantiation of this framework
(see Appendix~\ref{app:rev-dimer-background}).

\section{Methods}
\label{sec:revised-method}

\paragraph{Intuition} Several atomic rearrangements may originate from the same metastable
configuration, each associated with a saddle. We therefore predict a set of
approximate saddle geometries and use them to initialize Dimer refinement on
the original interatomic potential. Learning proposes alternative rearrangements,
while force-based refinement seeks nearby saddle points. Proposition~\ref{prop:rev-ideal-radius} establishes local convergence of the
ideal minimum-mode iteration from a sufficiently close initialization~\citep{luo2022convergence}.

Based on this intuition, we develop \emph{ASPIRE}
(Figure~\ref{fig:saddleset-overview}), integrating set prediction with physical
refinement. We first describe its neural predictor, \emph{Ev-Quiformer},
through equivariant environment encoding (Section~\ref{sec:rev-encoder}),
geometry-conditioned event slots (Section~\ref{sec:rev-slots}), and
three-source displacement decoding (Section~\ref{sec:rev-decoder}). We then
present the set-matching training objective (Section~\ref{sec:rev-loss}) and
the optional confidence readout for selecting proposals for Dimer refinement
(Section~\ref{sec:rev-judge}). The conditions and proof for proposal equivariance
are given in Appendix~\ref{app:rev-equivariance}.


\subsection{Equivariant attention environment encoding}
\label{sec:rev-encoder}
The encoder reads the initial configuration $X$. A directed radius graph connects atoms $a,b$ within a maximum radius distance, with a limit to the number of neighbors per atom and edge vectors $X_a-X_b$, where $X_a\in\mathbb R^{3}$ is the $a$-th row of $X$. Six Equiformer blocks \citep{liao2022equiformer} attend over these neighbors and return, for every atom, an invariant scalar feature $u_a\in\mathbb R^{128}$ and an equivariant vector feature $V_a\in\mathbb R^{64\times3}$ that rotates with the input. No product configurations $X_k^+$ are supplied. Edge features and layer sizes are given in Appendix~\ref{app:rev-encoder}.

\subsection{Geometry-conditioned event slots}
\label{sec:rev-slots}

\paragraph{Slots.}
A slot $m$ carries a scalar fingerprint initialized from a learned seed
$s_m\in\mathbb R^{d_s}$ and an equivariant vector state
$Q_m\in\mathbb R^{d_v\times3}$ constructed from the encoded atomic geometry.
Learned scalar seeds distinguish hypotheses, while geometry-pooled vector states
retain directional information without introducing fixed learned spatial axes.

\paragraph{Stages.}
The slot module proceeds through initialization, vector--scalar updates,
scalar-only updates, and final readout, using
Equations~\revEqref{eq:rev-slot-self}--\revEqref{eq:rev-slot-mha-update}
with the stage types summarized in Table~\ref{tab:rev-slot-stages}.
At stage $t$, let $\hat s_m$ and $\hat u_a$ denote the corresponding
normalized slot and atomic scalar features, and let $H^{(t)}$ be the number
of attention heads with head width $d_h$.
The exact stage schedule, normalizations, initialization, dimensions, and
parameter-sharing choices are given in Appendix~\ref{app:rev-slot-details}.

\paragraph{Slot self attention.}
Fingerprints optionally self-attend across slots, in place,
\begin{equation}
 s_m^{(t)}
 \leftarrow
 (1-\eta^{(t)})s_m^{(t)}
 +\eta^{(t)}\,
 \operatorname{SA}^{(t)}
 \!\bigl(s_1^{(t)},\dots,s_M^{(t)}\bigr)_m,
 \qquad
 \eta^{(t)}\in\{0,1\}
 \label{eq:rev-slot-self}
\end{equation}

\paragraph{Slot--atom attention.}
At stage $t$, attention head $h$ uses scalar similarity and, when enabled,
vector alignment to compute the slot--atom attention weights:
\begin{equation}
 A^{(t,h)}_{ma}
 =
 \operatorname{softmax}_a\!\left[
 \frac{
 (W_q^{(t,h)}\hat s_m)^\top
 (W_k^{(t,h)}\hat u_a)
 }{\sqrt{d_h}}
 +
 \gamma^{(t,h)}
 \frac{
 \left\langle
 \widetilde W_Q^{(t,h)}Q_m^{(t)},
 \widetilde W_V^{(t,h)}V_a
 \right\rangle_F
 }{\sqrt{3d_v}}
 \right].
 \label{eq:rev-slot-attn}
\end{equation}

Each head pools a scalar and a vector context; scalar contexts are concatenated
over heads,
\begin{equation}
\begin{gathered}
 z_m^{(t,h)}
 =
 \sum_a A^{(t,h)}_{ma}\,
 W_z^{(t,h)}\hat u_a,
 \qquad
 z_m^{(t)}
 =
 \bigl[
 z_m^{(t,1)};
 \dots;
 z_m^{(t,H^{(t)})}
 \bigr]
 \in\mathbb R^{d_s},
 \\
 U_m^{(t,h)}
 =
 \sum_a A^{(t,h)}_{ma}\,
 W_V^{(t,h)}V_a
 \in\mathbb R^{d_v\times3}.
 \label{eq:rev-slot-context}
\end{gathered}
\end{equation}

The scalar maps act on invariant features, while vector maps mix channels
identically across spatial components. Their exact parameterization and
sharing pattern are specified in Appendix~\ref{app:rev-slot-details}.

\paragraph{Update.}
Using the pooled scalar and vector contexts, we update the slot states according to
\begin{equation}
 s_m^{(t+1)}
 =
 s_m^{(t)}
 +
 F^{(t)}
 \!\Bigl(
 s_m^{(t)},
 z_m^{(t)},
 \bigl(\|U^{(t,h)}_{m,v}\|_2\bigr)_{h,v}
 \Bigr),
 \qquad
 Q_m^{(t+1)}
 =
 Q_m^{(t)}
 +
 \sum_h W_U^{(t,h)}U_m^{(t,h)} .
 \label{eq:rev-slot-update}
\end{equation}

$F^{(t)}$ is zero, a two-layer MLP on its three arguments, or, in the
multi-head stages,
\begin{equation}
 F^{(t)}(s,z,\cdot)
 =
 W_o^{(t)}z
 +
 \operatorname{FF}^{(t)}
 \!\bigl(
 \operatorname{LN}^{(t)}
 (s+W_o^{(t)}z)
 \bigr),
 \label{eq:rev-slot-mha-update}
\end{equation}
which ignores the vector norms. The final readout returns
$A^{(T)}_{ma}$, $z_m^{(T)}$, $s_m^{(T)}$, and $Q_m^{(T)}$ for the
decoder in Section~\ref{sec:rev-decoder}.

\begin{table}[h]
\caption{\textbf{Slot stage types.}
Settings of
Equations~\revEqref{eq:rev-slot-self}--\revEqref{eq:rev-slot-mha-update}.
``MLP'' is a two-layer perceptron on $[s,z,\|U\|]$;
``$W_oz$+FF'' is Equation~\revEqref{eq:rev-slot-mha-update};
``--'' means the map is absent;
$\gamma$ of the vector--scalar stage is learned from initial value 1.
Detailed stage counts and parameterization are given in
Appendix~\ref{app:rev-slot-details}.}
\label{tab:rev-slot-stages}
\centering
\small
\setlength{\tabcolsep}{3.5pt}
\begin{tabular}{@{}lccccccl@{}}
\toprule
& \multicolumn{3}{c}{Scalar attn.}
& Vector attn.
& \multicolumn{3}{c}{Update} \\
\cmidrule(lr){2-4}
\cmidrule(lr){5-5}
\cmidrule(lr){6-8}
Stage type
& $H$
& $\eta$
& $\mathrm N$
& $\gamma$
& $F$
& $W_U$
& Updated \\
\midrule
Initialization
& 1
& 0 or 1
& Id
& $0$
& $0$
& $I$
& $s_m^{(t+1)}=s_m^{(t)}$, $Q_m^{(t+1)}=U_m^{(t)}$ \\
Vector--scalar
& 1
& 0
& Id
& learned
& MLP
& learned
& $s_m^{(t+1)}$, $Q_m^{(t+1)}$ \\
Scalar-only
& 8
& 0
& LN
& $0$
& $W_oz$+FF
& --
& $s_m^{(t+1)}$, $Q_m^{(t+1)}=Q_m^{(t)}$ \\
Readout
& 1
& 0
& Id
& $1$
& --
& --
& none; outputs $A^{(T)}_{ma},z^{(T)}_m,s^{(T)}_m,Q^{(T)}_m$ \\
\bottomrule
\end{tabular}
\end{table}

\subsection{Displacement decoding from three vector sources}
\label{sec:rev-decoder}

Using the final readout outputs, with stage indices omitted, we form a
bank of 32 atom-derived vectors, eight slot-derived vectors, and one
direction from the anchor $b_m=\sum_aA_{ma}X_a$ to atom $a$:
\begin{equation}
\begin{gathered}
 \mathcal B_{ma}=
 \bigl[\,W_{\mathrm{atom}}V_a;\ W_{\mathrm{slot}}Q_m;\
 \operatorname{unit}_\epsilon\!\bigl(X_a-b_m\bigr)\,\bigr]
 \in\mathbb R^{41\times3},\\
 \operatorname{unit}_\epsilon(v)
 =\frac{v}{\max(\|v\|_2,\epsilon)},\quad\epsilon=10^{-8},
\end{gathered}
 \label{eq:rev-bases}
\end{equation}
Here $W_{\mathrm{atom}}$ and $W_{\mathrm{slot}}$ are shared, learned,
bias-free vector-channel maps. 

\paragraph{Invariant conditioning.}
The decoder is conditioned on the invariant features
\begin{equation}
\begin{gathered}
 \xi_{ma}=\bigl[\,s_m,\ z_m,\ u'_a,\ \chi_{ma},\
 \log\bigl(1+\|X_a-b_m\|_2^2\bigr)\bigr],\\
 \chi_{ma}=\Bigl(\tfrac1{\sqrt3}
 \bigl\langle(G_QQ_m)_g,\,(G_VV_a)_g\bigr\rangle\Bigr)_{g=1}^{8},
\end{gathered}
 \label{eq:rev-decoder-cond}
\end{equation}
where $u'_a=u_a^{(T)}$ is the readout's atomic embedding, and $G_Q,G_V$
are learned vector-channel maps.

\paragraph{Coefficient networks and displacement.}
Four separately parameterized MLPs, shared across atoms and slots, give \begin{equation}
\begin{aligned}
 p_{ma}&=\sigma\bigl(g_{\mathrm{act}}(\xi_{ma})\bigr),
 &\kappa_{ma}&=\operatorname{softplus}
   \bigl(g_{\mathrm{mag}}(\xi_{ma})\bigr),\\
 q_{ma}&=f_{\mathrm{coef}}(\xi_{ma}),
 &q^{\mathrm{res}}_{ma}&=f_{\mathrm{res}}(\xi_{ma}),
\end{aligned}
 \label{eq:rev-decoder-coef}
\end{equation}
where $\sigma$ is the sigmoid and
$q_{ma},q^{\mathrm{res}}_{ma}\in\mathbb R^{41}$.
The resulting displacement is
\begin{equation}
 \widehat D_{m,a}
 =\underbrace{p_{ma}}_{\text{participation}}\;
 \underbrace{\kappa_{ma}}_{\text{scale}}\;
 \underbrace{\operatorname{unit}_\epsilon
   \!\bigl(q_{ma}^\top\mathcal B_{ma}\bigr)}_{\text{direction}}
 \;+\;\gamma_{\mathrm{res}}\,
 (q^{\mathrm{res}}_{ma})^\top\mathcal B_{ma},
 \label{eq:rev-decoder}
\end{equation}
This gives atom $a$'s displacement for proposal $m$, with learned global
scale $\gamma_{\mathrm{res}}$. The gate permits localized motion without
hard sparsity, while the ungated residual accommodates surrounding
relaxations. {For a vacancy event, this allows the proposal to describe both the atoms moving toward vacant sites and the smaller displacements of their neighbors, without selecting the moving atoms in advance.} The periodic coordinate construction is described in Appendix~\ref{app:rev-periodic-background}; network widths and initialization are given in Appendix~\ref{app:rev-decoder-details}.

\subsection{Set matching and training objective}
\label{sec:rev-loss}

Let $\mathscr E_x=\{1,\ldots,K\}$ index the reference set of $X$,
with saddle displacement fields $D_e=S_e-X$. Slots have no prescribed targets. Using the scalar SmoothL1
penalty $\rho_\beta$, define the proposal--target cost over all $N$ atoms by
\begin{equation}
 C_{me}=\ell_\beta(\widehat D_m-D_e),\qquad
 \ell_\beta(E)=\frac1N\sum_{a,d}\rho_\beta(E_{ad}).
 \label{eq:rev-match-cost}
\end{equation}
Rectangular Hungarian matching gives the minimum-cost one-to-one
first-round assignment $\Pi_1$, assigning each target to a distinct slot
when $M\geq K$. We remove the matched slots and repeat against the full
catalogue for at most $R_{\mathrm{match}}=4$ rounds in total, stopping
when no slots remain. Each slot is used at most once, whereas a target
may supervise multiple slots across rounds. For $R_{\mathrm{match}}=1$, the auxiliary term is omitted. The auxiliary term is also defined to be zero whenever no pair is matched after the first round, including when the first round exhausts all slots.


Let $\Pi_+$ denote all pairs matched after the first round. The objective is
\begin{equation}
 \mathcal L_x=\lambda_D\left[
 \frac1{|\Pi_1|}\sum_{(m,e)\in\Pi_1}C_{me}
 +\frac{\lambda_+}{|\Pi_+|}
  \sum_{(m,e)\in\Pi_+}C_{me}
 \right],
 \qquad \lambda_D=10,\quad\lambda_+=0.5.
 \label{eq:rev-train-loss}
\end{equation}

\subsection{Confidence readout for proposal selection}
\label{sec:rev-judge}

We hypothesize that features learned through displacement matching also
encode catalogue agreement. We freeze the predictor and train a
lightweight MLP \emph{judge} on invariant latent and geometric features
(Appendix~\ref{app:rev-judge}). It outputs $\widehat p_m\in(0,1)$
without force evaluations; only proposals with $\widehat p_m\geq\tau$
undergo Dimer refinement. 

For proposal $\widehat D_m$ and catalogue field $D_e$, define the
moving-union root-mean-square (RMS) error by
\begin{equation}
\begin{gathered}
 U_{me}=\{a:\max(\|\widehat D_{m,a}\|_2,\|D_{e,a}\|_2)
 \geq0.3\,\text{\AA}\},\\
 \operatorname{RMS}_U(\widehat D_m,D_e)=
 \left(\frac{1}{|U_{me}|}\sum_{a\in U_{me}}
 \|\widehat D_{m,a}-D_{e,a}\|_2^2\right)^{1/2}.
\end{gathered}
\label{eq:rev-union-rms}
\end{equation}
Empty unions have infinite error. For the nearest catalogue field
$D_{e^*}$ under this RMS, set $y_m=1$ iff the RMS is below
$0.35\,\text{\AA}$ and the maximum atomwise error over $U_{me^*}$
is below $0.5\,\text{\AA}$.
Judge selection provides a tunable trade-off between reference-event recovery and physical search cost. The judge is trained on the training set only.


\section{Experiments and Results}
\label{sec:experiments}

\begin{figure}[!t]
    \centering
    \includegraphics[width=\linewidth]
    {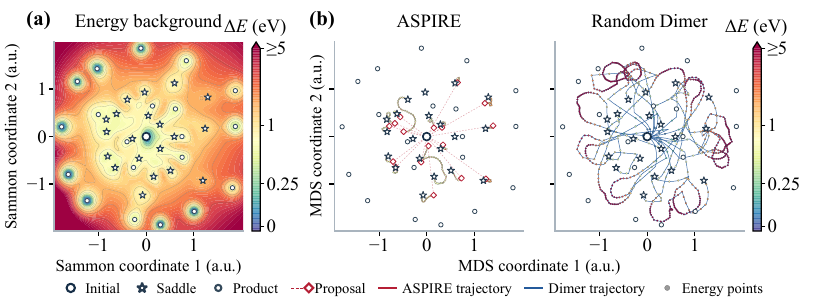}
    \caption{\textbf{Energy background, saddles, and trajectories.}
    \textbf{(a)} Sammon-projected view of the energy function and initial, saddle, and product configurations.
    \textbf{(b)} MDS-coordinate views of ASPIRE and random-start
    Dimer searches on one configuration. ASPIRE proposal jumps effectively predict approximate saddles, followed by light Dimer refinement, whereas random Dimer goes long paths;
    Each trajectory energy point denotes every five translation steps, and color denotes the energy value above initial at the point. (Appendix~\ref{app:rev-trajectory-mds}). a.u.: arbitrary units.
    }
    \label{fig:rev-recovery-cost}
\end{figure}

\subsection{Datasets}
\label{sec:rev-datasets}

\begin{wrapfigure}{r}{0.49\textwidth}\vspace{-2mm}
    \centering
    \includegraphics[width=\linewidth]{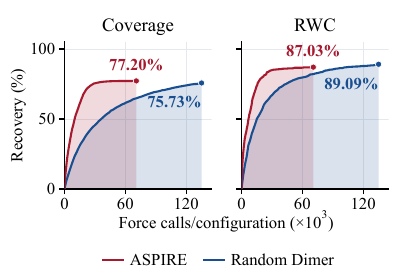}
    \caption{\textbf{Reference-event recovery versus search cost.}
    Coverage (left) and RWC (right) versus the mean recorded
    Dimer search force calls per configuration, evaluated on
    200 test configurations.
    Curves vary the judge threshold for ASPIRE
    (up to 96 proposals) or the number of random-start
    Dimer searches (up to 80).}
    \label{fig:rev-recovery-curves}
\vspace{-3mm}
\end{wrapfigure}

We construct both datasets by removing one to four lattice sites from
a 128,000-atom bcc Fe block and relaxing under the M07 EAM potential (Appendix~\ref{app:rev-fe-background}).
Each example pairs a relaxed active volume $X$ of radius
$11\,\text{\AA}$ with a reference event set
$\mathcal E_X^{\mathrm{ref}}$ collected through
80 random-start Dimer searches and SEAKMC validation, with barriers
$\Delta E_k\in[0.1,2.0]\,\mathrm{eV}$.
These reference sets are finite and
not exhaustive, but will be updated with more Dimer searches and made more complete. \textsc{BccFe-4vacAV-4000} contains 4,000 symmetry-distinct four-vacancy
configurations and 65,450 events ($2\leq K\leq43$), with a 3,700/100/200 split. It is used for all evaluations in this paper. \textsc{BccFe-1to4Vac} contains 5,372 configurations with one to four
vacancies and 83,372 events. It is built for future KMC trajectory simulation and analysis. Construction details are in
Appendix~\ref{app:rev-datasets}.

\subsection{Evaluation protocol}

Let $\mathcal X$ be the fixed evaluation set and
$\mathcal E_X^{\mathrm{rec}}\subseteq\mathcal E_X^{\mathrm{ref}}$
the recovered references at $X$. Harmonic reference rates are
$k_e=\nu e^{-\Delta E_e/(k_{\mathrm B}T)}\in(0,\infty)$ with common prefactor $\nu$. \label{ass:rev-match} We consider events $k$ and $k'$ from the same $X$ match if and only if $\max_a\|D_{k,a}-D_{k',a}\|_2\leq0.05\,\text{\AA}
 \qquad$and$\qquad
 |\Delta E_k-\Delta E_{k'}|\leq0.02\,\mathrm{eV}$\label{eq:rev-match},
 where $D_k=S_k-X$ is the saddle displacement, $\Delta E_k$ the barrier,
and $a$ indexes atoms. Figure~\ref{fig:rev-recovery-cost}(a) is a visualization of the initial, saddle, and product configurations on the projected energy surface. We use coverage $\mathrm{Cov}_X=
 \frac{|\mathcal E_X^{\mathrm{rec}}|}{|\mathcal E_X^{\mathrm{ref}}|}$, rate-weighted coverage $\mathrm{RWC}_X=
 \frac{\sum_{e\in\mathcal E_X^{\mathrm{rec}}}k_e}
      {\sum_{e\in\mathcal E_X^{\mathrm{ref}}}k_e}$, and averaged rate-weighted coverage $\overline{\mathrm{RWC}}=
 \frac1{|\mathcal X|}\sum_{X\in\mathcal X}\mathrm{RWC}_X$ for evaluation. For a fixed reference catalogue, $\mathrm{RWC}_X\to\mathrm{Cov}_X$ as
$T\to\infty$; for $\mathrm{RWC}_X>0$, restricting to the recovered set gives
$1/\mathrm{RWC}_X$ times the full-reference mean waiting time, and
$\overline{\mathrm{RWC}}$ is the expected fraction recovered by the evaluated method when one event is sampled from the reference by rate at each $X$. These are kinetic-focused evaluation
proxies, not full KMC trajectory kinetics (Appendix~\ref{app:rev-rwc}).
All reported RWC values use $T=800\,\mathrm K$.

Throughout the evaluation in this paper, the reference set
$\mathcal E_X^{\mathrm{ref}}$ stays fixed: each output is assigned at most
one matching reference, and repeated hits to the same reference are counted once
(Appendix~\ref{app:rev-matching}). \label{ass:rev-scope}
Throughout the experiments, each $X$ is a SEAKMC active volume of radius
$11\,\text{\AA}$ \citep{xu2012self}, treated as a global state for evaluation. Random Dimer searches are capped at 450 translation steps, whereas ASPIRE
refinement is capped at 300 (Appendix~\ref{app:rev-refinement}). This is because random dimer takes more translations to converge, and the main purpose of ASPIRE is to reduce geometric error and translation steps to achieve computational efficiency, as shown in Figure~\ref{fig:rev-recovery-cost}(b).

\subsection{Reference-event recovery and force cost}
\label{sec:rev-recovery-cost}
\label{sec:rev-ablation}

\begin{figure}[]
    \centering
    \includegraphics[width=\linewidth]
    {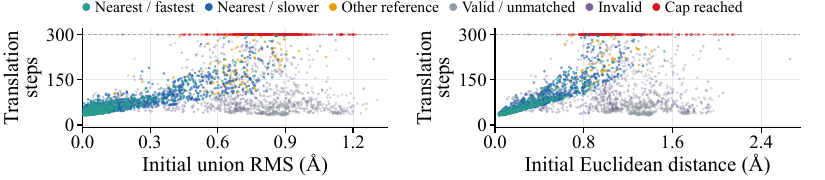}
    \caption{\textbf{Proposal geometry and refinement outcomes.}
    Each point represents a proposal and its subsequent Dimer search.
    Initial error is measured by moving-union RMS
    or Euclidean distance to the nearest reference saddle, and refinement effort by the number of translation steps.
    Colors distinguish nearest-reference recovery (fastest or slower),
    recovery of another reference, valid unmatched outputs,
    invalid outputs, and searches reaching the cap.
    The dashed line marks the 300-step cap.}
    \label{fig:rev-rms-recovery}

\end{figure}

\begin{table}[!b]
    \caption{\textbf{Ablation of Ev-Quiformer.}
    Judge selection is disabled for all variants. 
    O2M denotes the number of sequential matching rounds;
    Uniform denotes all-atom SmoothL1 supervision.  The active-aware objective is specified in Appendix~\ref{app:rev-active}.
    $\bar F_{\mathrm{search}}$ is the mean recorded Dimer search
    force-call count per configuration (Appendix~\ref{app:rev-protocol}).
    Evaluated on 50 test configurations selected with topology-layered sampling covering all five vacancy-connectivity classes (Appendix~\ref{app:rev-eval50}).}
    \label{tab:ablation}
    \centering
    \small
    \setlength{\tabcolsep}{4pt}
    \renewcommand{\arraystretch}{1.08}
    \resizebox{\linewidth}{!}{%
        \begin{tabular}{@{}lccccccrrr@{}}
            \toprule
            & \multicolumn{3}{c}{Architecture}
            & \multicolumn{3}{c}{Supervision}
            & \multicolumn{3}{c}{Results} \\
            \cmidrule(lr){2-4}
            \cmidrule(lr){5-7}
            \cmidrule(lr){8-10}
            Variant
            & Slots
            & Anchor
            & Self-attn.
            & O2M
            & Loss
            & $\beta$ (\AA)
            & Cov.\ (\%) $\uparrow$
            & RWC (\%) $\uparrow$
            & $\bar F_{\mathrm{search}}$ $\downarrow$ \\
            \midrule
            Full Ev-Quiformer
            & 96 & $\checkmark$ & $\checkmark$
            & 4 & Uniform & 0.03
            & \textbf{76.13} & 87.34 & 75,768 \\
            Cross-attention only
            & 96 & $\checkmark$ & $\times$
            & 4 & Uniform & 0.03
            & 75.60 & 84.29 & 71,033 \\
            Single matching round
            & 96 & $\checkmark$ & $\checkmark$
            & 1 & Uniform & 0.03
            & 74.09 & 87.28 & 87,688 \\
            Active-aware loss
            & 96 & $\checkmark$ & $\checkmark$
            & 4 & Active-aware & 0.03
            & 23.29 & 27.18 & 138,302 \\
            Fewer slots
            & 32 & $\checkmark$ & $\checkmark$
            & 4 & Uniform & 0.03
            & 62.10 & 78.78 & \textbf{26,461} \\
            Coarser SmoothL1
            & 96 & $\checkmark$ & $\checkmark$
            & 4 & Uniform & 0.10
            & 74.79 & \textbf{87.75} & 82,460 \\
            No spatial anchor
            & 96 & $\times$ & $\checkmark$
            & 4 & Uniform & 0.03
            & 64.74 & 75.03 & 87,453 \\
            \bottomrule
        \end{tabular}%
    }
\end{table}

\label{sec:rev-assumptions}

Figure~\ref{fig:rev-recovery-curves} compares reference-event
recovery against physical search cost.
ASPIRE achieves higher reference-event coverage and
recovers a comparable fraction of the reference rate mass
at substantially lower search cost compared to 80 random dimer searches. 80 per active volume is chosen as it is a reasonable parameter to get a relatively comprehensive set in SEAKMC deployment.


The judge controls this recovery--cost tradeoff by selecting
proposals before physical refinement. The ASPIRE curves rise rapidly and then flatten, indicating
the possibility of using the judge to further reduce force cost while keeping much of the event set and reference rate mass recovery. Together, these results support learned saddle proposals
and selective refinement as a means of reducing physical
search work while retaining high reference-event recovery.

Figure~\ref{fig:rev-rms-recovery} shows that Geometric error is strongly associated with refinement behavior: proposals with small moving-union RMS typically converge rapidly to their nearest ground-truth saddle, whereas larger errors lead to more variable outcomes. This suggests that geometric error can serve as an effective offline proxy for model quality, consistent with the strong performance of our union-RMS-based judge.



\paragraph{Ablation study.}
Table~\ref{tab:ablation} examines how proposal design affects this
recovery--cost tradeoff. Judge selection is disabled for all variants,
so the comparison does not depend on a confidence threshold.


The full Ev-Quiformer achieves the highest coverage, 76.13\%, with
87.34\% RWC and 75,768 mean search force calls per configuration.
Among those with similar coverage and RWC above 87\%, full Ev-Quiformer reduces the most search force calls.
Together, for the objective of higher coverage and RWC first, and reduced cost second, these results favor the default combination of a larger
proposal set, slot interaction, repeated matching, and uniform
all-atom supervision. Among the tested variants with similar RWC,
the full model provides the highest coverage at the lowest force cost.


\FloatBarrier

\subsection{Wall time and comparison with event reuse}
\label{sec:rev-walltime}

\begin{figure}[!t]
    \centering
    \includegraphics[width=\linewidth]
    {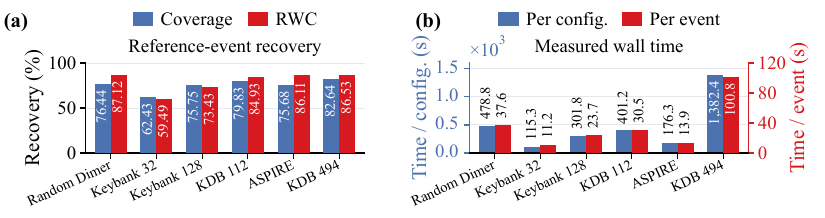}

    \vspace{-10pt}
    \caption{\textbf{Recovery and wall time compared with event reuse.}
    \textbf{(a)} Reference-event coverage and rate-weighted coverage.
    After Random Dimer, methods are ordered by increasing RWC.
    \textbf{(b)} Measured wall time per configuration
    and per recovered reference event.
    Random Dimer and the reuse baselines use 16 CPUs;
    ASPIRE uses 16 CPUs and one A100 GPU. ASPIRE uses judge-based proposal selection with threshold
$\tau=10^{-5}$, refining only proposals with
$\widehat p_m\geq\tau$. Evaluated on 50 test configurations with topology-layered sampling}
    \label{fig:rev-walltime}
 \vspace{-10pt}
\end{figure}

Force-call counts alone do not capture the lookup cost of event reuse. Cost-accounting conventions are discussed in Appendix~\ref{app:rev-matching}.
We therefore compare measured wall time with \emph{adaptations} of KDB and k-ART-style
keybank baselines in Figure~\ref{fig:rev-walltime}. We adapt KDB’s local geometric matching and implement KeyBank using WL hashing, exact graph isomorphism,
  and geometric alignment. KDB transfers aligned saddle coordinates, whereas KeyBank transfers rotated displacement fields; both initialize Dimer refinement. KDB-494 denotes
  494 event templates (Appendix~\ref{app:rev-kdb}); KeyBank-32 and KeyBank-128 use 32 and 128 training configurations (Appendix~\ref{app:rev-keybank}), respectively. Querying the test configurations was already time-consuming at these sizes. We therefore report the measured recovery–wall-time tradeoff at these operating points. 


As shown in Figure~\ref{fig:rev-walltime}, baselines with comparable
RWC are substantially slower than ASPIRE, while faster alternatives
achieve much lower RWC under the stated library and hardware settings. These methods do not contradict each other, so they can be deployed together for complementary benefits.


\section{Conclusion and Limitation}
\label{sec:conclusion}

We introduced ASPIRE, a propose-refine framework for saddle
discovery from a single initial atomic configuration.
Its Ev-Quiformer predictor uses geometry-conditioned event slots to
generate a set of complete displacement fields for refinement on the
original potential. On the four-vacancy bcc Fe benchmark, ASPIRE
achieves higher reference-event coverage than the reported
80-start Random Dimer endpoint, at approximately half the recorded
search-force cost under the respective protocols. The wall-time
comparison further shows lower measured runtime than event-reuse
baselines with comparable RWC under the stated hardware.
These results support learning approximate saddle geometry as
a means of accelerating physical event discovery.

\textbf{Limitation.} Our reference sets are finite and non-exhaustive, and the evaluation
is restricted to four-vacancy bcc Fe under a fixed potential.
Reference-relative coverage and RWC did not establish the accuracy of a full KMC trajectory.
More complete reference catalogues and trajectory-level evaluation
are needed to assess the resulting long-time kinetics.

\clearpage
\begingroup

\hypersetup{linkcolor=red,citecolor=red,urlcolor=red}

\bibliography{iclr2027_conference}
\bibliographystyle{iclr2027_conference}

\appendix

\appendix

\clearpage
\startcontents[appendixoutline]
\startcontents[appendixoutlinecontinued]
\stopcontents[appendixoutlinecontinued]
\begingroup
\color{black}
\hypersetup{linkcolor=black,citecolor=black,urlcolor=black,pdfborder={0 0 0}}
\section*{Supplementary Material}
Appendices~\ref{app:rev-background}--\ref{app:rev-dimer} describe the physical background, dataset construction, neural prediction and screening, and numerical refinement. Appendices~\ref{app:rev-kdb}--\ref{app:rev-trajectory-mds} describe the event-reuse baselines and trajectory visualization. Appendices~\ref{app:rev-theory} and~\ref{app:rev-rwc} provide the theoretical analyses, and Appendix~\ref{app:rev-matching} gives the evaluation protocols and cost accounting. The complete appendix contents follow on two pages.

\subsection*{Appendix Contents}
\begingroup
\small
\setlength{\parskip}{0pt}
\printcontents[appendixoutline]{}{1}{\setcounter{tocdepth}{2}}
\endgroup
\clearpage
\section*{Supplementary Material}
\subsection*{Appendix Contents (continued)}
\begingroup
\small
\setlength{\parskip}{0pt}
\printcontents[appendixoutlinecontinued]{}{1}{\setcounter{tocdepth}{2}}
\endgroup
\endgroup
\clearpage
\section{Physical Background and Coordinate Conventions}
\label{app:rev-background}

This section develops the physical and representational background for
the learning problem. We use $X\in\mathbb R^{N\times3}$ for the initial
configuration, $V$ for the fixed interatomic potential, $S_e$ for a reference
saddle, $D_e=S_e-X$ for its displacement field, and $X_e^+$ for the connected
product minimum. Parent-crystal, periodic-cell, and search-coordinate
notation specifies how these quantities are constructed.
\subsection{bcc Fe, vacancies, and the fixed interatomic potential}
\label{app:rev-fe-background}

\paragraph{Parent lattice.}
In conventional cubic axes, an ideal body-centered-cubic (bcc) lattice with
lattice parameter $a_0$ can be written as
\begin{equation}
 \Lambda_{\mathrm{bcc}}
 =a_0\mathbb Z^3\;\cup\;
 a_0\!\left(\mathbb Z^3+(\tfrac12,\tfrac12,\tfrac12)\right).
 \label{eq:app-bcc-lattice}
\end{equation}
There are two lattice sites per conventional cubic cell. The elemental bcc
structure has space group $Im\overline3m$ (No.~229)
\citep{mehl2017aflow}. An equivalent integer-coordinate description is
\begin{equation}
 \Lambda_{\mathrm{bcc}}
 =\frac{a_0}{2}
 \left\{z\in\mathbb Z^3:
 z_1\equiv z_2\equiv z_3\pmod 2\right\}.
 \label{eq:app-bcc-integer}
\end{equation}
Thus the integer coordinates are either all even or all odd, enabling exact
comparison of parent vacancy-site patterns. Relaxed atomic positions and
saddle candidates are represented by continuous Cartesian coordinates.

\paragraph{Vacancies and competing rearrangements.}
A vacancy is an unoccupied site of the parent lattice.
Equation~\revEqref{eq:app-bcc-lattice}
gives eight nearest-neighbor vectors from a lattice site,
\begin{equation}
 \frac{a_0}{2}(\pm1,\pm1,\pm1),
 \qquad d_{\mathrm{nn}}=\frac{\sqrt3}{2}a_0.
 \label{eq:app-bcc-neighbors}
\end{equation}
These directions describe nearest-neighbor atom--vacancy exchange.
Multiple vacancies alter local coordination and induce relaxation of the
surrounding Fe atoms, producing rearrangements with different saddle
geometries and barriers. SEAKMC searches directly in atomic coordinate
space to discover the corresponding mechanisms \citep{xu2012self}.

The prediction target is a complete $N\times3$ displacement field, which
describes both motion toward vacant sites and surrounding atomic relaxation.
This representation accommodates varying numbers of participating atoms
and multiple accessible rearrangements for the same vacancy configuration.

\paragraph{M07 and the EAM energy.}
The main-text experiments use the M07 embedded-atom-method (EAM) potential
for Fe. The NIST repository identifies the standard M07 file with the Fe
potential developed by Marinica in 2007 using a point-defect-oriented fitting
database, and associates it with the studies of
\citet{malerba2010comparison} and \citet{marinica2012irradiation}
. For a single-element EAM model, the energy of a physical
configuration $Y$ has the form
\begin{equation}
 V_{\mathrm{full}}(Y)
 =\sum_i F_{\mathrm{emb}}\!\left(\sum_{j\ne i}
       f_{\mathrm{dens}}(r_{ij})\right)
  +\frac12\sum_i\sum_{j\ne i}\phi_{\mathrm{pair}}(r_{ij}),
 \label{eq:app-eam}
\end{equation}
where $r_{ij}$ is evaluated with the physical boundary convention,
$F_{\mathrm{emb}}$ is the embedding function, $f_{\mathrm{dens}}$ contributes
to the local density, and $\phi_{\mathrm{pair}}$ is the pair interaction
\citep{daw1984embedded}. Here $Y$ distinguishes the physical system from
the local array $X$ supplied to the predictor.

The neural model proposes candidate geometries, while the fixed Fe
potential supplies energies and forces for initial relaxation, Dimer
refinement, and barrier evaluation. This division connects learned
proposals to a consistent physical energy surface throughout the benchmark.

\subsection{Periodic images and a local Cartesian representation}
\label{app:rev-coordinate-background}
\label{app:rev-periodic-background}

\paragraph{Periodic images are translated copies.}
A periodic simulation cell represents a crystal together with translated
copies of that cell \citep{abraham_2026_20845233}. Let $H_{\mathrm{sys}}$ be the
nonsingular matrix whose rows are the physical cell vectors, and use row
vectors for positions. Then $r$ and $r+nH_{\mathrm{sys}}$ represent the same
indexed atom for an allowed integer lattice shift $n$. In a partially
periodic system, components of $n$ along nonperiodic axes are zero.

For a displacement row vector $v$, the minimum-image operation is
\begin{equation}
 \operatorname{MIC}_{H_{\mathrm{sys}}}(v)
 =v-n^*H_{\mathrm{sys}},\qquad
 n^*\in\operatorname*{arg\,min}_{n\in\mathbb Z^3_{\mathrm{per}}}
       \|v-nH_{\mathrm{sys}}\|_2.
 \label{eq:app-mic-background}
\end{equation}
It chooses a shortest representative among the allowed translations.
For an orthogonal, fully periodic cell, this becomes
\begin{equation}
 \operatorname{MIC}_{H_{\mathrm{sys}}}(v)
 =\bigl(vH_{\mathrm{sys}}^{-1}
       -\operatorname{round}(vH_{\mathrm{sys}}^{-1})\bigr)
       H_{\mathrm{sys}},
 \label{eq:app-mic-orthogonal}
\end{equation}
apart from image ties. The row-vector convention gives the inverse
$H_{\mathrm{sys}}^{-1}$. For a general skew cell, Equation~\revEqref{eq:app-mic-background}
defines the shortest-image search. At exact ties, preprocessing uses a fixed
image-selection convention compatible with the stated symmetry conditions.

\paragraph{From stored positions to $X$.}
A local representation places atoms near a defect in a common image even
when their stored coordinates straddle the periodic box. Let $c$ be a
local reference point, also written as a row vector, and let $i(a)$ identify
local atom $a$ in the physical configuration $Y$. A suitable local
representation is
\begin{equation}
 X_a=\operatorname{MIC}_{H_{\mathrm{sys}}}\bigl(Y_{i(a)}-c\bigr).
 \label{eq:app-local-representation}
\end{equation}
Alternatively, an already unwrapped local configuration can be supplied
directly. In either case, $X$ is the resulting Cartesian array with fixed
atom order; cell information and $c$ specify its construction.

Pairwise Cartesian differences agree with minimum-image displacements
under the following sufficient geometric condition. Let
\[
 \ell_{\min}
 =\min_{n\in\mathbb Z^3_{\mathrm{per}}\setminus\{0\}}
      \|nH_{\mathrm{sys}}\|_2,
\]
and suppose all represented positions satisfy $\|X_a\|_2\le r_{\mathrm{AV}}$.
If
\begin{equation}
 4r_{\mathrm{AV}}<\ell_{\min},
 \label{eq:app-local-image-condition}
\end{equation}
then all pair differences in the local array are their unique shortest
periodic representatives. Indeed, $\|X_a-X_b\|_2\le2r_{\mathrm{AV}}$, whereas
any nonzero lattice translation $l$ gives
$\|X_a-X_b-l\|_2\ge\ell_{\min}-2r_{\mathrm{AV}}>2r_{\mathrm{AV}}$.
This criterion makes the relation between the local Cartesian chart and
minimum-image geometry explicit in terms of the actual cell dimensions.

\paragraph{Operations used by the main-text model.}
After the local representation is fixed, the encoder uses $X_a-X_b$ in its
$6\,\text{\AA}$ radius graph. The decoder uses
\begin{equation}
 b_m=\sum_a A_{ma}X_a,\qquad X_a-b_m,
 \qquad \sum_a A_{ma}=1.
 \label{eq:app-cartesian-anchor}
\end{equation}
The anchor is the attention-weighted mean in the supplied local Cartesian
representation. Preparing a consistent local image before this averaging
preserves the intended neighborhood geometry; for example, raw positions
$0.49L$ and $-0.49L$ straddle a periodic boundary and have nearest-image
separation $0.02L$.

\paragraph{Displacement targets and atom correspondence.}
The initial and saddle arrays must refer to the same atoms in compatible
representations before forming
\begin{equation}
 D_e=S_e-X,\qquad \widehat S_m=X+\widehat D_m.
 \label{eq:app-displacement-convention}
\end{equation}
Image tracking along a search gives a consistent physical displacement.
For a localized event, compatible shortest displacements can be selected
relative to the initial atoms; trajectories winding around the cell use
path unwrapping.

With consistent atom correspondence and image representation, assignment
and regression use the full Cartesian residual $\widehat D_m-D_e$ directly.
The same atom ordering is retained in $X$, $S_e$, and their displacement field.

\paragraph{Three different meanings of ``image.''}
Periodic images are translated representatives of the same atom.
The reflections and inversion used in cubic-symmetry deduplication are
spatial transformations of an entire vacancy pattern. The two images of a
Dimer are nearby trial configurations in configuration space. Each operation
therefore has a defined role in data preparation or numerical search.

\subsection{Active volumes, surrounding atoms, and free coordinates}
\label{app:rev-active-volume-background}
\label{app:rev-free-coordinates}

SEAKMC localizes event searches to active volumes surrounding defects,
reducing the coordinates involved in a search while retaining an atomistic
description of the rearrangement \citep{xu2012self,XU2015135,LIANG2023112354}. In this paper,
$X$ represents an active volume of radius $r_{\mathrm{AV}}=11\,\text{\AA}$.
Its $N$ local atoms are selected from the 128,000-atom parent block before
vacancy removal. The active-volume radius controls the search region,
the $6\,\text{\AA}$ graph cutoff controls neural neighborhoods, and the
potential's interaction cutoff controls physical interactions.

The active-volume model incorporates its prescribed surrounding environment.
With fixed surroundings, a local trial array $Z$ can be embedded into the
parent configuration by a map $\iota_X$. The local potential is
\begin{equation}
 V(Z)=V_{\mathrm{full}}(\iota_X Z)-C_X,
 \label{eq:app-restricted-potential}
\end{equation}
where $C_X$ is any configuration-independent energy offset. Interactions
across the active-volume boundary are included according to the physical
model. The offset cancels from $V(S_e)-V(X)$. This physical locality model
specifies the forces and barriers used for refinement \citep{beland2015kinetic}.

When some represented atoms are fixed during refinement, let
$\mathcal F_X$ index the free atoms. An admissible search displacement then
has $D_a=0$ for $a\notin\mathcal F_X$, and the search acts on vectorized
free coordinates. Any additional linear constraints are imposed before
taking the gradient and Hessian of the restricted potential. This is the
coordinate interpretation used by the local-convergence proposition.

The free-coordinate representation specifies the nondegeneracy conditions.
A translation-invariant whole-system potential has uniform translation zero
modes; fixing the environment or removing these modes supplies the space in
which a positive-definite minimum Hessian or nondegenerate index-one saddle
is defined. Motion of the free atoms relative to a fixed shell remains part
of the local physical problem.

\subsection{Dimer saddle searches and learned initialization}
\label{app:rev-dimer-background}

\paragraph{What Dimer searches for.}
The Dimer method seeks an index-one saddle using first derivatives of the
potential, without requiring a product minimum as input
\citep{henkelman1999dimer}. Work in the same vectorized free coordinates as
above, and denote a trial point by $y$ and its force by $f(y)=-\nabla V(y)$.
Dimer combines ascent along an unstable direction with relaxation in
transverse directions to locate an index-one saddle.

\paragraph{Two configurations estimate directional curvature.}
A symmetric description of a Dimer consists of a center $y$, a unit
orientation $n$, and a half-separation $h_{\mathrm{dim}}>0$:
\begin{equation}
 y^+=y+h_{\mathrm{dim}}n,\qquad
 y^-=y-h_{\mathrm{dim}}n,\qquad \|n\|_2=1.
 \label{eq:app-dimer-images}
\end{equation}
The orientation assigns a displacement direction to every free coordinate.
A finite difference of the forces estimates a Hessian--vector product,
\begin{equation}
 B_{h_{\mathrm{dim}}}(y,n)
 =-\frac{f(y^+)-f(y^-)}{2h_{\mathrm{dim}}}
 \longrightarrow \nabla^2V(y)n
 \quad\text{as }h_{\mathrm{dim}}\to0,
 \label{eq:app-dimer-hvp}
\end{equation}
with estimated curvature
$\kappa_{h_{\mathrm{dim}}}=n^\top B_{h_{\mathrm{dim}}}(y,n)$.
This explains how curvature information can be obtained without assembling
a full Hessian. The symmetric formula describes the method's principle;
implementations can use force reuse or one-sided estimates to reduce the
number of force evaluations \citep{kastner2008superlinearly}.

\paragraph{Rotation and translation are different updates.}
At fixed $y$, the orientation is adjusted toward a low-curvature mode.
In the small-separation limit, minimizing
$n^\top\nabla^2V(y)n$ over the unit sphere gives a tangential gradient
proportional to
\begin{equation}
 (I-nn^\top)\nabla^2V(y)n,
 \label{eq:app-dimer-rotation}
\end{equation}
which can be estimated using Equation~\revEqref{eq:app-dimer-hvp}.
A translation update then moves the center. In a negative-curvature
region, the minimum-mode-following effective force is
\begin{equation}
 f_{\mathrm{eff}}(y,n)
 =f(y)-2n\bigl(n^\top f(y)\bigr)
 =-(I-2nn^\top)\nabla V(y).
 \label{eq:app-dimer-effective-force}
\end{equation}
The force component parallel to $n$ is reversed while perpendicular
components are retained. If $n$ is the unstable mode near an index-one
saddle, this reverses the downhill motion along that mode and leaves
relaxation in stable directions. Rotation and translation are repeated,
with numerical step control and convergence tests. Additional controls
handle positive-curvature escape from the initial basin
\citep{chill2014eon}.

\paragraph{Random and learned initialization.}
The conventional path initializes the search direction randomly, using the
axis sampler in Appendix~\ref{app:rev-refinement}. ASPIRE supplies
$\widehat S_m=X+\widehat D_m$ as the proposed center configuration.
In free coordinates, this gives
\begin{equation}
 y_0=\operatorname{vec}\!\left(\widehat S_m|_{\mathcal F_X}\right).
 \label{eq:app-dimer-learned-center}
\end{equation}
The Dimer also requires an orientation. For displacement-derived
initialization, its general form is
\begin{equation}
 n_0=\frac{\mathsf T_X(\widehat D_m)}
              {\|\mathsf T_X(\widehat D_m)\|_2},
 \qquad \|\mathsf T_X(\widehat D_m)\|_2>0,
 \label{eq:app-dimer-initial-orientation}
\end{equation}
where $\mathsf T_X$ collects the orientation preprocessing and restriction
to search coordinates. In the supplied Cartesian path, the bridge removes the mean displacement
for the initial axis, retains the full displacement for the center preload,
and can reset the axis during loading
(Appendix~\ref{app:rev-refinement}). The center preload specifies a candidate
geometry, while orientation updates estimate the local unstable direction.

Proposition~\ref{prop:rev-ideal-radius} analyzes the exact minimum-mode
iteration with its specified step size. The resulting local convergence
region provides a mathematical rationale for geometric initialization.
Appendix~\ref{app:rev-refinement} gives the finite-difference orientation
updates and search controls used in numerical refinement.

\paragraph{Termination, saddle validation, and connectivity.}
The target in Section~\ref{sec:revised-problem} combines stationarity,
the index-one condition, and downhill connectivity from
$S_e\pm\epsilon v_1(S_e)$ to $X$ and a distinct minimum $X_e^+$.
Together these conditions define the transition tuple and barrier
\begin{equation}
 e=(X,S_e,X_e^+,\Delta E_e),\qquad
 \Delta E_e=V(S_e)-V(X)>0.
 \label{eq:app-physical-event}
\end{equation}
The main-text idealized map $\mathcal R_X$ encodes these target conditions.
The native Dimer/SEAKMC implementation uses force/step stopping, two-sided
relaxation, final flags, and cap checks as its operational acceptance
protocol (Appendix~\ref{app:rev-refinement}).

\paragraph{Search work in this benchmark.}
The main-text evaluation caps ASPIRE refinement at 300 translation steps
and random-start Dimer at 450. The predictor has $M=96$ slots; with judge
selection, fewer than $M$ proposals may be submitted. The random baseline
uses up to 80 searches per configuration. The slot count specifies the
proposal pool, the translation cap controls each search, and force-call
counts measure its computational work, including orientation updates.

The Dimer search-force ledger counts calls routed through its force
routine for recorded searches, including failed and duplicate searches.
It excludes model inference and force evaluations attributed to separate
setup and downhill relaxation. Wall time is reported separately. Dimer
separation, rotation limits, force tolerances, and step-size controls are
run-specific settings described in the numerical refinement protocol.

\subsection{Transition rates and kinetic Monte Carlo}
\label{app:rev-kmc-background}
\label{app:rev-kmc}

\paragraph{From a saddle to an event rate.}
Transition-state theory associates a rate with escape through a transition
state. Under its harmonic approximation, an event rate has Arrhenius form
\begin{equation}
 k_e=\nu_e\exp\!\left(-\frac{\Delta E_e}{k_{\mathrm B}T}\right).
 \label{eq:app-htst-rate}
\end{equation}
The prefactor $\nu_e$ accounts for the local vibrational contribution. For
$d$ unconstrained vibrational degrees of freedom after zero modes have
been removed, the classical harmonic expression is
\begin{equation}
 \nu_e=\frac1{2\pi}
 \frac{\prod_{i=1}^{d}\omega_i(X)}
      {\prod_{i=2}^{d}\omega_i(S_e)},
 \label{eq:app-vineyard-prefactor}
\end{equation}
where $\omega_i$ are angular frequencies from the mass-weighted Hessian
and the unstable saddle mode is omitted from the denominator
\citep{vineyard1957frequency}. The benchmark uses the common-prefactor
convention $k_e=\nu\exp[-\Delta E_e/(k_{\mathrm B}T)]$ for reference weights.
The common factor cancels from RWC and relative event probabilities;
absolute waiting times retain its scale.

\paragraph{State selection and the clock.}
The jump-process description assumes that relaxation within a metastable
basin is fast relative to escape, so that event rates can be treated as
constant while the system remains in that state. KMC advances the system
through discrete transitions using their event rates \citep{voter2007introduction}.
For a nonempty event set $\mathcal E_X$ with total rate
\begin{equation}
 \Lambda_X=\sum_{e\in\mathcal E_X}k_e,
 \label{eq:app-kmc-total-rate}
\end{equation}
a residence-time step draws independent
$u_1,u_2\sim\operatorname{Uniform}(0,1)$ and uses
\begin{equation}
 \begin{gathered}
 e^*=\min\!\left\{j:\sum_{e=1}^{j}k_e\ge u_1\Lambda_X\right\},
 \qquad X\leftarrow X_{e^*}^+,\\
 \Delta t=-\frac{\log u_2}{\Lambda_X}.
 \end{gathered}
 \label{eq:app-kmc-step}
\end{equation}
Event indices can be ordered arbitrarily for the cumulative sum.
The event probability is $k_e/\Lambda_X$, and the physical waiting time
is exponential with mean $1/\Lambda_X$. Event selection and clock advancement
use independent random draws \citep{voter2007introduction}.

\paragraph{On-the-fly discovery and SEAKMC.}
When the accessible products are unknown, an on-the-fly method first
constructs an event catalogue at the current state. SEAKMC combines defect
identification, local active-volume searches, transition processing, and
KMC event selection \citep{xu2012self,XU2015135}. After a selected transition,
the affected local environments and event catalogues are updated.
Repeated event discovery therefore motivates efficient proposal generation
across the configurations encountered during simulation.

ASPIRE targets the proposal-and-refinement stage of this process.
Its experiments evaluate distinct reference-event recovery and search
efficiency from fixed initial configurations. These measurements assess
the saddle-discovery component that supplies mechanisms for downstream
event-based modeling.

\paragraph{Why distinct events and rate mass both matter.}
Each distinct physical exit channel contributes to the total escape rate.
Repeated searches reaching one event count as one recovered identity, while
symmetry-related exits with equal rates can contribute separately.
Combining channels into a common catalogue entry therefore retains their
summed rates or multiplicities (Appendix~\ref{app:rev-dataset-event-distinction}).

For a fixed reference set and its recovered subset,
\begin{equation}
 \Lambda_X^{\mathrm{ref}}=\sum_{e\in\mathcal E_X^{\mathrm{ref}}}k_e,
 \qquad
 \Lambda_X^{\mathrm{rec}}=\sum_{e\in\mathcal E_X^{\mathrm{rec}}}k_e,
 \qquad
 \mathrm{RWC}_X=\frac{\Lambda_X^{\mathrm{rec}}}{\Lambda_X^{\mathrm{ref}}}.
 \label{eq:app-reference-rate-mass}
\end{equation}
Keeping the original reference rates, a one-step model restricted to the
recovered subset has mean waiting time larger by $1/\mathrm{RWC}_X$ when
$\mathrm{RWC}_X>0$. Also, a rate-weighted draw from the reference set belongs
to the recovered subset with probability $\mathrm{RWC}_X$. These are the
single-state interpretations used in the main text. An empty recovered
set has no finite escape time under that restricted model.

The benchmark measures distinct reference-event recovery and search work
across held-out states. Coverage summarizes recovered event identities,
while RWC quantifies their reference escape-rate mass. These metrics provide
complementary measures of the quality and efficiency of saddle discovery.

\subsection{Interface to the main-text learning problem}
\label{app:rev-background-interface}

The physical environment, fixed potential, and consistent Cartesian
representation define the input $X$ and targets $D_e=S_e-X$. The predictor
maps $X$ to $\{\widehat D_m\}_{m=1}^{M}$ without product endpoints.
The encoder's $X_a-X_b$, decoder's $X_a-b_m$, full SmoothL1 residual,
and reference matcher use this shared representation, connecting the
physical task to geometric learning and refinement.

\section{Dataset Construction and bcc-Symmetry Deduplication}
\label{app:rev-datasets}

\subsection{Physical construction and recorded examples}
\label{app:rev-dataset-records}

Both collections start from a periodic $40\times40\times40$ conventional-cell
bcc Fe block containing 128,000 atoms before vacancy removal. One to four
ideal lattice sites are removed, and the parent is relaxed through SEAKMC
under the M07 EAM potential. The lattice parameter is approximately
$a_0=2.85528\,\text{\AA}$ and the cubic box length is approximately
$114.2112\,\text{\AA}$; the stored cell specifies the exact geometry.
Forces and energies are evaluated with LAMMPS using \texttt{eam/fs} and
\texttt{M07\_eam.fs}. Positions and energies use \AA{} and eV,
respectively; the physical background is given in
Appendix~\ref{app:rev-fe-background}.

The learning input is the relaxed local active-volume (AV) array $X$ of
radius $11\,\text{\AA}$. Each example pairs $X$ with a variable-size,
unordered reference event set collected through 80 random-start Dimer
searches and SEAKMC validation, retaining barriers in
$[0.1,2.0]\,\mathrm{eV}$. Table~\ref{tab:app-dataset-summary} gives the
dataset sizes and their roles in this paper. All reported evaluations use
\textsc{BccFe-4vacAV-4000}; \textsc{BccFe-1to4Vac} is constructed for future
KMC trajectory simulation and analysis.

\begin{table}[t]
 \centering
 \small
 \setlength{\tabcolsep}{4pt}
 \caption{\textbf{Dataset records.} Each state is one retained local AV.
 The two collections use their respective AV-selection protocols and
 have complementary roles in evaluation and future trajectory studies.}
 \label{tab:app-dataset-summary}
 \begin{tabular}{@{}lrrcl@{}}
 \toprule
 Dataset & AV states & Events & Vacancies & Use \\
 \midrule
 \textsc{BccFe-4vacAV-4000} & 4,000 & 65,450 & 4 & Reported evaluation \\
 \textsc{BccFe-1to4Vac} & 5,372 & 83,372 & 1--4 & Future trajectory studies \\
 \bottomrule
 \end{tabular}
\end{table}

\paragraph{Reference-collection settings.}
The common generation settings are summarized in
Table~\ref{tab:app-dataset-generation-settings}. The active, buffer, and
fixed settings describe the physical AV construction; they are distinct
from the encoder's graph cutoff. With \texttt{SearchBuffer=False},
the search coordinates comprise the active region.

\begin{table}[t]
 \centering\small
 \caption{\textbf{Reference-catalogue generation settings.}
 Dataset collection uses the following protocol; evaluation settings
 are specified in Appendix~\ref{app:rev-refinement}.}
 \label{tab:app-dataset-generation-settings}
 \begin{tabular}{@{}p{0.57\linewidth}p{0.35\linewidth}@{}}
 \toprule
 Setting & Value \\
 \midrule
 Point-defect-reduction cutoff, \texttt{DCut4PDR} & $4.0\,\text{\AA}$ \\
 \texttt{DActive}, \texttt{DBuffer}, \texttt{DFixed} & $11,1,11\,\text{\AA}$ \\
 Dimer starts, \texttt{NSearch} & 80 per searched AV \\
 Buffer search, \texttt{SearchBuffer} & False \\
 Maximum translation iterations, \texttt{NMax4Trans} & 900 \\
 Trial / maximum step size & $0.015/0.05\,\text{\AA}$ \\
 Dimer separation & $0.005\,\text{\AA}$ \\
 Local relaxation & Enabled \\
 Barrier window & $[0.1,2.0]\,\mathrm{eV}$ \\
 Connectivity checking & Enabled \\
 Prefactor calculation, \texttt{CalPrefactor} & False \\
 \bottomrule
 \end{tabular}
\end{table}

Reference collection uses a 900-translation limit. The main-text
evaluation uses caps of 300 for ASPIRE refinement and 450 for conventional
Dimer (Appendix~\ref{app:rev-refinement}); search records belong to their
respective collection or evaluation stage.

\subsection{Four-vacancy benchmark construction}
\label{app:rev-4vac-construction}

\paragraph{Vacancy topology enumeration and parent sampling.}
For \textsc{BccFe-4vacAV-4000}, vacancies occupy ideal bcc sites represented
as $(a_0/2)(i,j,k)$, with the three integers either all even or all odd.
Candidate four-site patterns have maximum pairwise vacancy distance
strictly below $11\,\text{\AA}$. Identifying patterns under periodic lattice
translations and all 48 operations of the cubic point group $O_h$ gives
21,297 symmetry-distinct candidate topologies. The symmetry operations and
a mathematical canonical key are described below.

Parent sampling selects 4,000 configurations using seed \texttt{20260818}.
All candidates in the point-defect-reduction (PDR) fragmentation classes
$4$, $3+1$, and $2+2$ are retained; the remaining classes are sampled across
24 diameter bins. The selected class counts are given in
Table~\ref{tab:app-4vac-sampling}. The seed and stratification specify
parent selection for the 4,000-configuration corpus.

\begin{table}[t]
 \centering\small
 \caption{\textbf{Four-vacancy parent sampling.} Counts are for the
 recorded PDR fragmentation signatures used in dataset construction.}
 \label{tab:app-4vac-sampling}
 \begin{tabular}{@{}lr@{}}
 \toprule
 PDR fragmentation signature & Selected parents \\
 \midrule
 $4$ & 35 \\
 $3+1$ & 583 \\
 $2+2$ & 246 \\
 $2+1+1$ & 1,254 \\
 $1+1+1+1$ & 1,882 \\
 \midrule
 Total & 4,000 \\
 \bottomrule
 \end{tabular}
\end{table}

PDR denotes point-defect reduction in the defect-detection procedure.
The table records the parent-sampling categories; selection of the fixed
50-state evaluation subset is described separately in
Appendix~\ref{app:rev-eval50}.

\paragraph{One-AV Wigner--Seitz construction.}
For each relaxed parent, Wigner--Seitz (WS) detection against the full
perfect reference first determines the possible PDR groups. A target group
is selected that reproduces itself when the other vacancy markers are
hidden. The physical parent retains all four vacancies. The case-specific
WS reference is formed by removing the non-target vacancy sites from the
perfect reference, so only the target-group vacancies appear as WS
differences. This yields exactly one selected AV per parent, with all four
physical vacancies inside its active sphere. The WS matching radius is
$1.03785\,\text{\AA}$, obtained from
$\texttt{DCut4Def}\times\texttt{cutdefectmax}=0.3\times3.4595\,\text{\AA}$,
and \texttt{NPredef=0}.

Selecting one AV preserves its reference labels during collection.
The reduced WS reference defines defect detection while retaining the
physical parent and potential-energy surface. The resulting local-environment
corpus contains 65,450 connected events, with 2--43 per AV and a mean of 16.36.

\subsection{One-to-four-vacancy collection}
\label{app:rev-1to4-construction}

\paragraph{Candidate coverage.}
The initial candidate bank contains 8,192 ideal vacancy arrangements:
1, 95, 2,048, and 6,048 with one, two, three, and four vacancies,
respectively. Candidates are filtered under translations and all 48
$O_h$ operations. Coverage includes compact clusters, fragmented motifs,
spatially separated arrangements, and directly constructed vacancy-hop
endpoints. Hop-related descendants retain their source family assignment.
This designed candidate bank provides broad structural coverage for the
subsequent AV construction and release filtering.

\paragraph{Native CN construction and independent WS audit.}
Production AVs use native coordination-number (CN) defect detection,
\texttt{NPredef=1}, native PDR, and the shared $11/1/11\,\text{\AA}$ settings.
WS is run only on a deep copy of the relaxed full parent against the full
perfect reference to audit vacancy content. An eligible native AV must
contain all intended parent vacancies in its active region, have the
expected vacancy count, and have no WS interstitial. Eligibility is
evaluated at the native center against the full perfect reference;
parents without an eligible native AV are recorded as ineligible.

The primary AV maximizes the minimum vacancy-to-active-boundary margin,
with randomized ties within $0.25\,\text{\AA}$ bands. Approximately one ninth
of topology families may contribute a second distinct native AV, with at
most two searched AVs per parent. This policy builds a structurally diverse
collection with explicit parent and family provenance.

\paragraph{Full-geometry deduplication and event collection.}
Before Dimer search, complete active, buffer, and fixed coordinates are
compared under translations and all 48 cubic operations using
species-preserving bijective matching at $0.001\,\text{\AA}$ tolerance.
A shared registry prevents concurrent searches of congruent geometries.
Subset selection preserves the native coordinates, atom regions, AV
indices, and center provenance. Native per-AV search and postprocessing
use 80 starts, and labels are archived before cross-AV deletion. The
collection wrapper stops before KMC event selection and propagation.

The released collection contains 5,372 deduplicated AVs from 4,812 parent
IDs and 83,372 events, distributed as in
Table~\ref{tab:app-1to4-distribution}. The number of AV examples is therefore
different from the number of distinct parent IDs.

\begin{table}[t]
 \centering\small
 \caption{\textbf{One-to-four-vacancy release.}
 Counts summarize the final AV collection after construction and filtering.}
 \label{tab:app-1to4-distribution}
 \begin{tabular}{@{}rrr@{}}
 \toprule
 Vacancies & Released AVs & Events \\
 \midrule
 1 & 1 & 12 \\
 2 & 38 & 492 \\
 3 & 1,121 & 16,073 \\
 4 & 4,212 & 66,795 \\
 \midrule
 Total & 5,372 & 83,372 \\
 \bottomrule
 \end{tabular}
\end{table}

\subsection{Reference-event records and corpus distinction}
\label{app:rev-dataset-reference-records}

Each example groups the retained reference events by their shared initial
AV, providing direct supervision for the one-to-many prediction task. Core
labels include initial, saddle, and final positions; barriers; energy
biases; connectivity; and event/search identifiers. Vacancy topology, AV
selection, and parent/family metadata record provenance. Compatible atom
ordering and coordinate representations are retained when forming
$D_e=S_e-X$, as described in
Appendix~\ref{app:rev-coordinate-background}. Vacancy patterns record
construction provenance. Reference rates use the common-prefactor convention,
consistent with \texttt{CalPrefactor=False} during collection.

The full \textsc{BccFe-1to4Vac} representation retains complete AV geometry
and region masks. Its compact representation stores active-atom arrays
and keeps the static buffer/fixed coordinates once in
\texttt{context\_pos0}; original ordering and metadata are retained under
\texttt{source\_\_*}. The provided \texttt{compact\_io.load\_full\_av}
reader reconstructs the full representation, preserving the input $X$
and displacement fields used by the learning objective.

The 80-start collection protocol and barrier window define the observed
reference catalogue for each state. Deduplication retains distinct events,
and evaluation reports recovery relative to these catalogues.

The corpora provide complementary local-environment collections. The
four-vacancy benchmark has one WS-selected AV per parent; the one-to-four-vacancy
collection uses native-CN AVs with an independent WS audit and can retain
a second AV. Each corpus is deduplicated using its construction protocol.

\subsection{The 48 cubic operations used for configuration deduplication}
\label{app:rev-bcc48}

Configuration deduplication uses the 48 operations of the full cubic point
group $O_h=m\overline3m$, comprising 24 proper rotations and 24 improper operations. In conventional
cubic coordinates, it has the explicit matrix representation
\begin{equation}
 \mathcal G_{\mathrm{bcc}}
 =\left\{G=\operatorname{diag}(\varsigma_1,\varsigma_2,\varsigma_3)C_\pi:
 \varsigma_i\in\{-1,+1\},\ \pi\in\mathfrak S_3\right\},
 \qquad |\mathcal G_{\mathrm{bcc}}|=2^3\,3!=48,
 \label{eq:app-bcc48}
\end{equation}
where $C_\pi$ is a $3\times3$ coordinate-permutation matrix. This is
different from the $N\times N$ atom-permutation matrix $P$ used in the
main-text equivariance statement.

The six coordinate orders are
\begin{equation}
 \begin{gathered}
 (x,y,z),\ (x,z,y),\ (y,x,z),\\
 (y,z,x),\ (z,x,y),\ (z,y,x).
 \end{gathered}
 \label{eq:app-bcc-coordinate-orders}
\end{equation}
For each order, all eight independent sign choices are included. Every
matrix in Equation~\revEqref{eq:app-bcc48} is orthogonal. Its determinant is
$(\varsigma_1\varsigma_2\varsigma_3)\operatorname{sgn}(\pi)$, so 24 matrices
have determinant $+1$ and 24 have determinant $-1$. The latter include
reflections and inversion combined with rotations; for example,
$\operatorname{diag}(-1,1,1)$ reflects one coordinate and $-I$ is inversion.

These matrices preserve the bcc lattice: a coordinate permutation and sign
changes preserve the all-even or all-odd condition in
Equation~\revEqref{eq:app-bcc-integer}. The point group is finite, while
lattice translations belong to the space-group description and are handled
separately. A symmetric vacancy pattern can have fewer than 48 distinct
transformed representatives because multiple group operations may yield
the same unordered set.

\subsection{Equivalence of vacancy patterns and an explicit canonical key}
\label{app:rev-bcc-canonical}

Let $n_{\mathrm v}$ be the number of initial vacancies and let
$\mathcal V=\{v_1,\ldots,v_{n_{\mathrm v}}\}\subset\Lambda_{\mathrm{bcc}}$
be their unordered parent-lattice positions, represented as row vectors.
The vacancy-pattern reduction uses one common cubic operation and a
lattice translation. In a consistent local representation, the equivalence
criterion is
\begin{equation}
 \mathcal V'\sim\mathcal V
 \quad\Longleftrightarrow\quad
 \exists G\in\mathcal G_{\mathrm{bcc}},\ t\in\Lambda_{\mathrm{bcc}}:
 \mathcal V'=\{vG^\top+t:v\in\mathcal V\}.
 \label{eq:app-vacancy-equivalence}
\end{equation}
A common transform acts on the whole pattern and preserves its geometry.
Translation removal defines the comparison key; the stored AV retains its
native coordinate representation.

For completeness, one exact canonicalization of
Equation~\revEqref{eq:app-vacancy-equivalence} is as follows. Write
$z_i=2v_i/a_0\in\mathbb Z^3$. For each of the 48 matrices and each choice of
vacancy $j$ as a translation anchor, form the unordered relative set
\begin{equation}
 \mathcal Z_{G,j}
 =\{(z_i-z_j)G^\top:i=1,\ldots,n_{\mathrm v}\}.
 \label{eq:app-canonical-relative-set}
\end{equation}
Sort its rows lexicographically, concatenate the sorted triples, and take
the lexicographically smallest resulting integer tuple:
\begin{equation}
 \operatorname{key}_{\mathrm{bcc}}(\mathcal V)
 =\min_{\substack{G\in\mathcal G_{\mathrm{bcc}}\\1\le j\le n_{\mathrm v}}}
   \operatorname{lex}\!\left(
    \operatorname{sort}_{\mathrm{row}}(\mathcal Z_{G,j})\right).
 \label{eq:app-bcc-key}
\end{equation}
Sorting handles the absence of vacancy labels; enumeration of all anchors
removes dependence on which vacancy was chosen as the origin. For four
vacancies, this construction inspects at most $48\times4=192$ candidate
relative sets, accounting for all four choices of translation anchor.

To see why the key works, a lattice translation cancels in every
$z_i-z_j$, a vacancy reordering only permutes candidate anchors and sorted
rows, and applying a cubic operation permutes the 48 group elements by
composition. Hence equivalent patterns have equal keys. Conversely,
equality of two such sorted relative sets supplies a common cubic
transformation and translation between their parent vacancy positions.
This construction therefore realizes the stated equivalence without a
floating-point tolerance on ideal lattice coordinates.

Equation~\revEqref{eq:app-bcc-key} gives an explicit canonicalization of the
stated equivalence relation. A fixed-center implementation can enumerate
48 centered transforms using its chosen origin convention. The canonical
construction provides a mathematical reference for these implementations.

\paragraph{Local representation and relaxed coordinates.}
Before canonicalization, the parent vacancy sites are expressed in the
consistent local Cartesian representation described in
Appendix~\ref{app:rev-coordinate-background}.

The integer key acts on parent vacancy sites. The one-to-four-vacancy
collection additionally compares relaxed full-AV geometries at
$0.001\,\text{\AA}$ tolerance (Appendix~\ref{app:rev-1to4-construction}).
Together, canonical vacancy patterns and full-geometry checks distinguish
configuration identity at their respective stages.

\paragraph{Transforming a complete event record.}
When changing the representative of a configuration, the same spatial
operation and atom correspondence must be applied to its entire record.
For atom permutation $P$, cubic operation $G$, and row translation $t$,
\begin{equation}
 \begin{gathered}
 X'=PXG^\top+\mathbf1t,\qquad
 S'_e=PS_eG^\top+\mathbf1t,\\
 (X_e^+)'=PX_e^+G^\top+\mathbf1t,\qquad
 D'_e=PD_eG^\top.
 \end{gathered}
 \label{eq:app-event-transform}
\end{equation}
All atom-indexed masks and any physical boundary data must transform
consistently. Because the EAM expression depends on interatomic distances,
these transformations preserve its energies and barriers when the complete
physical environment is transformed consistently, as follows from
Equation~\revEqref{eq:app-eam}. The neural transformation law is established
separately in Appendix~\ref{app:rev-equivariance}.

\subsection{Configuration deduplication, event matching, and data splitting}
\label{app:rev-dataset-event-distinction}

\paragraph{Configuration deduplication.}
The 48-operation rule groups cubic-symmetry-equivalent initial vacancy
environments into configuration classes. Each retained state then has
its own reference set of saddle mechanisms and connected products.

\paragraph{Event identity within a fixed state.}
The main-text reference matcher compares events from the same represented
$X$ using their full atomwise displacements and barriers:
\begin{equation}
 \max_a\|D_{e,a}-D_{e',a}\|_2\le0.05\,\text{\AA},
 \qquad
 |\Delta E_e-\Delta E_{e'}|\le0.02\,\mathrm{eV}.
 \label{eq:app-current-event-match}
\end{equation}
This is the event-matching criterion in Section~\ref{sec:experiments},
with no additional minimization over cubic operations. It is distinct from both initial-configuration
deduplication and the moving-union RMS used for judge labels. During
recovery evaluation the reference catalogue stays fixed, an output receives
at most one matching reference identity, and repeated hits to that identity
count once. Fixed reference identities define the recovery denominator,
providing a consistent basis for comparing outputs across methods.

Retaining spatial exit channels preserves their contribution to escape
rates. For an isolated vacancy, nearest-neighbor exchanges in different
directions can be related by cubic symmetry and have equal barriers.
They move the vacancy to different positions and contribute separately
to the total escape rate. Event-level evaluation retains these channels
and their multiplicities.

\paragraph{Grouping for the four-vacancy train/monitor/test split.}
All targets sharing an initial state belong to the same split. The reported
\textsc{BccFe-4vacAV-4000} split contains 3,700 training, 100 monitoring,
and 200 held-out states. The 200-state recovery evaluation uses that test
split; the fixed 50-state ablation and wall-time subset is described in
Appendix~\ref{app:rev-eval50}. Both evaluation populations are drawn from
the four-vacancy test bank.

State-level grouping keeps the reference events of each configuration
together. Parent-corpus construction and assignment to training, monitoring,
and test partitions are separate data-preparation stages.

For a split intended to separate symmetry-equivalence classes, all
symmetry-equivalent copies must remain in one group. If
$\mathcal K_{\mathrm{train}}$, $\mathcal K_{\mathrm{monitor}}$, and
$\mathcal K_{\mathrm{test}}$ are the configuration-key sets, the integrity
condition is
\begin{equation}
 \mathcal K_{\mathrm{train}}\cap\mathcal K_{\mathrm{monitor}}
 =\mathcal K_{\mathrm{train}}\cap\mathcal K_{\mathrm{test}}
 =\mathcal K_{\mathrm{monitor}}\cap\mathcal K_{\mathrm{test}}
 =\varnothing.
 \label{eq:app-split-integrity}
\end{equation}
This criterion defines symmetry-level split integrity and can be checked
directly from the released configuration-key sets.

\paragraph{One-to-four-vacancy release assignments.}
The separate \textsc{BccFe-1to4Vac} release manifest supplies
4,171/543/658 train/validation/test AV assignments. The generation record
reports that no parent ID or family ID is shared across these partitions.
These release assignments support future studies using the broader vacancy
collection; the reported experiments use the four-vacancy split. The single
one-vacancy AV supplies an elementary reference case within the release.

\section{Architecture and Training Details}
\label{app:rev-architecture}

\subsection{Encoder}
\label{app:rev-encoder}
The encoder uses the represented configuration $X$ defined in the main text, after the background coordinate preparation in Appendix~\ref{app:rev-periodic-background}. A directed edge $b\to a$ is present for a nonzero separation within $6\,\text{\AA}$, with at most 128 neighbors per atom. Its features are
\begin{equation}
 \begin{gathered}
 r_{ba}=X_a-X_b,\qquad d_{ba}=\|r_{ba}\|_2,\qquad
 \widehat r_{ba}=r_{ba}/d_{ba},\\
 e_{ba}=\left(\phi(d_{ba}),\ \bigoplus_{\ell=0}^{2}Y^{(\ell)}(\widehat r_{ba})\right),
 \end{gathered}
 \label{eq:rev-edge-features}
\end{equation}
where $\phi$ is a 128-dimensional Gaussian radial expansion and the spherical harmonics extend through degree two. Equation~\revEqref{eq:rev-edge-features} uses the prepared Cartesian edge vector directly for graph inclusion, distances, and angular features.

Species embeddings are augmented by an equivariant edge-degree embedding. Every atomic representation contains
\begin{equation}
 h_a=h_a^{(0)}\oplus h_a^{(1)}\oplus h_a^{(2)},\qquad
 h_a^{(\ell)}\in\mathbb R^{C_\ell\times(2\ell+1)},\qquad
 (C_0,C_1,C_2)=(128,64,32).
 \label{eq:rev-node-features}
\end{equation}
The last axis transforms under the degree-$\ell$ representation of $R\in\mathrm{SO}(3)$, giving $128+3(64)+5(32)=480$ components per atom. All benchmark atoms are Fe.

Suppressing the layer index, each of the six Equiformer blocks constructs an edge representation using equivariant source and destination projections and a radially conditioned depthwise tensor product \citep{liao2022equiformer}:
\begin{equation}
 t_{ba}=\operatorname{TP}_{\psi(\phi(d_{ba}))}
 \left(W_{\mathrm{src}}h_b+W_{\mathrm{dst}}h_a,\ Y(\widehat r_{ba})\right).
 \label{eq:rev-edge-tp}
\end{equation}
Here $\psi$ predicts invariant tensor-product weights. For each of four heads, scalar channels provide an invariant logit, while equivariant projection, gating, and a further separable tensor product provide the value $v_{ba}^{(h)}$:
\begin{equation}
 \begin{gathered}
 s_{ba}^{(h)}=w_h^\top\operatorname{act}(P_h^{(0)}t_{ba}),\qquad
 \alpha_{ba}^{(h)}=\operatorname{softmax}_{b\in\mathcal N(a)}s_{ba}^{(h)},\\
 m_a^{(h)}=\sum_{b\in\mathcal N(a)}\alpha_{ba}^{(h)}v_{ba}^{(h)}.
 \end{gathered}
 \label{eq:rev-encoder-attn}
\end{equation}
The heads are concatenated and equivariantly projected. Normalization, residual connections, and an equivariant feed-forward update complete the block. Attention-weight dropout is $0.2$ during training; Equation~\revEqref{eq:rev-encoder-attn} gives the pre-dropout weights. The output features used in the main text are $u_a\in\mathbb R^{128}$ and $V_a\in\mathbb R^{64\times3}$.

The equivariance result in Appendix~\ref{app:rev-equivariance} assumes atom-reindexing-compatible graph construction and neighbor truncation, including a compatible convention when the neighbor cap splits a distance tie.

\subsection{Slot refinement and cross-attention}
\label{app:rev-slot-details}
The main model has $M=96$, $d_s=256$, and $d_v=16$. We use the main-text stage index throughout: initialization is $t=0$, the two vector--scalar stages are $t=1,2$, the three scalar-only stages are $t=3,4,5$, and the final readout is $T=6$. The state entering initialization is the learned scalar seed $s_m^{(0)}=s_m$ and the zero vector state $Q_m^{(0)}=0$. Each slot acquires directional information through attention-weighted pooling of the encoded atomic vectors.

\paragraph{Initialization.}
The optional slot self-attention in Equation~\revEqref{eq:rev-slot-self} acts in place on the scalar seeds. It is enabled for the full-model setting and disabled for the cross-attention-only ablation in Table~\ref{tab:ablation}. With these scalar states, the initialization attention and vector readout are
\begin{equation}
 \begin{gathered}
 A^{(0,1)}_{ma}=\operatorname{softmax}_a\!
 \left[\frac{(W_qs_m^{(0)})^\top(W_ku_a^{(0)})}{\sqrt{d_s}}\right],\\
 U_m^{(0,1)}=\sum_aA^{(0,1)}_{ma}W_VV_a,\qquad
 s_m^{(1)}=s_m^{(0)},\qquad Q_m^{(1)}=U_m^{(0,1)}.
 \end{gathered}
 \label{eq:rev-slot-init}
\end{equation}
Here $u_a^{(0)}$ is an MLP embedding $128\to256\to256$ of $u_a$. The zero initial vector state and $\gamma^{(0,1)}=0$ reduce the main-text template to this scalar-attention bootstrap.

\paragraph{Two vector--scalar updates.}
For $t=1,2$, write $u_a^{(t)}=u_a^{(0)}$. The shared scalar maps are $W_q,W_k,W_z$, and the shared atomic vector maps are $W_K,W_V$. The slot-vector maps, scalar update MLPs, vector update maps, and compatibility coefficients are stage specific. Omitting only the single head index, the main-text attention becomes
\begin{equation}
 A_{ma}^{(t)}=\operatorname{softmax}_a\!\left[
 \frac{(W_qs_m^{(t)})^\top(W_ku_a^{(t)})}{\sqrt{d_s}}
 +\frac{\gamma^{(t)}}{\sqrt{3d_v}}
 \left\langle\widetilde W_Q^{(t)}Q_m^{(t)},W_KV_a\right\rangle_F
 \right].
 \label{eq:rev-slot-refine-attn}
\end{equation}
The maps $W_q,W_k,W_z,W_K,W_V$ are shared with initialization where applicable. Initial logits use the scalar term with zero vector-compatibility coefficient, while each update-stage $\gamma^{(t)}$ starts at one. The contexts and updates are
\begin{equation}
 \begin{aligned}
 z_m^{(t)}&=\sum_aA_{ma}^{(t)}W_zu_a^{(t)},&
 U_m^{(t)}&=\sum_aA_{ma}^{(t)}W_VV_a,\\
 s_m^{(t+1)}&=s_m^{(t)}+
 F^{(t)}\!\left(s_m^{(t)},z_m^{(t)},
 (\|U_{m,v}^{(t)}\|_2)_{v=1}^{d_v}\right),&
 Q_m^{(t+1)}&=Q_m^{(t)}+W_U^{(t)}U_m^{(t)}.
 \end{aligned}
 \label{eq:rev-slot-refine-update}
\end{equation}
The scalar update is a two-layer MLP. The invariant vector norms communicate geometric information to scalar fingerprints; vector-channel mixing preserves orientation. The scalar query/key/value maps in these stages and all vector-channel maps are bias free.

\paragraph{Scalar-only updates and readout.}
The three scalar-only stages use an independent embedding $u'_a\in\mathbb R^{256}$ from another $128\to256\to256$ MLP, with eight heads, pre-normalization, and the residual feed-forward update of Equation~\revEqref{eq:rev-slot-mha-update}. They leave $Q_m^{(t)}$ unchanged. In the attention notation of the main text, $\hat s_m=\mathrm N^{(t)}(s_m^{(t)})$ and $\hat u_a=\mathrm N^{(t)}(u_a^{(t)})$, with the appropriate separately parameterized normalizations; $\mathrm N^{(t)}$ is identity in the unnormalized stages and layer normalization in the scalar-only stages.

The final single-head readout at $T=6$ has its own $W_q,W_k,W_z,\widetilde W_Q,\widetilde W_V$. It recomputes attention with $\gamma^{(T)}=1$ and returns $A_{ma}^{(T)},z_m^{(T)},s_m^{(T)},Q_m^{(T)}$ for decoding, retaining the final slot states. Stage-specific vector maps mix channels identically across the three spatial components and use zero spatial bias.

\subsection{Decoder details}
\label{app:rev-decoder-details}
Omitting the final stage index as in the main text, the anchor is
\begin{equation}
 b_m=\sum_aA_{ma}X_a,\qquad \sum_aA_{ma}=1.
 \label{eq:rev-slot-anchor}
\end{equation}
The offset $X_a-b_m$ encodes each atom's position relative to the attention-weighted anchor. Computing this mean in the prepared local Cartesian environment gives the decoder a geometry-conditioned spatial reference.

The atom-derived generators are the 32 rows of $W_{\mathrm{atom}}V_a$, and the slot-derived generators are the eight rows of $W_{\mathrm{slot}}Q_m$. Higher-order encoder channels contribute through equivariant message passing. Together with the anchor-relative direction, they form the 41-row vector bank $\mathcal B_{ma}$ in Equation~\revEqref{eq:rev-bases}, combining three sources of geometric information for displacement synthesis.

The decoder uses the invariant conditioning
\begin{equation}
 \begin{gathered}
 \xi_{ma}=\left[s_m,z_m,u'_a,\chi_{ma},
          \log(1+\|X_a-b_m\|_2^2)\right],\\
 \chi_{ma}=\left(\frac{1}{\sqrt3}
    \left\langle(G_QQ_m)_g,(G_VV_a)_g\right\rangle\right)_{g=1}^8.
 \end{gathered}
 \label{eq:rev-decoder-features}
\end{equation}
The coordinates are numerically expressed in the fixed input length unit. With $s_m,z_m,u'_a\in\mathbb R^{256}$, $\xi_{ma}\in\mathbb R^{777}$. The maps $G_Q$ and $G_V$ have channel widths $16\to8$ and $64\to8$, respectively.

The four MLPs follow Equation~\revEqref{eq:rev-decoder-coef}. Gate and scale networks $g_{\mathrm{act}},g_{\mathrm{mag}}$ have widths $777\to256\to1$; coefficient networks $f_{\mathrm{coef}},f_{\mathrm{res}}$ have widths $777\to256\to256\to41$. Hidden activations are SiLU, and the outputs are $p_{ma}$, $\kappa_{ma}$, $q_{ma}$, and $q_{ma}^{\mathrm{res}}$.

The output layers of $g_{\mathrm{act}},g_{\mathrm{mag}},f_{\mathrm{res}}$ start with zero weights and biases $-2,-3.9,0$, respectively. The learned residual scale $\gamma_{\mathrm{res}}=\sigma(\vartheta)$ starts at $0.05$. Initial participation is $\sigma(-2)\approx0.12$, the magnitude parameter is $\operatorname{softplus}(-3.9)\approx0.02\,\text{\AA}$, and the residual output is zero. The realized displacement magnitude also depends on the directional combination.

The residual lies outside the activity gate in Equation~\revEqref{eq:rev-decoder}, allowing small-participation atoms to express surrounding relaxation. Radial normalization uses $v/\max(\|v\|_2,\epsilon)$ with $\epsilon=10^{-8}$, giving norm $\min(1,\|v\|_2/\epsilon)$ and zero output at $v=0$. This stabilized geometric operation preserves the rotation law used in the equivariance proof.

\section{Displacement Supervision and Active-Aware Matching}
\label{app:rev-training}

\subsection{Displacement fields and the uniform objective}
For each initial configuration $X\in\mathbb R^{N\times3}$, the predictor
produces $M$ displacement fields $\widehat D_m$, supervised by $K$ reference
fields $D_e=S_e-X$. All fields share atom correspondence and Cartesian units
of $\text{\AA}$. Training uses the full residual $\widehat D_m-D_e$ directly;
inference takes the initial configuration as input.

The scalar SmoothL1 penalty and its all-atom reduction are
\begin{equation}
 \rho_\beta(t)=
 \begin{cases}
 t^2/(2\beta),& |t|<\beta,\\
 |t|-\beta/2,& |t|\ge\beta,
 \end{cases}
 \qquad
 \ell_\beta(E)=\frac1N\sum_{a=1}^N\sum_{d=1}^3\rho_\beta(E_{ad}).
 \label{eq:rev-smoothl1}
\end{equation}
The objective sums the three coordinate penalties and averages over all
$N$ atoms, including those with zero reference displacement. The main
Uniform model uses $C_{me}=\ell_\beta(\widehat D_m-D_e)$ for both assignment
and matched regression, with $\beta=0.03\,\text{\AA}$.

\subsection{The active-aware ablation}
\label{app:rev-active}
The active-aware ablation changes both the proposal--reference assignment
cost and matched displacement loss. For each reference event, define
\begin{equation}
 \begin{gathered}
 g_{e,a}=\|D_{e,a}\|_2,\qquad
 a_{ma}=\|\widehat D_{m,a}\|_2,\qquad
 a_e^*=\operatorname*{arg\,max}_a g_{e,a},\\
 \mathcal A_e=
 \begin{cases}
 \{a:g_{e,a}\ge\delta_{\mathrm{act}}\},
     &\text{if this set is nonempty},\\
 \{a_e^*\},&\text{otherwise},
 \end{cases}
 \qquad \delta_{\mathrm{act}}=0.05\,\text{\AA}.
 \end{gathered}
 \label{eq:app-active-set}
\end{equation}
The active set is defined from each event's reference displacement.
The fallback ensures a nonempty set, with tied maxima resolved by the first
maximum in stored atom order.

Let the coordinatewise penalty at one atom be
\[
 E_{me,a}=\sum_{d=1}^3
     \rho_\beta(\widehat D_{m,ad}-D_{e,ad}).
\]
The five enabled components are
\begin{equation}
 \begin{aligned}
 \ell_{\mathrm{all},me}
   &=\frac1N\sum_a E_{me,a},
 &\ell_{A,me}
   &=\frac1{|\mathcal A_e|}\sum_{a\in\mathcal A_e}E_{me,a},\\
 \ell_{\max,me}&=E_{me,a_e^*},
 &\ell_{\mathrm{mag},me}
   &=\frac1{|\mathcal A_e|}\sum_{a\in\mathcal A_e}
          \rho_\beta(a_{ma}-g_{e,a}),\\
 \ell_{\cos,me}
   &=\frac1{|\mathcal A_e|}\sum_{a\in\mathcal A_e}
       \omega_{m,a}\bigl[1-
        \cos_{\epsilon_{\cos}}(\widehat D_{m,a},D_{e,a})\bigr].
 \end{aligned}
 \label{eq:rev-active-components}
\end{equation}
Here the cosine and its detached amplitude gate are
\begin{equation}
 \begin{split}
 \cos_{\epsilon_{\cos}}(u,v)
 &=\frac{\langle u,v\rangle}
 {\max(\|u\|_2,\epsilon_{\cos})
  \max(\|v\|_2,\epsilon_{\cos})},
 \qquad\epsilon_{\cos}=10^{-4},\\
 \omega_{m,a}
 &=\operatorname{stopgrad}\!\left[
       \operatorname{clip}\!\left(
       \frac{a_{ma}}{\max(\delta_{\mathrm{act}},10^{-6})},0,1\right)
       \right].
 \end{split}
 \label{eq:app-active-direction-gate}
\end{equation}
The stabilizers use the supplied length convention. The detached gate
reduces directional supervision near a zero prediction. Each active mean
is normalized by its event's active-atom count, and the maximum-atom term
evaluates the atom with the largest reference displacement.

The enabled pair objective is
\begin{equation}
 C^{\mathrm{active}}_{me}
 =\ell_{\mathrm{active},me}
 =0.25\ell_{\mathrm{all},me}
  +\ell_{A,me}+\ell_{\max,me}
  +0.5\ell_{\mathrm{mag},me}+0.1\ell_{\cos,me}.
 \label{eq:rev-active-loss}
\end{equation}
The all-atom term remains present at reduced weight. The active variant
retains $\beta=0.03\,\text{\AA}$ and the same sequential matching and
configuration-level aggregation as Uniform. The
$0.05\,\text{\AA}$ training activity cutoff is distinct from the judge's
$0.3\,\text{\AA}$ moving-atom cutoff and its score threshold $\tau$.

\subsection{Sequential assignment and loss aggregation}
Here $K=|\mathcal E_X^{\mathrm{ref}}|$ counts reference targets, following
the training notation of Section~\ref{sec:rev-loss}.
Let $\mathcal O_r$ denote the slots available at round $r$, starting with
$\mathcal O_1=\{1,\ldots,M\}$. For a nonempty reference set, define the
rectangular one-to-one assignment family
\begin{equation}
 \begin{split}
 \mathfrak M(\mathcal O_r,[K])
 =\{\Pi\subseteq\mathcal O_r\times[K]:\;&
       |\Pi|=\min(|\mathcal O_r|,K),\\
 &\deg_\Pi(m)\le1,\quad\deg_\Pi(e)\le1\}.
 \end{split}
 \label{eq:rev-match-feasible}
\end{equation}
At each round, Hungarian matching solves
\begin{equation}
 \Pi_r\in\operatorname*{arg\,min}_{\Pi\in\mathfrak M(\mathcal O_r,[K])}
       \sum_{(m,e)\in\Pi}C_{me},\qquad
 \mathcal O_{r+1}=\mathcal O_r\setminus
       \{m:(m,e)\in\Pi_r\text{ for some }e\}.
 \label{eq:app-sequential-assignment}
\end{equation}
Use $C_{me}=\ell_{\mathrm{all},me}$ for Uniform and
$C_{me}=C^{\mathrm{active}}_{me}$ for the active-aware ablation. The detached
cost matrix determines the discrete assignment; regression then propagates
gradients through the predicted fields of the selected pairs.

The procedure performs at most $R_{\mathrm{match}}=4$ rounds, stopping
when no slots remain. The full reference set is available again at every
round. A slot is used at most once, whereas each target can receive one
slot per round. With finite pair costs and $K>0$, the total number of
matched pairs is
\begin{equation}
 \sum_r|\Pi_r|=\min(M,R_{\mathrm{match}}K).
 \label{eq:app-number-matches}
\end{equation}
The number of supervised slots adapts to the reference-set size: for
example, $K=2$ gives eight matched slots when $M=96$. When $M\ge K$, the
first round covers every target, and subsequent rounds assign the remaining
slots against the full reference set. Each round solves its own residual
assignment problem.

\paragraph{Primary, duplicate, and batch normalization.}
Write $\Pi_+=\bigcup_{r\ge2}\Pi_r$. Using the chosen pair objective
$C_{me}$, define
\begin{equation}
 L_1=\frac1{|\Pi_1|}\sum_{(m,e)\in\Pi_1}C_{me},\qquad
 L_+=
 \begin{cases}
 |\Pi_+|^{-1}\displaystyle\sum_{(m,e)\in\Pi_+}C_{me},&|\Pi_+|>0,\\
 0,&|\Pi_+|=0.
 \end{cases}
 \label{eq:app-primary-duplicate-means}
\end{equation}
The configuration loss uses the main-text notation $\mathcal L_x$ and aggregation
\begin{equation}
 \mathcal L_x=\lambda_D(L_1+\lambda_+L_+),
 \qquad\lambda_D=10,\quad\lambda_+=0.5.
 \label{eq:app-config-displacement-loss}
\end{equation}
The primary and duplicate terms are averaged separately before applying
the $0.5$ duplicate weight. All later-round pairs contribute to the common
duplicate mean. With one matching round or an empty duplicate set, $L_+=0$;
an empty reference set gives zero loss. Batch reduction weights each
configuration equally.

Repeated supervision gives a reference event multiple assigned proposals
across rounds. Assigned pairs provide displacement gradients through the
shared predictor, while distinct event recovery is measured after Dimer
refinement and reference matching.

\subsection{Predictor optimization and checkpoint scope}
The predictor training recipe uses AdamW with learning rate
$8\times10^{-5}$, weight decay $10^{-5}$, batch size two, gradient clipping
at five, and a 120-epoch cosine schedule. The data split comprises 3,700
training, 100 monitoring, and 200 held-out configurations. Monitoring-set
first-round displacement loss selects the predictor checkpoint. Downstream
recovery and search-force metrics evaluate its physical-search performance.
Appendix~\ref{app:rev-judge} specifies the frozen Full-model checkpoint and
monitoring procedure for the judge.

\section{Initial-State-Only Neural RMS Judge}
\label{app:rev-judge}

\subsection{Frozen predictor and inference inputs}
The neural judge scores proposals using only the initial configuration,
its frozen encoder features, and the complete predicted displacement
pool. For the Full-model pairing described here, the predictor checkpoint is
epoch 103, loaded with strict parameter matching and kept in evaluation
mode. Its scalar features $u_a\in\mathbb R^{128}$, vector channels
$V_a\in\mathbb R^{64\times3}$, and proposals $\widehat D_m$ are detached
from judge optimization. This frozen-feature readout scores the proposal
pool without force evaluations. Reference displacements supply offline
training and monitoring labels; inference uses the initial state and
its predicted proposals.

\subsection{Atom features, pooling, and network widths}
Let $a_{ma}=\|\widehat D_{m,a}\|_2$. To avoid confusing the judge feature
with the decoder coefficients $q_{ma}$, denote it by $\zeta_{ma}$:
\begin{equation}
 \zeta_{ma}=\left[
 u_a,\;
 \bigl(\langle\widehat D_{m,a},V_{a,v}\rangle\bigr)_{v=1}^{64},\;
 a_{ma},\;\log(1+a_{ma})\right]\in\mathbb R^{194}.
 \label{eq:rev-judge-feat}
\end{equation}
Numerical amplitudes in logarithms are expressed in the fixed
$\text{\AA}$ convention. A shared atom network produces
$h_{ma}^{J}=\phi_J(\zeta_{ma})\in\mathbb R^{128}$, where
\[
 \phi_J:
 194\xrightarrow{\mathrm{Linear}}192
 \xrightarrow{\mathrm{SiLU},\;\mathrm{Dropout}(0.2)}192
 \xrightarrow{\mathrm{Linear}}128
 \xrightarrow{\mathrm{SiLU}}128.
\]
A learned affine map $w_J^\top h+b_J$ gives the attention score. The
complete readout is
\begin{equation}
 \begin{gathered}
 \alpha^J_{ma}=\operatorname{softmax}_{a}
   \bigl[w_J^\top h^J_{ma}+b_J+3\log(1+a_{ma})\bigr],\\
 v_m^J=\left[\sum_a\alpha^J_{ma}h^J_{ma},\;
                    \max_a h^J_{ma},\;g_m\right]\in\mathbb R^{266},\\
 t_m^J=\psi_J(v_m^J),\qquad \widehat p_m=\sigma(t_m^J),
 \end{gathered}
 \label{eq:rev-judge-score}
\end{equation}
where the maximum is channelwise and $g_m\in\mathbb R^{10}$ is defined
below. The output network is
\[
 \psi_J:
 266\xrightarrow{\mathrm{Linear}}128
 \xrightarrow{\mathrm{SiLU},\;\mathrm{Dropout}(0.2)}128
 \xrightarrow{\mathrm{Linear}}1.
\]
Both dropout layers are disabled for scoring. The head's sigmoid output
provides the score used for threshold selection.

\subsection{The ten geometric descriptors, in implementation order}
All descriptors are computed \emph{before} filtering, from the full
proposal pool of the configuration. Define
\[
 a_m^{\max}=\max_a a_{ma},\qquad
 j_m^*=\operatorname*{arg\,max}_a a_{ma},\qquad
 \mathcal U_m=\{a:a_{ma}\ge0.3\,\text{\AA}\}.
\]
The descriptor vector is
\[
 g_m=(f_{\mathrm{mmax}},f_{\mathrm{band}},f_{\mathrm{njump}},
 f_{\mathrm{nhalo}},f_{\mathrm{nclust}},f_{\mathrm{second}},
 f_{\mathrm{sum}},f_{\mathrm{red}},f_{\mathrm{rank}},f_{\mathrm{coher}})_m.
\]
The following definitions specify the raw features supplied to the head;
no feature standardization is inserted.

\paragraph{Amplitude and atom counts (features 1--4 and 7).}
\begin{equation}
 \begin{aligned}
 f_{\mathrm{mmax},m}&=a_m^{\max},\\
 f_{\mathrm{band},m}&=\max\{0,\;0.8\,\text{\AA}-a_m^{\max},\;
                                  a_m^{\max}-1.6\,\text{\AA}\},\\
 f_{\mathrm{njump},m}&=\bigl|\{a:a_{ma}\ge0.5\,\text{\AA}\}\bigr|,\\
 f_{\mathrm{nhalo},m}&=\bigl|\{a:0.05\,\text{\AA}\le a_{ma}
                                      <0.5\,\text{\AA}\}\bigr|,\\
 f_{\mathrm{sum},m}&=\sum_a a_{ma}.
 \end{aligned}
 \label{eq:app-judge-amplitude-features}
\end{equation}
The count features retain absolute atom counts, and $f_{\mathrm{sum},m}$ sums their displacement amplitudes.

\paragraph{Moving-atom clusters (features 5--6).}
Construct an undirected graph on $\mathcal U_m$, connecting two moving
atoms when their \emph{initial} positions are separated by at most
$3\,\text{\AA}$. Let $\mathcal C_{m,1},\ldots,\mathcal C_{m,c_m}$ be its
connected components and let
$P_{m,j}=\max_{a\in\mathcal C_{m,j}}a_{ma}$ be their peak amplitudes.
Then $f_{\mathrm{nclust},m}=c_m$ and $f_{\mathrm{second},m}$ is the
second-largest component peak. An empty moving set gives $(0,0)$, and
fewer than two components gives $f_{\mathrm{second},m}=0$.
The epoch-103 judge computes distances as $\|X_a-X_b\|_2$ in the
prepared nonperiodic local-coordinate representation of the active volume.

\paragraph{Pool-relative redundancy and rank (features 8--9).}
Let
$\mathcal H_m=\{n\ne m:a_n^{\max}>a_m^{\max}\}$.
The redundancy distance is
\begin{equation}
 f_{\mathrm{red},m}=
 \begin{cases}
 \displaystyle\min_{n\in\mathcal H_m}
 \left(\frac1N\sum_a
       \|\widehat D_{m,a}-\widehat D_{n,a}\|_2^2\right)^{1/2},
       &\mathcal H_m\ne\varnothing,\\
 5\,\text{\AA},&\mathcal H_m=\varnothing.
 \end{cases}
 \label{eq:app-judge-redundancy}
\end{equation}
This feature measures all-atom Cartesian RMS separation within the
proposal pool. The strict $>$ test selects higher-amplitude proposals.
Rank is the zero-based position in the descending sort of $a_m^{\max}$,
ranging from $0$ to $M-1$, with ties following the array-sort ordering.

\paragraph{Directional coherence (feature 10).}
When $|\mathcal U_m|>1$ and $a_m^{\max}>0$,
\begin{equation}
 f_{\mathrm{coher},m}=
 \frac1{|\mathcal U_m|}\sum_{a\in\mathcal U_m}
 \left\langle
  \frac{\widehat D_{m,a}}{\max(a_{ma},10^{-8})},\;
  \frac{\widehat D_{m,j_m^*}}{a_m^{\max}}
 \right\rangle;
 \label{eq:app-judge-coherence}
\end{equation}
otherwise it equals one. The mean includes the maximum-displacement atom
itself. $j_m^*$ is the first maximum in the stored atom order.

\paragraph{Geometric invariance and tie conventions.}
Norms, dot products, local distances, and atom pooling preserve rotations
under the predictor's transformation laws. Exact atom-permutation invariance
also requires compatible handling of tied maximum amplitudes in coherence;
proposal-rank ties follow pool order. The redundancy and rank features
provide context from the complete proposal pool, so scores reflect both
individual geometry and the alternatives proposed for the same state.

\subsection{Offline labels and judge training}
For each proposal--reference pair, use the main-text moving union
\[
 U_{me}=\{a:\max(\|\widehat D_{m,a}\|_2,\|D_{e,a}\|_2)
                           \ge0.3\,\text{\AA}\}.
\]
For nonempty unions, compute
\begin{equation}
 r^U_{me}=\left(\frac1{|U_{me}|}\sum_{a\in U_{me}}
         \|\widehat D_{m,a}-D_{e,a}\|_2^2\right)^{1/2},\qquad
 d^U_{me}=\max_{a\in U_{me}}
         \|\widehat D_{m,a}-D_{e,a}\|_2.
 \label{eq:app-judge-label-errors}
\end{equation}
Empty unions are ineligible, equivalently having infinite RMS. First
select the reference $e_m^*$ of smallest $r^U_{me}$, keeping the first
reference in catalogue order at a tie. Then set
\begin{equation}
 y_m=\mathbf1\{r^U_{m e_m^*}<0.35\,\text{\AA}
              \text{ and }d^U_{m e_m^*}<0.5\,\text{\AA}\}.
 \label{eq:app-judge-binary-label}
\end{equation}
If no eligible reference exists, $y_m=0$. Both tests are strict and apply
to the RMS-nearest reference selected above. These geometric-accuracy
labels train proposal screening; the final event evaluation uses the
all-atom displacement and barrier criteria.

\paragraph{Head training and checkpoint selection.}
Gradient updates use training configurations only, as stated in
Section~\ref{sec:rev-judge}. The stage-1 head-training recipe uses the
first 200 configurations of \texttt{train\_strict} for those updates
and the first 50 monitoring configurations for head selection; no
held-out test configurations enter either role. One optimizer step processes all slots of one configuration,
and the binary cross-entropy is averaged over those slots.

Let $\pi_+$ be the positive-label fraction obtained by concatenating all
training labels. The positive-class weight is
\begin{equation}
 w_+=\frac{1-\pi_+}{\max(\pi_+,0.001)},\qquad 0<\pi_+<1.
 \label{eq:app-judge-class-weight}
\end{equation}
For one configuration, the logit loss is
\[
 \mathcal L_J=-\frac1M\sum_m
 \left[w_+y_m\log\sigma(t_m^J)
 +(1-y_m)\log(1-\sigma(t_m^J))\right].
\]
The implementation uses binary cross-entropy with logits for numerical
stability. The head uses AdamW with learning rate $8\times10^{-4}$ and
weight decay $10^{-4}$. Torch and Python random seeds are reset to zero,
and training configurations are shuffled each epoch. Training lasts at
most 40 epochs. After each epoch, monitoring ROC AUC is computed from the
concatenated proposal scores and labels. Strictly improved AUC saves the
head; six consecutive non-improving epochs stop training. The best saved
head is restored with strict parameter matching and placed in evaluation
mode.

\subsection{Deployment and proposal identifiers}
Compute the ten descriptors on the complete proposal pool, evaluate the
head, and refine exactly the indices
\begin{equation}
 I_X(\tau)=\{m:\widehat p_m\ge\tau\}.
 \label{eq:app-judge-selection}
\end{equation}
Selection retains the original slot/search identifiers and uses descriptors
computed on the full pool. The default score threshold $\tau=10^{-5}$ is
used in the 50-configuration wall-time comparison
(Figure~\ref{fig:rev-walltime}). The 200-configuration recovery curves vary
this threshold, while the ablations in Table~\ref{tab:ablation} disable
selection. The sigmoid readout is used as a geometric proposal-selection
score, with physical recovery assessed after refinement.

\section{Native Dimer Refinement and Validation}
\label{app:rev-dimer}
\label{app:rev-refinement}

\subsection{Enabled path and coordinate convention}
The physical search uses the native SEAKMC Dimer implementation on the
fixed interatomic potential. The geometric background of Dimer is given
in Appendix~\ref{app:rev-dimer-background}; here we specify the supplied
numerical path. Searches use Cartesian active-atom coordinates with the
native fixed/buffer surroundings. The options \texttt{SNC},
\texttt{ApplyMass}, \texttt{SearchBuffer}, \texttt{LocalRefine}, and
\texttt{EarlyStop} are disabled, while \texttt{LocalRelax} is enabled.
The main Dimer search refines every submitted proposal;
\texttt{LocalRefine=False} specifies that its additional local-refinement
branch is disabled.

The paper writes atomic fields as $N\times3$ arrays. The native search
stores their transpose as $3\times N$ arrays. Below, $y$ and $n$ denote
the vectorized active-coordinate center and orientation, respectively,
and $f(y)=-\nabla V(y)$ denotes the physical force in that same coordinate
space. The potential includes fixed surroundings, while $n$ spans the
active coordinates. Native magnitude operations use
$\operatorname{mag}_{\mathrm{native}}$, corresponding to the
\texttt{mat\_mag} helper and its stopping convention.

\subsection{Random axes, proposal loading, and orientation reset}
For a random-start search, the native initial-axis sampler draws an array
with independent entries from $[-0.5,0.5)$. The wrapper sets
\begin{equation}
 \operatorname{seed}(c,i)
 =(2026091400+10000c+i)\bmod 2^{32},
 \label{eq:app-dimer-random-seed}
\end{equation}
before the native draw, where $c$ is the configuration ordinal and $i$
is the search identifier. Angle checking against previous axes follows
the native settings, completing the initial-axis sampling protocol.

For a learned or transferred field $D\in\mathbb R^{N\times3}$, the
bridge transposes the full field for preload and forms a separate
translation-removed direction. In paper notation,
\begin{equation}
 \bar D=\frac1N\sum_a D_a,\qquad
 \widetilde n(D)=
 \frac{\operatorname{vec}(D-\mathbf1\bar D)}
      {\|D-\mathbf1\bar D\|_F}.
 \label{eq:app-dimer-preload-axis}
\end{equation}
The helper accepts finite fields whose translation-removed norm exceeds
$10^{-12}$. Mean removal defines the axis; the complete field $D$ preloads
the candidate center $X+D$.

Native loading pads or restricts the field to the configured active
coordinates and applies any type/fixed-coordinate scaling. Write its
actual center field as $D^{\mathrm{exec}}$, and let $n_{\mathrm{pre}}$
be the native axis immediately before loading. The bridge supplies
$\widetilde n(D)$; any intervening native axis handling remains governed
by the job settings. Let $\mathsf S$ denote the native type-scaling
operator on the axis, or the identity when no such scaling is enabled.
The loading/reset rule is
\begin{equation}
 n_0=
 \begin{cases}
 \operatorname{unit}\!\left(
 \mathsf S\bigl[r_{\mathrm{reset}}n_{\mathrm{pre}}
                  +\operatorname{vec}(D^{\mathrm{exec}})\bigr]\right),
       &\texttt{ResetVN04Preload=True},\\
 \operatorname{unit}(\mathsf S n_{\mathrm{pre}}),&\text{otherwise}.
 \end{cases}
 \label{eq:app-dimer-axis-reset}
\end{equation}
The configured coefficient is $r_{\mathrm{reset}}=0.2$ in the
implementation's numerical $\text{\AA}$ convention. In the unscaled
case with no intervening axis change, this reduces to normalizing
$0.2\widetilde n(D)+\operatorname{vec}(D)$. This initializes the orientation
from the proposal. Subsequent centering, rescaling, and angle controls
follow the resolved job settings.

On an admissible, already represented displacement field $D$, the
complete initialization in the conditional analysis is written
\begin{equation}
 \mathcal I_X(D)=\bigl(X+D,\;n_0(D)\bigr).
 \label{eq:rev-initializer}
\end{equation}
Here $n_0(D)$ includes the configured orientation processing. The admissible
domain $\mathcal D_X$ enforces coordinate constraints and a finite, nonzero
processed orientation. The recovery analysis applies to the executed field
and this configured initialization through its loading condition and
local-success assumption.

\subsection{Curvature estimation, rotation, and force projection}
The native implementation reuses the center force to estimate endpoint
curvature. At center $y$, let $f_0=f(y)$ and $f_1=f(y+h_{\mathrm{dim}}n)$.
With \texttt{LocalRefine=False}, it computes
\begin{equation}
 f_2=2f_0-f_1,\qquad
 c=\frac{\langle f_2-f_1,n\rangle}{2h_{\mathrm{dim}}},\qquad
 f_{\mathrm{rot}}
 =\frac{(f_1-f_2)-n\langle f_1-f_2,n\rangle}
        {2h_{\mathrm{dim}}}.
 \label{eq:app-native-curvature}
\end{equation}
The reused-force estimate $f_2$ supplies the opposite-endpoint approximation.
The curvature $c$ and tangential force drive the native orientation update,
with separation $h_{\mathrm{dim}}=0.005\,\text{\AA}$.

\paragraph{Native rotation and force projection.}
The rotation update maintains a conjugate direction, with a restart based
on the current and previous rotation forces. Its coefficient has the form
\begin{equation}
 \gamma(f,f_{\mathrm{old}})=
 \begin{cases}
 \displaystyle\frac{\langle f,f-f_{\mathrm{old}}\rangle}
                      {\langle f_{\mathrm{old}},f_{\mathrm{old}}\rangle},
 &\begin{gathered}
 |\langle f,f_{\mathrm{old}}\rangle|
       \le\tfrac12\langle f_{\mathrm{old}},f_{\mathrm{old}}\rangle,\\[-2pt]
 \langle f_{\mathrm{old}},f_{\mathrm{old}}\rangle\ne0,
 \end{gathered}\\
 0,&\text{otherwise}.
 \end{cases}
 \label{eq:app-native-conjugate-coefficient}
\end{equation}
For rotation, the accumulated direction is normalized to a search vector
$u$. A forward angular sample updates the orientation and search vector
by
\[
 (n,u)\longmapsto
 \bigl(n\cos\theta+u\sin\theta,\;
       u\cos\theta-n\sin\theta\bigr),\qquad\theta=\pi/4.
\]
Writing the directional rotation-force samples as $r_1,r_2$, the return
stage applies the incremental angle
\begin{equation}
 \theta_{\mathrm{inc}}=
 \begin{cases}
 -\tfrac12\arctan(r_1/r_2),&r_2\ne0,\\
 -\tfrac12\pi,&r_2=0
 \end{cases}
 +\tfrac12\pi\,\mathbf1\{r_2>0\}-\tfrac14\pi.
 \label{eq:app-native-rotation-angle}
\end{equation}
Native normalization and enabled axis controls follow this update.
Rotation-force tests and the rotation counter use
\texttt{FMin4Rot}, \texttt{FThres4Rot}, and \texttt{NMax4Rot}.

For translation, write $f_{\parallel}=n\langle f_0,n\rangle$.
The effective force is curvature dependent:
\begin{equation}
 f_{\mathrm{eff}}=
 \begin{cases}
 f_0-2f_{\parallel},&c<0,\\
 -f_{\parallel},&c\ge0\text{ and the horizon mode is off},\\
 f_0-f_{\parallel},&c\ge0\text{ and the horizon mode is on}.
 \end{cases}
 \label{eq:app-native-projected-force}
\end{equation}
The horizon switch follows the native curvature, energy, and initial-well
logic, selecting the appropriate positive-curvature branch.

\subsection{Two-stage translation and fine-step controls}
The translation direction is also updated by a restarted conjugate-force
rule using Equation~\revEqref{eq:app-native-conjugate-coefficient} and is
normalized to $u$. Let $t_f$ and $t_m$ be the current trial and maximal
translation settings. The forward stage moves the current center by
$t_fu$. With effective forces before and after that trial, define
\[
 a_1=\langle f_{\mathrm{before}},u\rangle,\qquad
 a_2=\langle f_{\mathrm{trial}},u\rangle,\qquad
 c_t=\frac{a_1-a_2}{t_f}.
\]
The second stage adds a further displacement $t_cu$. Its native rule is
\begin{equation}
 \begin{gathered}
 t_* = \frac{(a_1+a_2)/2}{c_t+10^{-12}}
       \quad\text{when }c_t\ge0,\\
 t_c=
 \begin{cases}
 t_m,&c_t<0,\\
 t_m-t_f,&c_t\ge0,\ |t_*|>t_m,\ t_*\ge0,\\
 t_f-t_m,&c_t\ge0,\ |t_*|>t_m,\ t_*<0,\\
 t_*-t_f/2,&c_t\ge0,\ |t_*|\le t_m.
 \end{cases}
 \end{gathered}
 \label{eq:app-native-translation}
\end{equation}
The combined center displacement is $(t_f+t_c)u$, with $t_c$ determined by
the trial forces and the configured maximal-step parameter in this
two-stage update.

A preloaded search uses the native \texttt{FineStep} settings. The
supplied ordinary trial/maximal defaults are
$0.015/0.05\,\text{\AA}$, while the fine-step defaults are
$0.003/0.01\,\text{\AA}$. When the decay style is not fixed, the source
updates them in the form
\[
 t_f(j)=t_f^{(0)}d(j)+t_{\min},\qquad
 t_m(j)=t_m^{(0)}d(j)+t_{\min}r_{\min},
\]
with the corresponding ordinary or fine-step parameter family.
The decay factor is either
$d(j)=d_0^{\lfloor j/J\rfloor}$ or $d_0^{j/J}$ under the configured
style. The resolved job settings select the ordinary or fine-step family
and its decay parameters for each search.

\begin{table}[t]
 \centering\small
 \caption{\textbf{Native numerical controls and defaults.}
 Defaults identify the implementation's parameter families. Each experiment
 uses its resolved settings and the reported translation cap.}
 \label{tab:app-native-settings}
 \begin{tabular}{@{}p{0.38\linewidth}p{0.22\linewidth}p{0.32\linewidth}@{}}
 \toprule
 Quantity / native key & Value & Scope \\
 \midrule
 \texttt{DimerSep} & $0.005\,\text{\AA}$ & supplied setting/default \\
 Ordinary trial / maximal step & $0.015/0.05\,\text{\AA}$ & Random defaults \\
 Fine trial / maximal step & $0.003/0.01\,\text{\AA}$ & preload defaults \\
 \texttt{FConv} & $10^{-6}$ & native force-magnitude threshold \\
 \texttt{FMin4Rot} / \texttt{FThres4Rot} & $0.001/0.1$ & native rotation tests \\
 \texttt{NMax4Rot} & $3$ & native rotation counter setting \\
 \texttt{Tol4Connect} & $0.1\,\text{\AA}$ & maximum coordinate component \\
 \texttt{NMax4Trans} & $300$ & ML and event reuse \\
 \texttt{NMax4Trans} & $450$ & Random, up to 80 starts \\
 \bottomrule
 \end{tabular}
\end{table}

\subsection{Search loop, stopping, and cap handling}
The loop calls \texttt{dimer\_force} before the first preload to establish
the initial energy. At the first iteration, an available preload is then
loaded, the runner is prepared again, and another
\texttt{dimer\_force} call evaluates the preloaded configuration.
Subsequent iterations alternate the native rotation and translation
states. Rotation completion advances the native \texttt{NTSITR} counter,
which supplies the recorded translation count.

In the enabled branch, the force-convergence flag \texttt{FO\_CON} uses
\[
 \operatorname{mag}_{\mathrm{native}}(\texttt{FC})
       <\texttt{FConv},
\]
after the appropriate projection/update, and \texttt{ST\_CON} requires
\texttt{NTSITR > IgnoreTransSteps}. The search converges when both flags
hold. The enabled branch uses these force and step flags for convergence;
energy convergence is recorded separately. Native relaxation and
connectivity checks complete the acceptance protocol.

The cap check is applied when
\texttt{NTSITR >= NMax4Trans}; it sets the maximum-translation flag and
terminates the search. Cap-reaching outputs are invalidated in native
finalization. The downstream scorer also requires a translation count
strictly below the configured cap. Optional native displacement, barrier,
and angle checks remain governed by the job configuration.

\subsection{Native two-sided relaxation and connectivity}
With \texttt{LocalRelax} enabled and the additional refinement/early-stop
branches disabled, the two relaxation seeds are constructed from the
last search segment. In row-coordinate notation, let $Y_c$ denote
\texttt{XCR}, $Y_0$ denote \texttt{X0}, and $r_{\mathrm{rel}}$ denote
\texttt{DRATIO}. The seeds are
\begin{equation}
 Z^{(0)}_+=Y_0+r_{\mathrm{rel}}(Y_c-Y_0),\qquad
 Z^{(0)}_-=Y_0-r_{\mathrm{rel}}(Y_c-Y_0).
 \label{eq:app-native-downhill-seeds}
\end{equation}
Each seed is relaxed by the native \texttt{SPSRELAX} runner with the
configured fixed surroundings. Relaxed active coordinates are shifted
back relative to the initial configuration by the native periodic
reboxing convention. Thus the final search segment provides both seeds
for the native two-sided connectivity test.


The implementation first identifies the farther and nearer relaxed
branches using their overall native displacement magnitude from $X$.
For those selected branches, it then computes
\[
 d_{\mathrm{far}}=\max_{a,d}|(Z_{\mathrm{far}}-X)_{ad}|,\qquad
 d_{\mathrm{near}}=\max_{a,d}|(Z_{\mathrm{near}}-X)_{ad}|.
\]
If $d_{\mathrm{far}}<\texttt{Tol4Connect}$, it marks the event invalid
and disconnected. Otherwise the event is connected when
$d_{\mathrm{near}}<\texttt{Tol4Connect}$; when this latter test fails,
the relaxation routine can return a valid but disconnected result.
Final native barrier and displacement cuts can additionally invalidate
an output. Connectivity uses a maximum coordinate-component tolerance;
reference matching uses the all-atom Euclidean displacement gate.

\subsection{Returned saddle geometry and the numerical map}
The wrapper returns the final active saddle coordinates from
\texttt{XCR.T}, the recorded barrier \texttt{BARR}, native valid and
connected flags, the cap flag, translation count, and search-force
count. The saddle field for scoring is
\begin{equation}
 S_{\mathrm{out}}=\texttt{saddle\_active\_coords},\qquad
 D_{\mathrm{out}}=S_{\mathrm{out}}-X.
 \label{eq:app-native-recorded-saddle}
\end{equation}
Scoring uses this saved saddle geometry and its associated barrier.
Native \texttt{XDISP} records the displacement of a selected downhill branch
seed during finalization.

Let $\mathcal R_X^{\mathrm{num}}$ denote this configured initialization,
search, and native acceptance pipeline. It accepts a configuration
$X+D$ and returns the numerical saddle output when the native flags and
cap checks pass, or $\bot$ otherwise. For completed search indices $I$,
\begin{equation}
 A_I=\{\mathcal R_X^{\mathrm{num}}(X+D_m^{\mathrm{exec}}):
       m\in I,\ \mathcal R_X^{\mathrm{num}}(X+D_m^{\mathrm{exec}})
                    \ne\bot\}.
 \label{eq:rev-discoveries}
\end{equation}
The main-text ideal map specifies the saddle and connectivity target.
The numerical map implements the configured search and native acceptance
checks. Its connection to reference recovery is formalized by the local-success
assumption for this map and the final reference matcher.

\subsection{Search-force accounting and run configuration}
Each entry to \texttt{Dimer.dimer\_force} increments the search counter
once per routine invocation. The ledger covers preloading and search work,
including failed, duplicate, and capped searches with completed records.
LAMMPS initialization and native \texttt{SPSRELAX} force calls lie outside
this routine.

Reproduction uses the native helpers together with each run's resolved
step-size and decay settings, axis controls, fixed-coordinate masks,
displacement cuts, and timer boundaries. The specifications above document
the numerical path and default parameter families.

\stopcontents[appendixoutline]
\resumecontents[appendixoutlinecontinued]
\section{KDB Event-Reuse Adaptation}
\label{app:rev-kdb}
\label{app:rev-reuse}

\subsection{Stored mechanisms and the scope of the baseline}
The KDB comparison adapts the kinetic-database approach
\citep{terrell2012database}. Insertion and querying retain the upstream
geometric kernel at commit \texttt{a03dd2f}, including periodic clumping,
graph mapping, feasibility tests, and overlap checks.

The template library is built exclusively from \texttt{train\_strict}.
A stored mechanism contains its training reactant $X^{\mathrm{lib}}$,
saddle $S^{\mathrm{lib}}$, product $X^{+,\mathrm{lib}}$, mobile-atom
indices, periodic clump orders, barriers, and source/event identifiers.
The in-memory adapter retains every successfully inserted template and
records failed insertions separately. Candidate deduplication is performed
after querying as specified below.

KDB-494 denotes 494 stored event templates. Every template is queried in
both directions: one directed entry uses the training reactant as its
minimum, and the other uses the training product. Both share the stored
saddle and swap their minimum/product roles and clump order. Thus the
494-template library provides 988 directed query entries. Reverse barriers
equal the forward barrier minus the stored product energy bias. Lookup and
ranking use the target initial state and the training-library records.

\subsection{Mobile atoms and the retained local template}
Let $\operatorname{MIC}_H$ denote the periodic displacement convention
for the stored cell $H$, with row-vector coordinates as in the background.
For each training atom, define
\begin{equation}
 \begin{split}
 m_a=\max\{&
 \|\operatorname{MIC}_H(S_a^{\mathrm{lib}}-X_a^{\mathrm{lib}})\|_2,\;
 \|\operatorname{MIC}_H(S_a^{\mathrm{lib}}-X_a^{+,\mathrm{lib}})\|_2,\\
 &\|\operatorname{MIC}_H(X_a^{+,\mathrm{lib}}-X_a^{\mathrm{lib}})\|_2\}.
 \end{split}
 \label{eq:app-kdb-mobile}
\end{equation}
The mobile set consists of atoms with $m_a>0.7\,\text{\AA}$. If it is
empty, insertion retains the atom with largest reactant-to-product
displacement. The local template retains the mobile atoms and their
neighbors in any of the reactant, saddle, or product states. The neighbor
threshold uses the upstream elemental radii:
\begin{equation}
 r_{ab}^{\mathrm{neigh}}
 =(r_a^{\mathrm{elem}}+r_b^{\mathrm{elem}})(1+n_f),
 \qquad n_f=0.2.
 \label{eq:app-kdb-neighbor-cutoff}
\end{equation}
The upstream routine applies its periodic unwrapping/clumping procedure
and size checks using these elemental radii.

\subsection{Graph matching and geometric screening}
For the target initial state $X$, the upstream kernel builds the reactant
graph and searches subgraph mappings for the directed library entries.
Each surviving mapping is subjected to the upstream periodic ordered
clump construction, geometric alignment, and feasibility/overlap checks.

The acceleration layer caches target distances and applies the same
necessary pair-distance condition earlier during mapping. For a newly
mapped atom and each already mapped atom, at least one of the upstream
periodic-image distances must satisfy
\begin{equation}
 \bigl|d_{\mathrm{target}}^{(j)}-d_{\mathrm{template}}\bigr|
           <2d_c,
 \qquad d_c=0.3\,\text{\AA}.
 \label{eq:app-kdb-prefilter}
\end{equation}
The image index $j$ ranges over the upstream
\texttt{atomAtomPermDistance} candidates. Caching and JIT execution
accelerate this pair-distance check while preserving its image candidates;
surviving mappings pass the full \texttt{feasibility()} path.

\paragraph{Orthogonal alignment and the geometric gate.}
After correspondence and periodic clumping have been established, let
$A$ and $B$ be the centered template-minimum and mapped target-coordinate
matrices. Compute
\begin{equation}
 A^\top B=U\Sigma W^\top,\qquad \mathsf O=UW^\top.
 \label{eq:app-kdb-svd}
\end{equation}
The orthogonal map $\mathsf O$ acts on row vectors from the right and
permits both rotations and reflections when aligning a stored template
to the query configuration.

The alignment is computed by an SVD least-squares fit and then screened
by the maximum atomwise residual,
\begin{equation}
 d_{\mathrm{fit}}=
       \max_a\|(A\mathsf O-B)_a\|_2\le d_c.
 \label{eq:app-kdb-fit}
\end{equation}
The SVD supplies the least-squares alignment, and $d_{\mathrm{fit}}$ supplies
the maximum-residual query score and acceptance gate.


For a nearly coplanar template, the kernel also tries the normal-flipped
alignment. The coplanarity test is
\[
 \frac{\sigma_{\min}(A^\top A)}{n_{\mathrm{tpl}}}<d_c^2,
\]
where $n_{\mathrm{tpl}}$ is the retained template size. The alternative
is obtained by changing the sign of the final row of $W^\top$ before
forming $\mathsf O$. Alignments passing the distance gate subsequently
undergo the upstream product/mobile overlap checks.

\subsection{Aligned state transfer and displacement extraction}
Let $\mu_{\mathrm{tpl}}$ and $\mu_X$ be the fitted template-minimum and
mapped-target centroids, and let $\iota(a)$ be the target atom mapped to
template atom $a$. For a surviving unconstrained mapping, the aligned
saddle coordinates are placed into a copy of the target initial state:
\begin{equation}
 \widetilde S_{\iota(a)}
 =(S_a^{\mathrm{lib}}-\mu_{\mathrm{tpl}})\mathsf O+\mu_X,
 \qquad
 \widetilde S_b=X_b\quad\text{for unfilled atoms }b.
 \label{eq:app-kdb-transfer}
\end{equation}
The training product is transferred in the same way. Native constraints,
periodic reboxing, and overlap checks remain part of the upstream path.
The wrapper extracts the saddle and product displacement fields relative
to $X$. In the supplied cell representation, its extraction is
\begin{equation}
 u=\widetilde S-X,\qquad
 D^{\mathrm{cand}}=u-\operatorname{rint}(uH^{-1})H,
 \label{eq:app-kdb-displacement-extraction}
\end{equation}
and analogously for the product. This fractional-rounding extraction uses
the supplied cell representation.

Before the final reboxing, the mapped saddle displacement decomposes as
\begin{equation}
 \begin{split}
 \widetilde S_{\iota(a)}-X_{\iota(a)}
 ={}&(S_a^{\mathrm{lib}}-X_a^{\mathrm{lib}})\mathsf O\\
 &+\bigl[(X_a^{\mathrm{lib}}-\mu_{\mathrm{tpl}})\mathsf O
                      +\mu_X-X_{\iota(a)}\bigr].
 \end{split}
 \label{eq:app-kdb-fit-residual}
\end{equation}
The bracket is the initial-state fitting residual. KDB coordinate transfer
therefore combines the transformed stored displacement with this alignment
correction. KeyBank uses the direct displacement transfer described below.

\subsection{Global candidate deduplication and Dimer submission}
Entry-internal deduplication is disabled in the query path.
After all directed entries have been queried, candidates are sorted by
increasing geometric fit score and greedily deduplicated. Let
$D^{\mathrm{cand}}$ and $D^{+,\mathrm{cand}}$ be one candidate's saddle
and product displacements. It is discarded when an already retained
candidate $j$ satisfies both
\begin{equation}
 \max_a\|D^{\mathrm{cand}}_a-D^{\mathrm{cand}}_{j,a}\|_2
                    \le0.3\,\text{\AA},\qquad
 \max_a\|D^{+,\mathrm{cand}}_a-D^{+,\mathrm{cand}}_{j,a}\|_2
                    \le0.3\,\text{\AA}.
 \label{eq:app-reuse-deduplication}
\end{equation}
These fields use the target atom correspondence. Score-ordered greedy
deduplication compares each candidate against retained candidates, using
the transferred saddle and product predictions from the training library.
Source, event, and direction identifiers are preserved.

Retained saddle displacements initialize native Dimer searches with a
300-translation cap. Physical acceptance and final recovery use the common
protocol, including the all-atom $0.05\,\text{\AA}$ displacement and
$0.02\,\mathrm{eV}$ barrier gates. Library size, directed entries, returned
proposals, and recovered references summarize successive pipeline stages.

\section{Rooted-Topology KeyBank Adaptation}
\label{app:rev-keybank}

\subsection{Rooted graphs and directed library entries}
KeyBank adapts the rooted-topology catalogue idea of k-ART
\citep{beland2011kinetic} to retrieve proposals for the common native Dimer
pipeline. KeyBank-32 and KeyBank-128 use libraries built from 32 and 128
\texttt{train\_strict} configurations, respectively.

For a root atom $r$ in a configuration, retain
\begin{equation}
 \mathcal N_r=\{a:\|X_a-X_r\|_2\le5\,\text{\AA}\},\qquad
 p_a=X_a-X_r\quad(a\in\mathcal N_r).
 \label{eq:app-keybank-root-neighborhood}
\end{equation}
The local graph has an atom-species/root label
$(z_a,\mathbf1\{a=r\})$ on each vertex and an undirected edge when
\begin{equation}
 \|p_a-p_b\|_2<3.145\,\text{\AA},\qquad a\ne b.
 \label{eq:app-keybank-bond}
\end{equation}
The neighborhood radius is inclusive and the bond cutoff is strict.
A four-iteration Weisfeiler--Lehman hash with node labels identifies lookup
buckets; exact graph mappings and geometric checks then resolve candidates.

KeyBank operates directly on local active-volume coordinates, with the
stated assumption that the represented neighborhoods remain within a
consistent cell image.

\paragraph{Forward and reverse library entries.}
For every training event, construct a forward entry with initial state
$X^{\mathrm{lib}}$ and a reverse entry with initial state
$X^{+,\mathrm{lib}}$. In either direction, let $X^{\mathrm{dir}}$ be the
chosen initial state, $X^{+,\mathrm{dir}}$ the opposite minimum, and
$S^{\mathrm{lib}}$ the shared saddle. The full directed fields are
\[
 D^{\mathrm{lib}}=S^{\mathrm{lib}}-X^{\mathrm{dir}},\qquad
 D^{+,\mathrm{lib}}=X^{+,\mathrm{dir}}-X^{\mathrm{dir}}.
\]
Choose the training root as the atom with largest
$\|D_a^{\mathrm{lib}}\|_2$, using the stored-order first maximum at a tie.
The rooted graph is constructed from the directed initial state $X^{\mathrm{dir}}$.
An entry is retained only when the saddle-displacement tail outside its
$5\,\text{\AA}$ neighborhood satisfies
\begin{equation}
 \max_{a\notin\mathcal N_r}\|D_a^{\mathrm{lib}}\|_2
                         <0.3\,\text{\AA},
 \label{eq:app-keybank-tail}
\end{equation}
with an empty tail assigned value zero. Entries with equality at the
threshold are rejected. The stored record contains local initial
coordinates, the colored graph, saddle and product displacements within
the neighborhood, source/event identifiers, direction, and tail maximum.
The entry filter applies the stated saddle-displacement tail criterion.

\subsection{Exact graph mappings and necessary distance checks}
At query time, \emph{every} target atom is considered as a root. Target
root selection uses no reference moving atoms, saddle, or product.
For entries in the corresponding WL bucket, exact colored graph
isomorphism is tested using the VF2 matcher. The root label
must match the root label and chemical species must match.

The geometric matcher additionally prunes a candidate correspondence
$(i,j)$ against every already mapped pair $(a,b)$ unless
\begin{equation}
 \bigl|\|p_i-p_a\|_2-\|q_j-q_b\|_2\bigr|
                              \le0.600001\,\text{\AA}.
 \label{eq:app-keybank-pair-pruning}
\end{equation}
The $0.6\,\text{\AA}$ scale is twice the $0.3\,\text{\AA}$ atomwise
fit tolerance, with the implementation's small numerical allowance.
This pair-distance filter precedes the final fit gate. Candidate vertices
are ordered by root status, root-relative distance, and index, followed by
exact colored-isomorphism and geometric tests.

\subsection{Orthogonal fit and direct displacement transfer}
For an exact graph mapping, reorder the target local coordinates to the
template order. Subtract each local point set's own centroid, obtaining
$A$ and $B$, and form
\[
 A^\top B=U\Sigma W^\top,\qquad \mathsf O=UW^\top.
\]
The orthogonal fit permits both rotations and reflections. Its acceptance
criterion is
\begin{equation}
 \max_a\|(A\mathsf O-B)_a\|_2\le0.3\,\text{\AA}.
 \label{eq:app-keybank-fit}
\end{equation}
The maximum residual on the least-squares fit serves as both acceptance
criterion and candidate score.

KeyBank transfers stored displacement vectors by the fitted orthogonal
map. With target indices $\iota(a)$,
\begin{equation}
 \begin{aligned}
 D^{\mathrm{cand}}_{\iota(a)}&=D_a^{\mathrm{lib}}\mathsf O,&
 D^{+,\mathrm{cand}}_{\iota(a)}&=D_a^{+,\mathrm{lib}}\mathsf O,\\
 D^{\mathrm{cand}}_b&=0,&D^{+,\mathrm{cand}}_b&=0
       \quad\text{for all unfilled target atoms }b.
 \end{aligned}
 \label{eq:app-keybank-transfer}
\end{equation}
These fields directly transfer the library motion to matched target atoms,
with zero displacement assigned to the remaining atoms.

\subsection{Mapping limit, sorting, and candidate deduplication}
The query has a global mapping-enumeration protection limit of
100,000 per target configuration, accumulated across roots and library
entries. The counter increments for each yielded graph mapping before
the alignment gate. When it exceeds the limit, enumeration stops and
\texttt{truncated=True} records that retrieval ended at the mapping limit.

Candidates are sorted lexicographically by geometric fit score, source,
event, reverse-direction flag, and query root. They are then greedily
deduplicated against already retained candidates using both the saddle
and product conditions in
Equation~\revEqref{eq:app-reuse-deduplication}, with
$0.3\,\text{\AA}$ for each. The transferred library product provides the
product field used in these candidate-to-candidate comparisons.

Retained fields initialize native Dimer searches capped at 300 translations.
Native acceptance and common reference matching determine recovery using
the all-atom $0.05\,\text{\AA}$ displacement and $0.02\,\mathrm{eV}$ barrier
criteria. Retrieval uses the graph, fit, tail, and deduplication settings above.

\section{Shared Multiscale Embedding of Search Trajectories}
\label{app:rev-trajectory-mds}

\paragraph{Complementary views of the energy landscape and search paths.}
Figure~\ref{fig:rev-recovery-cost}(a) presents the Sammon-projected energy
background with initial, saddle, and product configurations. Panel~(b)
compares ASPIRE and conventional Dimer trajectories in one shared MDS
coordinate system. The multiscale construction for panel~(b) is detailed below.

\subsection{Illustrative sample and one shared coordinate system}
Figure~\ref{fig:rev-recovery-cost}(b) embeds 33 saved successful searches:
17 ASPIRE and 16 Random trajectories, totaling 4,994 frames. This selected
sample visualizes search geometry; the quantitative recovery and cost
experiments use their separately specified evaluation populations.

Both methods, all retained frames, and the geometric landmarks share one
embedding with fixed atom identities and ordering. For a trajectory with
stored initial configuration $X^{(\ell)}$, a single translation
\[
 t_\ell=\frac1N\sum_a(X_a-X_a^{(\ell)})
\]
aligns that initial configuration to the shared native state $X$.
Agreement is checked after the shift, which is then applied unchanged
to every full-atom frame of that trajectory.

\subsection{Configuration features, landmarks, and radial preprocessing}
For an aligned configuration $Z$, define
\begin{equation}
 \mathbf f(Z)=\frac{\operatorname{vec}(Z-X)}{\sqrt N}.
 \label{eq:app-mds-cartesian-features}
\end{equation}
Consequently,
\begin{equation}
 \|\mathbf f(Z_i)-\mathbf f(Z_j)\|_2
 =\left(\frac1N\sum_a\|Z_{i,a}-Z_{j,a}\|_2^2\right)^{1/2}.
 \label{eq:app-mds-rms-metric}
\end{equation}
This metric is the Cartesian RMS over all corresponding atoms.
Coordinates and energies retain their recorded atom correspondence.

The reference list contains the native initial state, followed by
41 separately supplied landmarks: another initial-state record,
20 saddles, and 20 products. In the code's zero-based reference list,
\texttt{refs[2:22]} selects the 20 saddle landmarks. Exactly equal feature
rows across landmarks and trajectories share one two-dimensional variable,
with first-occurrence ordering preserved. Associated energy records are
retained separately for coloring.

\paragraph{Radial compression before dimensionality reduction.}
Let the reference radius be the median saddle distance from the initial
configuration in the full feature space,
\begin{equation}
 r_{\mathrm{ref}}
 =\operatorname{median}_{e=1}^{20}
       \|\mathbf f(S_e^{\mathrm{landmark}})\|_2,
 \qquad r_{\mathrm{map}}=0.4r_{\mathrm{ref}}.
 \label{eq:app-mds-reference-radius}
\end{equation}
The displayed selected map first applies the radial transformation
\begin{equation}
 \mathbf G(\mathbf f)=
 \begin{cases}
 \displaystyle r_{\mathrm{map}}
  \operatorname{asinh}\!\left(\frac{\|\mathbf f\|_2}{r_{\mathrm{map}}}\right)
  \frac{\mathbf f}{\|\mathbf f\|_2},&\|\mathbf f\|_2>0,\\
 0,&\mathbf f=0.
 \end{cases}
 \label{eq:app-mds-radial-map}
\end{equation}
The common radial transformation acts in the full $3N$-dimensional feature
space before the shared two-dimensional fit. Fitting targets are distances
between the transformed features $\mathbf G_i$. The display thus provides
a compressed geometric view of the trajectories, with axes expressed in
arbitrary units and physical search performance reported by the recovery
and force-call metrics.

\subsection{Six classes of distance constraints}
Let the deduplicated feature nodes be indexed by $i$. The edge collection
is constructed once from the original features $\mathbf f_i$ with random
seed 20260924, and is kept fixed when changing the fitted metric. It
contains the following six classes.

\paragraph{Global pairs.}
Generate 60,000 pairs by sampling a trajectory uniformly and then a
saved frame uniformly within that trajectory, independently for the
two endpoints. Self-pairs are removed. Uniform trajectory selection balances
the contribution of the 33 selected successful searches at this stage.

\paragraph{Frame--reference pairs.}
Connect every saved frame of each trajectory to all reference-list
entries. Each trajectory forms a separate normalized block with initial
importance $1/33$.

\paragraph{Adjacent-step pairs.}
For each trajectory, connect all consecutive saved frames. The input
check requires consecutive recorded translation-step indices. Each
trajectory's adjacent-step block has initial importance $1/33$.

\paragraph{Temporal-window pairs.}
For each trajectory and each lag in $\{2,4,8,16,32\}$ shorter than the
trajectory, connect frames at that lag. Each available block has initial
importance $1/(33\times5)$.

\paragraph{Reference--reference pairs.}
Include all unordered pairs of reference-list entries, removing pairs
whose node identifiers are identical. The block has initial importance
one.

\paragraph{Original-space nearest neighbors.}
Connect each deduplicated feature node to five nearest neighbors in the
original Euclidean feature space. This couples nearby configurations
across trajectories and methods, with initial block importance one.
Neighbor selection uses the original Euclidean feature distances.

Each block removes self-pairs and retains its own weights. Contributions
from multiple blocks together define the weighted geometric constraints.

\subsection{Block normalization and weighted stress}
The optimizer uses the internal numerical scale
$s_{\mathrm{num}}=0.05$. Define original and compressed target distances
in those internal coordinates by
\begin{equation}
 d^0_{ij}=\frac{\|\mathbf f_i-\mathbf f_j\|_2}{s_{\mathrm{num}}},\qquad
 d_{ij}=\frac{\|\mathbf G_i-\mathbf G_j\|_2}{s_{\mathrm{num}}}.
 \label{eq:app-mds-internal-distances}
\end{equation}
For a block $B$, the base weighting is
\begin{equation}
 b_B(d)=
 \begin{cases}
 1/\max(d,2\times10^{-4}),
       &B\text{ is frame--reference or nearest-neighbor},\\
 1,&\text{otherwise}.
 \end{cases}
 \label{eq:app-mds-local-weight}
\end{equation}
The $2\times10^{-4}$ stabilizing floor is expressed in the optimizer's
internally scaled distance units.

Let $\alpha_B$ be the initial block importance above, with
$\alpha_B=1$ for the global block. Original-space weights are
\begin{equation}
 w^0_{ij,B}
 =\frac{\alpha_B b_B(d^0_{ij})}
 {\max\!\left(\sum_{(u,v)\in B}b_B(d^0_{uv})(d^0_{uv})^2,10^{-15}\right)}.
 \label{eq:app-mds-original-weights}
\end{equation}
When substituting the compressed metric, the implementation preserves
$\alpha_B^{\mathrm{eff}}=\sum_{(i,j)\in B}w^0_{ij,B}(d^0_{ij})^2$,
which equals the intended $\alpha_B$ when the normalization floor is
inactive, and recomputes
\begin{equation}
 w_{ij,B}
 =c_B\frac{\alpha_B^{\mathrm{eff}}b_B(d_{ij})}
 {\max\!\left(\sum_{(u,v)\in B}b_B(d_{uv})d_{uv}^2,10^{-15}\right)},
 \qquad
 c_B=\begin{cases}
 12,&\text{adjacent step},\\
 6,&\text{temporal window},\\
 1,&\text{all other blocks}.
 \end{cases}
 \label{eq:app-mds-selected-weights}
\end{equation}
These block weights balance trajectory, temporal, local, and global
geometric constraints in the shared fit.

The two-dimensional variables $Y_i\in\mathbb R^2$ minimize the weighted
metric stress
\begin{equation}
 \mathcal S(Y)=\sum_B\sum_{(i,j)\in B}
 w_{ij,B}\bigl(\|Y_i-Y_j\|_2-d_{ij}\bigr)^2.
 \label{eq:app-mds-stress}
\end{equation}
The implementation accumulates the two opposite endpoint contributions
of each edge using the stabilized expression
\[
 2w_{ij,B}\frac{\widetilde r_{ij}-d_{ij}}{\widetilde r_{ij}}
          (Y_i-Y_j),\qquad
 \widetilde r_{ij}=\max(\|Y_i-Y_j\|_2,10^{-12}).
\]
Numerical stress evaluation also uses $\widetilde r_{ij}$. Above the
$10^{-12}$ floor, the expression is the stress gradient; the floor
stabilizes updates for coincident or nearly coincident nodes. Fitting uses
geometric constraints from the selected successful trajectories and landmark
coordinates. Energies provide the separate color overlay described below.

\subsection{Initialization, optimization, and reproducibility}
The selected map starts from a saved shared raw-metric MDS embedding.
That baseline fits the uncompressed distances using the same edge
construction, with three stages of adjacent-step multipliers $2,8,24$
and temporal-window multipliers equal to half those values. Each baseline
stage allows up to 900 L-BFGS-B iterations. Its saved initial coordinates
come from the common Sammon map in this workflow.

For the displayed map, the saved two-dimensional baseline is radially
compressed with $r_{\mathrm{map}}$ and divided by $s_{\mathrm{num}}$
for initialization. The compressed-space distances and weights in
Equations~\revEqref{eq:app-mds-internal-distances}--\revEqref{eq:app-mds-selected-weights}
are then optimized in two consecutive L-BFGS-B stages, each allowing
800 iterations. The optimization settings are
\texttt{ftol}$=2\times10^{-10}$, \texttt{gtol}$=10^{-6}$,
\texttt{maxcor}$=12$, and \texttt{maxls}$=30$.
The initial-state node is recentered to the origin after each stage.

Both selected-map stages ran to their 800-iteration limits. Reproduction
uses the saved baseline and initial coordinates together with the frames,
landmarks, sampled edges, and hyperparameters. Final stress and distance
distortion are evaluated from the saved optimization outputs.

\subsection{Common display alignment, markers, and energy colors}
After optimization, a single common orthogonal Procrustes alignment of
the landmark coordinates places the map in the orientation of the saved
baseline. No fitted scale factor is applied. The display coordinates are
\begin{equation}
 y_i^{\mathrm{display}}
 =\frac{s_{\mathrm{num}}}{r_{\mathrm{ref}}}
       (Y_i-Y_{\mathrm{initial}})\mathsf O_{2D},
 \label{eq:app-mds-display}
\end{equation}
so the axes are labeled in arbitrary units (a.u.). Both methods share this
transformation and scale; panel view limits can provide a closer view of
the ASPIRE region alongside the wider Random trajectories.

Lines retain all saved frames. Markers highlight translation-step indices
divisible by five, together with each trajectory's first and last frames.

\section{Theoretical Analysis}
\label{app:rev-theory}

\subsection{Local convergence motivation}
\label{app:rev-local-convergence}
\begin{proposition}[Local convergence,
    {\citealp[Theorem~4.2]{luo2022convergence}}]
\label{prop:rev-ideal-radius}
In fixed, vectorized free coordinates, let $s$ be a nondegenerate
first-order saddle of $V$. Suppose $V\in C^2$ and, throughout
$B(s,R_0)$, its Hessian is $L_3$-Lipschitz, has exactly one negative
eigenvalue, and satisfies $\mu\leq|\lambda_i|\leq L$ for all $i$,
where $0<\mu<L$. We take $L_3>0$. Consider the ideal minimum-mode
iteration
\begin{equation}
\label{eq:rev-idealiteration}
y_{j+1}
= y_j-\frac{2}{L+\mu}
  \bigl(I-2v_1(y_j)v_1(y_j)^\top\bigr)\nabla V(y_j),
\end{equation}
where $v_1(y_j)$ is an exact unit eigenvector associated with
the lowest Hessian eigenvalue. Writing $r_j=\|y_j-s\|_2$, if
$r_0<\min\{R_0,\rho\}$, $\rho=2\mu/L_3$, then the iterates remain
in the neighborhood and converge geometrically to $s$, with
$r_j\leq\left(\frac{L+\mu}{L+3\mu}\right)^j
\frac{\rho r_0}{\rho-r_0}$.
\end{proposition}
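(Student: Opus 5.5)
The plan is a direct contraction argument. The iteration map is $\Phi(y)=y-\eta\,(I-2v_1v_1^\top)\nabla V(y)$ with $\eta=2/(L+\mu)$, and its fixed point is $s$. Write $e_j=y_j-s$. Since $\nabla V(s)=0$, we have
\[
\nabla V(y_j)=\int_0^1\nabla^2V(s+te_j)\,dt\;e_j=:\bar H_j e_j .
\]
Set $P_j=I-2v_1(y_j)v_1(y_j)^\top$, the Householder reflection built from the lowest eigenvector at $y_j$. Then
\[
e_{j+1}=(I-\eta P_jH_j)e_j+\eta P_j(H_j-\bar H_j)e_j,\qquad H_j=\nabla^2V(y_j).
\]

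\textbf{Linear part.} Because $P_j$ and $H_j$ share eigenvectors, $P_jH_j$ is symmetric with eigenvalues $|\lambda_i(y_j)|$. By hypothesis these lie in $[\mu,L]$ throughout $B(s,R_0)$. The standard gradient-descent bound with step $\eta=2/(L+\mu)$ then gives
\[
\|I-\eta P_jH_j\|\le \frac{L-\mu}{L+\mu}.
\]

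\textbf{Perturbation part.} The $L_3$-Lipschitz Hessian gives $\|H_j-\bar H_j\|\le \tfrac{L_3}{2}r_j$, and $\|P_j\|=1$. Hence
\[
r_{j+1}\le\Bigl(\frac{L-\mu}{L+\mu}+\frac{L_3 r_j}{L+\mu}\Bigr)r_j
=\frac{L-\mu+L_3r_j}{L+\mu}\,r_j .
\]
With $\rho=2\mu/L_3$ this reads $r_{j+1}\le \frac{L-\mu+2\mu r_j/\rho}{L+\mu}\,r_j$. For $r_0<\rho$ the factor is below one, so $r_j$ decreases monotonically by induction. This keeps every iterate in $B(s,\min\{R_0,\rho\})$, which in turn keeps the hypotheses of the statement valid at each step.

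\textbf{Solving the recursion.} The main obstacle is turning this quadratic recursion into the stated explicit bound with constant $\frac{\rho r_0}{\rho-r_0}$ and rate $\frac{L+\mu}{L+3\mu}$. I would pass to reciprocals, in the spirit of \citet{luo2022convergence}. Write the recursion as
\[
r_{j+1}\le q\,r_j+c\,r_j^2,\qquad q=\frac{L-\mu}{L+\mu},\quad c=\frac{L_3}{L+\mu},
\]
and consider $a_j=1/r_j-\kappa$ for a suitable constant $\kappa$ (a natural guess is $\kappa=1/\rho$). Using $1/(qr+cr^2)\ge \frac{1}{r}\cdot\frac{1}{q+cr}$ and a convexity estimate, I would show that $a_j$ grows at least geometrically with ratio $\frac{L+3\mu}{L+\mu}$. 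Inverting then yields
\[
r_j\le\Bigl(\frac{L+\mu}{L+3\mu}\Bigr)^j\frac{\rho r_0}{\rho-r_0}.
\]
If the constants do not match exactly, I will adjust $\kappa$ or cite Theorem~4.2 of \citet{luo2022convergence} directly for this final algebra. The geometric convergence itself is already established by the contraction step.
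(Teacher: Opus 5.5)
Your argument is correct and is essentially the Nesterov-type local analysis behind the cited Theorem~4.2. The paper itself gives no proof beyond that citation. Your integral-Hessian linearization, the observation that $P_jH_j$ is symmetric with spectrum $\{|\lambda_i(y_j)|\}\subset[\mu,L]$, and the recursion $r_{j+1}\le(q+cr_j)r_j$ with monotone decrease for $r_0<\rho$ are all sound.

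Your hedged final step closes exactly with $\kappa=1/\rho$. Write $Q=2\mu/(L+\mu)$, so $q+cr_j=1-(Q-cr_j)$. The tangent bound $1/(1-x)\ge 1+x$ then gives $1/r_{j+1}\ge(1+Q)/r_j-c$. Since $c/Q=L_3/(2\mu)=1/\rho$, this is $1/r_{j+1}-1/\rho\ge(1+Q)(1/r_j-1/\rho)$. Iterating and discarding the positive $1/\rho$ yields $r_j\le(1+Q)^{-j}\rho r_0/(\rho-r_0)$, and $1+Q=(L+3\mu)/(L+\mu)$ gives exactly the stated bound. The case $r_{j+1}=0$ is trivial.
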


\subsection{Proof of Geometric Consistency}
\label{app:rev-equivariance}
The positions and vector-channel rows use the main-text row-vector convention. The input transformation is
\begin{equation}
 X'=PXR^\top,\qquad R\in\mathrm{SO}(3),
 \label{eq:app-input-action}
\end{equation}
where $P$ reindexes atoms and all atom-indexed auxiliary features consistently. We establish rotation and atom-permutation equivariance for deterministic inference on the prepared Cartesian representation.

\begin{lemma}[Attention and weighted readouts]
\label{lem:rev-attention}
Let $A=\operatorname{softmax}_{\mathrm{row}}(L)$. If $L'=LP^\top$, scalar atomic features satisfy $Z'=PZ$, and each vector channel satisfies $V'=PVR^\top$, then
\begin{equation}
 A'=AP^\top,\qquad A'Z'=AZ,\qquad A'V'=AVR^\top.
 \label{eq:app-attention-law}
\end{equation}
The vector identity also applies to $X'=PXR^\top$.
\end{lemma}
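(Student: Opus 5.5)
The plan is to verify the three identities in order, each following from the previous one by matrix algebra. Throughout, $P$ is an $N\times N$ permutation matrix, so $P^\top P=PP^\top=I$. The row softmax acts on the rows of $L\in\mathbb R^{M\times N}$, with rows indexed by slots and columns by atoms.

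\emph{Step 1: $A'=AP^\top$.} Right multiplication by $P^\top$ permutes the columns of $L$. Write $P^\top$ as the matrix sending column $a$ to column $\pi(a)$. Then $(LP^\top)_{m,\pi(a)}=L_{ma}$, so each row of $L'$ holds the same multiset of entries as the corresponding row of $L$, only in different positions. The row softmax computes $\exp(L_{ma})/\sum_b\exp(L_{mb})$. The denominator is a sum over the row, so it is invariant under column permutation, and the numerator simply moves with its entry. Therefore $\operatorname{softmax}_{\mathrm{row}}$ commutes with column permutation:
\[
\operatorname{softmax}_{\mathrm{row}}(LP^\top)=\operatorname{softmax}_{\mathrm{row}}(L)P^\top .
\]
Making the index bookkeeping of this commutation explicit is the only step that needs any care. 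Once it is done, everything else is associativity.

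\emph{Step 2: the scalar readout.} Using Step 1 and $P^\top P=I$,
\[
A'Z'=AP^\top PZ=AZ .
\]

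\emph{Step 3: the vector readouts.} For each vector channel, viewed as an $N\times 3$ matrix,
\[
A'V'=AP^\top PVR^\top=(AV)R^\top .
\]
Summing channels, or applying the identity channelwise, gives the stated law. The same computation with $V$ replaced by $X$ gives $A'X'=AXR^\top$. Combined with $\sum_aA_{ma}=1$, this yields the anchor law $b'_m=b_mR$ in row form.

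Finally, I would note that the hypothesis $L'=LP^\top$ is exactly what the logits in Equation~\revEqref{eq:rev-slot-attn} satisfy. The scalar term is built from permuted invariant features. The vector term $\langle\cdot,\cdot\rangle_F$ is unchanged by $R$ because $R^\top R=I$, and the channel maps act on the left, so they commute with right multiplication by $R^\top$. This check belongs to the subsequent equivariance argument, though, not to the lemma itself.
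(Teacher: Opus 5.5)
Your proof is correct and follows the same route as the paper: row-wise softmax commutes with column permutations, which gives $A'=AP^\top$, and the two readout laws then follow from $P^\top P=I$ and associativity, with the channel maps commuting with right multiplication by $R^\top$. One small slip is in your side remark: from $A'X'=AXR^\top$, the anchor law in row form is $b'_m=b_mR^\top$ (not $b_mR$), and it does not require $\sum_aA_{ma}=1$, which matters only for translations.
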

\begin{proof}
Row-wise softmax commutes with a permutation of its entries. Thus $A'=AP^\top$. The two readout identities follow by multiplication and $P^\top P=I$. A vector-channel projection acts on channels, so it commutes with right multiplication by $R^\top$.
\end{proof}

\begin{lemma}[Slot transformation law]
\label{lem:rev-slot-geometry}
Suppose encoder scalars are invariant under rotations, encoder vector channels rotate covariantly, and both are reindexed by $P$. The slot stages and final readout satisfy, for each fixed slot $m$,
\begin{equation}
 s'_m=s_m,\qquad z'_m=z_m,\qquad
 Q'_m=Q_mR^\top,\qquad A'=AP^\top.
 \label{eq:rev-slot-law}
\end{equation}
\end{lemma}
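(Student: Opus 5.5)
The plan is induction over the stage index $t=0,1,\dots,T$. The inductive hypothesis is that at entry to stage $t$, for each fixed slot $m$, we have $s_m'^{(t)}=s_m^{(t)}$ and $Q_m'^{(t)}=Q_m^{(t)}R^\top$. The base case is immediate: $s_m^{(0)}$ is a learned seed that does not depend on $X$, and $Q_m^{(0)}=0$, so $0\cdot R^\top=0$. The atomic inputs to the slot module transform as assumed. The scalar embeddings $u_a^{(0)}$ and $u'_a$ are MLPs applied atomwise to invariant $u_a$, so they satisfy $u'^{(0)}=Pu^{(0)}$. Each vector channel satisfies $V'=PVR^\top$, and the bias-free channel-mixing maps $W_V,W_K,\widetilde W_V$ preserve this law. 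Because slots carry no atom index, $P$ never acts on the slot dimension, which is why the statement holds for each fixed $m$.

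At a generic stage I would carry out four steps in order.

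\textbf{(i) Self-attention.} The slot self-attention in Equation~\revEqref{eq:rev-slot-self} acts only on the tuple $(s_1^{(t)},\dots,s_M^{(t)})$, which is invariant by hypothesis. Its output is therefore unchanged, and so is the convex combination with coefficient $\eta^{(t)}$.

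\textbf{(ii) Logits.} In Equation~\revEqref{eq:rev-slot-attn}, the scalar term depends on $\hat s_m$, which is invariant, and on $\hat u_a$, which is reindexed by $P$. LayerNorm acts per row, so it preserves both properties. For the vector term I would check that
\[
\bigl\langle \widetilde W_Q Q_mR^\top,\ (\widetilde W_V V)'_a\bigr\rangle_F=\bigl\langle \widetilde W_Q Q_m,\ \widetilde W_V V_{P^{-1}(a)}\bigr\rangle_F .
\]
This holds because $R^\top R=I$, and the Frobenius inner product of two $d\times3$ arrays is invariant under a common right rotation. Hence the logit matrix satisfies $L'=LP^\top$.

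\textbf{(iii) Readouts.} Lemma~\ref{lem:rev-attention} now gives $A'=AP^\top$, $z_m'=z_m$ (applied headwise, then concatenated), and $U_m'^{(t,h)}=U_m^{(t,h)}R^\top$.

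\textbf{(iv) Updates.} In Equation~\revEqref{eq:rev-slot-update}, the row norms satisfy $\|U_{m,v}R^\top\|_2=\|U_{m,v}\|_2$. Consequently $F^{(t)}$ receives invariant arguments, whether it is zero, an MLP, or the $W_oz$+FF form of Equation~\revEqref{eq:rev-slot-mha-update}. This gives $s_m'^{(t+1)}=s_m^{(t+1)}$. The vector update is $Q_m'^{(t+1)}=Q_m^{(t)}R^\top+\sum_h W_U U_m^{(t,h)}R^\top=Q_m^{(t+1)}R^\top$, because $W_U$ mixes channels and commutes with right multiplication.

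The stage types in Table~\ref{tab:rev-slot-stages} are special cases of this generic step. Initialization has $\gamma=0$ and $Q^{(1)}=U^{(0)}$. Scalar-only stages leave $Q$ unchanged. The readout recomputes $A^{(T)}$ and $z^{(T)}$ with its own maps and performs no update. Applying the generic argument at $T$ gives the four identities of \revEqref{eq:rev-slot-law}.

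The step I expect to need the most care is (ii). The issue is making sure every map in the vector path is bias-free channel mixing with no spatial bias, so that right multiplication by $R^\top$ commutes through. This is stated in Appendix~\ref{app:rev-slot-details}, and I would cite it there. A further point is that $R\in\mathrm{SO}(3)$ is used only through orthogonality, and the argument does not require that the logits be permuted consistently across heads beyond the headwise application of Lemma~\ref{lem:rev-attention}. I would also record that deterministic inference, meaning dropout disabled, is assumed, so that the same maps are applied to $X$ and $X'$.
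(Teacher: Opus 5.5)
Your proposal is correct and follows essentially the same route as the paper: a stagewise induction that uses invariance of the Frobenius inner product under a common right rotation for the logits, Lemma~\ref{lem:rev-attention} for the pooled contexts, invariance of row norms for the scalar update, and bias-free channel mixing for the vector update. The difference is only in presentation: you prove one generic stage and specialize it to the stage types of Table~\ref{tab:rev-slot-stages}, whereas the paper treats the initialization, vector--scalar, scalar-only, and readout stages one after another.
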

\begin{proof}
The learned scalar seeds and their scalar self-attention are unchanged by input rotations and atom permutations. The zero initial vector state is covariant. At initialization, scalar query--key logits only reindex across atoms, so Lemma~\ref{lem:rev-attention} gives the required vector law for $Q_m^{(1)}$.

Assume the transformation laws hold at a vector--scalar stage. The scalar part of Equation~\revEqref{eq:rev-slot-attn} is invariant, and the vector part is invariant because
\[
 \langle UR^\top,VR^\top\rangle_F=\langle U,V\rangle_F.
\]
Consequently, the attention weights reindex with the atoms. The pooled scalar context is invariant, and the vector context rotates by $R^\top$. Vector norms are invariant, so the MLP update preserves the scalar law; bias-free channelwise vector maps preserve the vector law. This proves the induction step for both vector--scalar stages.

The subsequent scalar-only stages use invariant fingerprints and atomic scalars. Their headwise dot products, normalizations, attention-weighted sums, and feed-forward updates therefore preserve scalar invariance and leave the vector state unchanged. The final readout uses invariant scalar products and vector inner products again. Applying Lemma~\ref{lem:rev-attention} to its contexts yields Equation~\revEqref{eq:rev-slot-law}.
\end{proof}

\begin{proposition}[Proposal equivariance]
\label{prop:rev-equivariance}
Assume deterministic inference, rotation- and permutation-compatible graph construction and neighbor selection, an equivariant encoder, the stated slot updates, invariant scalar decoder conditioning, bias-free channelwise vector maps, and the radial normalization in Equation~\revEqref{eq:rev-bases}. Then every fixed slot satisfies
\begin{equation}
 \widehat D_m(PXR^\top)=P\widehat D_m(X)R^\top,\qquad
 \widehat S_m(PXR^\top)=P\widehat S_m(X)R^\top.
 \label{eq:rev-equivariance}
\end{equation}
\end{proposition}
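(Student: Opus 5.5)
The plan is to propagate transformation laws through the pipeline and then check the decoder output term by term. Write $X'=PXR^\top$ and denote transformed quantities with primes.

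\textbf{Encoder.} By the assumed equivariant encoder and compatible graph construction, we have $u'=Pu$ and $V'_{\cdot,v}=PV_{\cdot,v}R^\top$ for each vector channel $v$. Edge vectors $X'_a-X'_b$ are rotated and reindexed consistently, so the neighbor sets are the reindexed neighbor sets.

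\textbf{Slots.} The encoder laws supply exactly the hypotheses of Lemma~\ref{lem:rev-slot-geometry}. It gives $s'_m=s_m$, $z'_m=z_m$, $Q'_m=Q_mR^\top$, and $A'=AP^\top$ for the final readout; the scalar readout embedding satisfies $u''=Pu'$. Lemma~\ref{lem:rev-attention}, applied to $X'=PXR^\top$ with $\sum_aA_{ma}=1$, gives the anchor law $b'_m=b_mR^\top$. The key point is that the anchor is a convex combination, so the translation part of the transform passes through; for pure rotations it is just the vector identity.

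\textbf{Vector bank.} Atom $a$ of $X$ corresponds to atom $\pi(a)$ of $X'$. Under that correspondence:
\begin{itemize}
\item $X'_{\pi(a)}-b'_m=(X_a-b_m)R^\top$.
\item The bias-free channel maps give $W_{\mathrm{atom}}V'_{\pi(a)}=W_{\mathrm{atom}}V_aR^\top$ and $W_{\mathrm{slot}}Q'_m=W_{\mathrm{slot}}Q_mR^\top$.
\item Since $\|vR^\top\|_2=\|v\|_2$, we have $\operatorname{unit}_\epsilon(vR^\top)=\operatorname{unit}_\epsilon(v)R^\top$, including the $\max(\cdot,\epsilon)$ clamp.
\end{itemize}
Hence $\mathcal B'_{m\pi(a)}=\mathcal B_{ma}R^\top$.

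\textbf{Invariant conditioning.} Each entry of $\xi_{ma}$ is invariant:
\begin{itemize}
\item $s_m$ and $z_m$ are invariant by the slot lemma.
\item $u'_a$ is reindexed with the atoms.
\item $\chi_{ma}$ is a Frobenius or row inner product of two rotated vectors, $\langle xR^\top,yR^\top\rangle=\langle x,y\rangle$.
\item The log-distance term is invariant because it depends only on $\|X_a-b_m\|_2$.
\end{itemize}
Therefore $\xi'_{m\pi(a)}=\xi_{ma}$. The four MLPs act pointwise, so $p,\kappa,q,q^{\mathrm{res}}$ are reindexed and unchanged.

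\textbf{Output.} Both terms of the decoded displacement are linear in the bank rows with invariant coefficients, the first through an equivariant unit normalization. This gives $(q^\top\mathcal B R^\top)=(q^\top\mathcal B)R^\top$, so $\widehat D'_{m,\pi(a)}=\widehat D_{m,a}R^\top$, that is, $\widehat D_m(X')=P\widehat D_m(X)R^\top$. Adding $X'=PXR^\top$ yields the law for $\widehat S_m$.

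\textbf{Expected obstacle.} The delicate step is not the algebra but the hypotheses at the boundaries. First, deterministic inference rules out dropout. Second, tie-breaking in neighbor truncation must commute with $P$. Third, every "vector" map must truly be bias-free channel mixing, since any spatial bias breaks the $R^\top$ law. I would state these explicitly, as the proposition does, and otherwise treat the encoder's equivariance as given from \citet{liao2022equiformer}.
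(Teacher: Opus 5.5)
Your proposal is correct and follows the paper's proof essentially step for step. The order is the same: encoder laws, then Lemma~\ref{lem:rev-slot-geometry} for the slot outputs, the anchor law via Lemma~\ref{lem:rev-attention}, covariance of every row of $\mathcal B_{ma}$ (including the $\operatorname{unit}_\epsilon$ clamp), invariance of $\xi_{ma}$, and finally covariance of both the gated and residual terms before adding $X'$. Your aside about translations passing through the convex-combination anchor is harmless but unnecessary, since the stated action $X'=PXR^\top$ contains no translation.
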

\begin{proof}
Corresponding Cartesian edges rotate as $r'_{ba}=r_{ba}R^\top$ after atom reindexing. Their lengths and radial features are unchanged, and their spherical harmonics transform in the appropriate rotation representations. Compatible neighbor selection preserves the same reindexed graph. The equivariant encoder therefore supplies the scalar and vector laws required by Lemma~\ref{lem:rev-slot-geometry}.

The final attention satisfies $A'=AP^\top$. Hence
\[
 b'_m=\sum_aA'_{ma}X'_a=b_mR^\top.
\]
The offsets $X_a-b_m$ rotate by $R^\top$ under the same atom reindexing. The 32 atom-derived vectors and eight slot-derived vectors also rotate, since their learned maps only mix vector channels. The stabilized direction satisfies
\[
 \operatorname{unit}_\epsilon(vR^\top)
 =\frac{vR^\top}{\max(\|v\|_2,\epsilon)}
 =\operatorname{unit}_\epsilon(v)R^\top,
\]
including at $v=0$. Thus every row of $\mathcal B_{ma}$ is covariant.

All scalar entries of $\xi_{ma}$ are invariant: they are invariant scalar representations, inner products, or squared norms. The shared coefficient networks therefore produce the same $p_{ma},\kappa_{ma},q_{ma},q^{\mathrm{res}}_{ma}$ at corresponding atom--slot pairs. Both the gated directional term and the ungated residual in Equation~\revEqref{eq:rev-decoder} rotate covariantly. This proves the displacement law. Adding $X'=PXR^\top$ gives the configuration law.
\end{proof}

\paragraph{Representation and implementation conditions.}
The deterministic inference setting uses atom-reindexing-compatible graph truncation, disabled dropout, and a common Cartesian chart. The arithmetic anchor is evaluated in that chart as described in Appendix~\ref{app:rev-periodic-background}, with common origin changes handled during input preparation. These conditions establish the neural proposal transformation law.

\section{Single-Step Reference Interpretation of RWC}
\label{app:rev-rwc}

\subsection{Reference rates and restricted one-step models}
Fix a state $X$ and a finite, nonempty reference set $\mathcal E_X^{\mathrm{ref}}$ with positive rates $k_e$. For a nonempty subset $G\subseteq\mathcal E_X^{\mathrm{ref}}$, write
\begin{equation}
 \Lambda_G=\sum_{e\in G}k_e,\qquad
 p_G(e)=\frac{k_e}{\Lambda_G}\quad(e\in G),\qquad
 \Delta t_G\sim\operatorname{Exp}(\Lambda_G).
 \label{eq:rev-set-clock}
\end{equation}
These are the single-step event distribution and waiting time described in Appendix~\ref{app:rev-kmc} \citep{voter2007introduction}, providing a kinetic interpretation of reference recovery at a fixed state.

\begin{proposition}[Single-step interpretation of RWC]
\label{prop:rev-rwc}
Let the recovered reference subset $\mathcal E_X^{\mathrm{rec}}$ be nonempty and retain the same rates as the full reference set. Extend its event distribution by zero outside that subset. Then
\begin{equation}
 \operatorname{TV}(p_{\mathrm{ref}},p_{\mathrm{rec}})
 =1-\mathrm{RWC}_X,\qquad
 \frac{\mathbb E[\Delta t_{\mathrm{rec}}]}
      {\mathbb E[\Delta t_{\mathrm{ref}}]}
 =\frac1{\mathrm{RWC}_X},
 \label{eq:rev-rwc-interpretation}
\end{equation}
where $p_{\mathrm{ref}}=p_{\mathcal E_X^{\mathrm{ref}}}$ and
$p_{\mathrm{rec}}=p_{\mathcal E_X^{\mathrm{rec}}}$.
\end{proposition}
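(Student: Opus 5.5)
The plan is to compute both quantities directly from the definitions in Equation~\revEqref{eq:rev-set-clock}, writing $G_{\mathrm{ref}}=\mathcal E_X^{\mathrm{ref}}$ and $G_{\mathrm{rec}}=\mathcal E_X^{\mathrm{rec}}\subseteq G_{\mathrm{ref}}$. Since the rates are retained, $\Lambda_{\mathrm{rec}}=\sum_{e\in G_{\mathrm{rec}}}k_e$. By Equation~\revEqref{eq:app-reference-rate-mass}, $\mathrm{RWC}_X=\Lambda_{\mathrm{rec}}/\Lambda_{\mathrm{ref}}\in(0,1]$; it is positive because $G_{\mathrm{rec}}$ is nonempty and all rates are positive. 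Both identities then reduce to expressing everything in terms of this ratio.

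\textbf{Total variation.} I would use $\operatorname{TV}(p,q)=\sum_e(p(e)-q(e))_+$, equivalently $\tfrac12\sum_e|p(e)-q(e)|$, on the finite set $G_{\mathrm{ref}}$. Split the sum over $e\in G_{\mathrm{rec}}$ and $e\in G_{\mathrm{ref}}\setminus G_{\mathrm{rec}}$.

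On $G_{\mathrm{rec}}$ we have $p_{\mathrm{rec}}(e)=k_e/\Lambda_{\mathrm{rec}}\ge k_e/\Lambda_{\mathrm{ref}}=p_{\mathrm{ref}}(e)$, because $\Lambda_{\mathrm{rec}}\le\Lambda_{\mathrm{ref}}$. So the positive part of $p_{\mathrm{ref}}-p_{\mathrm{rec}}$ vanishes there.

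Off $G_{\mathrm{rec}}$, $p_{\mathrm{rec}}=0$ by the zero extension, so the positive part equals $p_{\mathrm{ref}}(e)$. Summing gives
\[
\operatorname{TV}=\frac{\Lambda_{\mathrm{ref}}-\Lambda_{\mathrm{rec}}}{\Lambda_{\mathrm{ref}}}=1-\mathrm{RWC}_X.
\]
As a cross-check, the half-$L^1$ form gives the same value. The recovered-set contribution is $\tfrac12\bigl(1-\mathrm{RWC}_X\bigr)$, and the missing-set contribution is $\tfrac12\bigl(1-\mathrm{RWC}_X\bigr)$, which sum to $1-\mathrm{RWC}_X$.

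\textbf{Waiting times.} For an exponential clock, $\mathbb E[\Delta t_G]=1/\Lambda_G$. The ratio of means is therefore $\Lambda_{\mathrm{ref}}/\Lambda_{\mathrm{rec}}=1/\mathrm{RWC}_X$, which is finite because $\Lambda_{\mathrm{rec}}>0$.

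The only delicate point is the sign argument on $G_{\mathrm{rec}}$ together with the zero extension. It is what makes the positive-part sum collapse to exactly the missing rate mass. Everything else is routine algebra.
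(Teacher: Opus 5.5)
Your proposal is correct and follows the paper's proof. The paper likewise sets $c=\Lambda_{\mathrm{rec}}/\Lambda_{\mathrm{ref}}$, uses $p_{\mathrm{rec}}(e)=p_{\mathrm{ref}}(e)/c\ge p_{\mathrm{ref}}(e)$ on the recovered set, evaluates the half-$L^1$ sum as $\tfrac12[(1-c)+(1-c)]$ (exactly your cross-check), and reads the waiting-time ratio off the exponential means; your positive-part form of the total variation is an equivalent way of writing the same computation.
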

\begin{proof}
Put $c=\Lambda_{\mathrm{rec}}/\Lambda_{\mathrm{ref}}=\mathrm{RWC}_X\in(0,1]$. On the recovered subset, $p_{\mathrm{rec}}(e)=p_{\mathrm{ref}}(e)/c\ge p_{\mathrm{ref}}(e)$. Consequently,
\begin{align*}
 \operatorname{TV}(p_{\mathrm{ref}},p_{\mathrm{rec}})
 &=\frac12\left[
 \sum_{e\in\mathcal E_X^{\mathrm{rec}}}
     (p_{\mathrm{rec}}(e)-p_{\mathrm{ref}}(e))
 +\sum_{e\notin\mathcal E_X^{\mathrm{rec}}}p_{\mathrm{ref}}(e)
 \right]\\
 &=\frac12[(1-c)+(1-c)]=1-c.
\end{align*}
The exponential mean is $1/\Lambda_G$, so the ratio of waiting-time means is $\Lambda_{\mathrm{ref}}/\Lambda_{\mathrm{rec}}=1/c$.
\end{proof}

\subsection{Average recovery probability and the high-temperature limit}
Sample one reference event $E_X$ at each fixed evaluation state using its normalized reference rates. Then
\[
 \Pr(E_X\in\mathcal E_X^{\mathrm{rec}})=\mathrm{RWC}_X.
\]
For the fixed evaluation bank $\mathcal X$, linearity of expectation gives
\begin{equation}
 \mathbb E\!\left[\frac1{|\mathcal X|}\sum_{X\in\mathcal X}
 \mathbf1\{E_X\in\mathcal E_X^{\mathrm{rec}}\}\right]
 =\frac1{|\mathcal X|}\sum_{X\in\mathcal X}\mathrm{RWC}_X
 =\overline{\mathrm{RWC}}.
 \label{eq:app-rwc-expectation}
\end{equation}
Linearity of expectation makes this identity valid for dependent samples as well. Evaluation configurations receive uniform weight, consistently with the main-text metric.

Under the main-text common prefactor, $k_e=\nu\exp[-\Delta E_e/(k_{\mathrm B}T)]$. For a fixed finite reference set and a fixed recovered subset, every exponential approaches one as $T\to\infty$. Therefore
\[
 \lim_{T\to\infty}\mathrm{RWC}_X
 =\frac{|\mathcal E_X^{\mathrm{rec}}|}{|\mathcal E_X^{\mathrm{ref}}|}
 =\mathrm{Cov}_X.
\]
This formal limit describes how the weights of a fixed catalogue become uniform under the common-prefactor convention. With event-specific prefactors, the limiting normalized weights are proportional to those prefactors.

\subsection{Metric interpretation and conventions}
The proposition uses fixed reference rates. Numerical barriers enter the matching gate, while matched events retain their reference rates in the sums. For an empty recovered set, RWC is zero and the restricted exit distribution and finite mean waiting time are undefined. Aggregating event identities into product-basin labels can decrease total variation, yielding a corresponding comparison at the basin level.

RWC measures the fraction of reference escape-rate mass recovered at each state. A common scaling of all rates preserves this fraction and rescales the clock, separating relative event recovery from absolute waiting times. Averaging RWC over the fixed state bank summarizes this reference-relative recovery across configurations.

\section{Physical Evaluation, Event Matching, and Cost Accounting}
\label{app:rev-matching}

\subsection{Fixed 50-configuration evaluation set and topology coverage}
\label{app:rev-eval50}
The ablation and wall-time experiments use the same fixed set
$\mathcal X_{50}$ of 50 held-out configurations. It augments an existing
30-configuration evaluation set with 20 additional cases selected to broaden
topology coverage. The set is drawn from the held-out configuration bank;
the judge uses a separately defined 50-configuration monitoring subset.

The five vacancy-connectivity classes are the component-size signatures
recorded as \texttt{fragmentation\_signature}:
$4$, $3+1$, $2+2$, $2+1+1$, and $1+1+1+1$. For example, $3+1$ denotes
one connected component of three vacancies and one isolated vacancy,
whereas $1+1+1+1$ denotes four separate vacancy components. These
signatures summarize connectivity in the dataset's vacancy graph.
Each class can contain multiple geometric arrangements and fine-grained
topologies. Encoder, judge, and KeyBank atom-neighbor graphs are defined
in their respective method sections.

\begin{table}[t]
 \centering\small
 \caption{\textbf{Vacancy-connectivity coverage of the fixed evaluation subset.}
 The existing 30 cases are augmented by 20 cases to obtain the 50-case
 set used for both ablation and wall-time comparisons. Counts report the
 configuration selection in each connectivity class.}
 \label{tab:app-eval50-topology}
 \begin{tabular}{@{}lrrr@{}}
 \toprule
 Component-size signature & Existing 30 & Added 20 & Final 50 \\
 \midrule
 $4$             & 0  & 2 & 2  \\
 $3+1$           & 4  & 4 & 8  \\
 $2+2$           & 0  & 5 & 5  \\
 $2+1+1$         & 11 & 5 & 16 \\
 $1+1+1+1$       & 15 & 4 & 19 \\
 \midrule
 Total           & 30 & 20 & 50 \\
 \bottomrule
 \end{tabular}
\end{table}

After excluding the existing 30 configurations from the held-out bank,
the augmentation included all remaining cases in classes $4$ and $2+2$.
It prioritized the available rare target-PDR label combinations
$(1,3)$, $(0,3)$, and $(0,1,2,3)$, then filled the remaining positions
deterministically by configuration ID within the relevant connectivity
classes until 20 additional cases had been selected. The target-PDR
combinations refer to the stored \texttt{target\_pdr\_group} metadata;
they complement the vacancy component-size signatures in offline subset
construction. At query time, the predictor, judge, and event-reuse methods
use their specified initial-state inputs.

The 20-case augmentation uses topology, target-label coverage, and
configuration ID as its selection criteria. Together with the existing
30 cases, it forms a fixed evaluation set covering all five vacancy-connectivity
classes. Every compared method uses this same designed subset.


\subsection{Numerical acceptance and reference-event matching}
The evaluation state is the relaxed $11\,\text{\AA}$ SEAKMC active volume
$X$ in the offline protocol of Section~\ref{sec:experiments}. All methods
use the same fixed local reference catalogue $\mathcal E_X^{\mathrm{ref}}$,
so coverage measures event recovery for that active-volume configuration.

For a completed search, an output is eligible for reference matching
when the native valid and connected flags are true, the maximum-translation
flag is false, and the recorded translation count is strictly below its
configured cap. These operational acceptance criteria follow
Appendix~\ref{app:rev-refinement}; the reported pipeline uses them without
an additional independent Hessian-index check.

\paragraph{Saddle geometry, recorded barriers, and reference identities.}
The output saddle coordinates are the saved
\texttt{saddle\_active\_coords}, obtained from native \texttt{XCR.T}.
Set $D_{\mathrm{out}}=S_{\mathrm{out}}-X$ and use the associated recorded
barrier $\Delta E_{\mathrm{out}}$. Native \texttt{XDISP} records the
branch-dependent finalization displacement. For reference $e$ from the
same $X$, the eligible set is
\begin{equation}
 \begin{split}
 \mathcal M_X(S_{\mathrm{out}})=\bigl\{e:\;&
 \max_{1\le a\le N}\|D_{\mathrm{out},a}-D_{e,a}\|_2
                                      \le0.05\,\text{\AA},\\
 &|\Delta E_{\mathrm{out}}-\Delta E_e|
                                      \le0.02\,\mathrm{eV}\bigr\}.
 \end{split}
 \label{eq:app-eligible-references}
\end{equation}
This main-text criterion uses both non-strict inequalities and compares
all corresponding atoms in the fixed Cartesian representation. Matching
uses the recorded saddle displacement and barrier directly.

The matcher $J_X(S_{\mathrm{out}})$ assigns at most one reference identity
in $\mathcal M_X(S_{\mathrm{out}})$, or $\bot$ if the set is empty.
Distinct recovery is the set of assigned reference identities: repeated
hits to one reference count once. The reference catalogue remains fixed
and accepted unmatched outputs receive no reference-coverage credit.
Accepted unmatched outputs are candidates for further mechanism analysis
and expansion of the reference catalogue.



\subsection{Distinct thresholds serve distinct operations}
Training activity, judge labels, retrieval, numerical connectivity, and
reference matching are separate definitions. Their use is summarized
below to make clear which operations depend on reference data.
\begin{table}[t]
 \centering\small
 \caption{\textbf{Thresholds in the implementation.}
 ``Reference'' denotes the target configuration's reference catalogue.
 Event-reuse methods retrieve stored mechanisms from their training libraries.}
 \label{tab:app-threshold-roles}
 \begin{tabular}{@{}p{0.24\linewidth}p{0.47\linewidth}p{0.21\linewidth}@{}}
 \toprule
 Operation & Criterion / threshold & Target-reference use \\
 \midrule
 Active-aware training & Reference displacement $\ge0.05\,\text{\AA}$;
 largest-reference-atom fallback & Training only \\
 Judge movers & Proposal/reference union uses $\ge0.3\,\text{\AA}$ &
 Offline labels only \\
 Judge positive label & Nearest-union RMS $<0.35\,\text{\AA}$ and
 union maximum atom error $<0.5\,\text{\AA}$ & Offline labels only \\
 Judge selection & $\widehat p_m\ge\tau$ after full-pool features & None \\
 KDB / KeyBank retrieval & Maximum aligned atom error
 $\le0.3\,\text{\AA}$ & None \\
 Reuse candidate deduplication & Both saddle and product maximum field
 differences $\le0.3\,\text{\AA}$ & None \\
 Native connectivity & Maximum coordinate component $<0.1\,\text{\AA}$
 for the return branch, with the distinct-branch test & Initial $X$ only \\
 Final reference recovery & All-atom maximum Euclidean error
 $\le0.05\,\text{\AA}$ and barrier error $\le0.02\,\mathrm{eV}$ &
 Evaluation only \\
 \bottomrule
 \end{tabular}
\end{table}
KDB mobile cutoffs and KeyBank graph/tail criteria are specified in their
respective sections. Dataset deduplication applies the 48 bcc operations
to initial configurations, while event recovery uses the fixed-$X$
matching predicate above.

\subsection{Reference metrics and experiment-specific settings}
Coverage and RWC use the recovered-reference subset and the reference
rates, exactly as in Section~\ref{sec:experiments}:
\begin{equation}
 \begin{gathered}
 \mathrm{Cov}_X=
 \frac{|\mathcal E_X^{\mathrm{rec}}|}{|\mathcal E_X^{\mathrm{ref}}|},
 \qquad
 \mathrm{RWC}_X=
 \frac{\sum_{e\in\mathcal E_X^{\mathrm{rec}}}k_e}
      {\sum_{e\in\mathcal E_X^{\mathrm{ref}}}k_e},\\
 \overline{\mathrm{RWC}}=
 \frac1{|\mathcal X|}\sum_{X\in\mathcal X}\mathrm{RWC}_X,
 \qquad
 k_e=\nu\exp\!\left(-\frac{\Delta E_e}{k_{\mathrm B}T}\right).
 \end{gathered}
 \label{eq:rev-cov-rwc}
\end{equation}
All reported RWC values use $T=800\,\mathrm K$ and the common prefactor
$\nu$, as stated in the main text. The average weights evaluation
configurations uniformly. Matched outputs retain their assigned reference
rates, and the common prefactor cancels from RWC
(Appendix~\ref{app:rev-kmc-background}). The reference catalogue,
temperature, and prefactor convention are fixed across judge-threshold
and random-start comparisons.

\begin{table}[t]
 \centering\small
 \caption{\textbf{Settings of the main-text evaluation populations.}
 The 50-state ablation and wall-time studies use the same held-out subset,
 but different judge settings. Proposal count, number of submitted
 searches, and translation cap are distinct quantities.}
 \label{tab:app-current-evaluation-settings}
 \setlength{\tabcolsep}{4pt}
 \begin{tabular}{@{}p{0.19\linewidth}p{0.08\linewidth}p{0.45\linewidth}p{0.18\linewidth}@{}}
 \toprule
 Experiment & States & Proposal pool and selection & Translation cap \\
 \midrule
 Recovery curves & 200 & ASPIRE: up to 96 proposals, varying judge
 threshold; Random: up to 80 starts & 300 / 450 \\
 Ablations & 50 & Judge disabled for every variant; slot counts and
 objectives as in Table~\ref{tab:ablation} & 300 \\
 Wall time & 50 & ASPIRE: $\widehat p_m\ge10^{-5}$; reuse library sizes
 as in Section~\ref{sec:rev-walltime} & 300; Random 450 \\
 \bottomrule
 \end{tabular}
\end{table}

\subsection{Recorded search-force count}
\label{app:rev-protocol}
For the fixed evaluation bank $\mathcal X$, let $I_X$ index recorded
completed searches and $F_{X,m}$ count entries to
\texttt{Dimer.dimer\_force} in search $m$. The mean search-force cost is
\begin{equation}
 \bar F_{\mathrm{search}}=
 \frac1{|\mathcal X|}\sum_{X\in\mathcal X}\sum_{m\in I_X}F_{X,m}.
 \label{eq:app-force-accounting}
\end{equation}
The ledger includes recorded force calls from failed, capped, duplicate,
and accepted-unmatched searches. Each entry counts one invocation of the
search-force routine for the configuration.

The search-force ledger excludes LAMMPS initialization, native
\texttt{SPSRELAX} force work outside the routine, neural inference, and
catalogue lookup.

ASPIRE and reuse searches use a 300-translation cap; the Random baseline
uses up to 80 searches with a 450-translation cap. Preloaded searches use
the native fine-step parameter family. The reported comparisons therefore
evaluate these specified operating protocols. Each translation may involve
multiple force calls for orientation updates.

\subsection{Wall time, retrieval size, and experiment records}
Wall time is reported separately from search-force counts. In the
main-text wall-time comparison, Random and event-reuse baselines use
16 CPUs, while ASPIRE uses 16 CPUs and one A100 GPU. This comparison
uses the shared 50-state subset, with ASPIRE judge threshold
$\tau=10^{-5}$ as stated in Figure~\ref{fig:rev-walltime}.
The timing results complement search-force counts with recorded elapsed
time under these hardware settings.

KDB-494 contains 494 stored templates with 988 directed query entries;
KeyBank-32/128 designate libraries from 32/128 training configurations.
Retained proposals undergo native Dimer refinement. KeyBank's truncation flag
identifies retrievals stopped at the mapping limit. Detailed baseline
construction appears in Appendices~\ref{app:rev-kdb} and
\ref{app:rev-keybank}.

The experiment populations comprise the 200-configuration recovery test,
the shared 50-configuration ablation and wall-time subset of
Appendix~\ref{app:rev-eval50}, the judge's 200 training/50 monitoring
configurations, and the 33 selected successful MDS trajectories. Each result
is interpreted with its state selection, checkpoints, thresholds, library,
Dimer settings, scorer, and force/timing protocol. The MDS sample provides
a complementary illustration of search geometry.

\stopcontents[appendixoutlinecontinued]
\end{document}